\documentclass[11pt]{article}
\usepackage[english]{babel}
\usepackage[latin1]{inputenc}

\usepackage[colorinlistoftodos]{todonotes}

\usepackage[pdfpagelabels]{hyperref}
\hypersetup{colorlinks = true,
  linktoc = all, linkcolor = black, citecolor = black}

\usepackage[normalem]{ulem}
\usepackage[toc,page]{appendix}
\usepackage{dsfont}
\usepackage{bm}
\usepackage{authblk}

\usepackage{amsmath}
\usepackage{amsfonts}
\usepackage{upgreek}
\usepackage{mathtools} %
\mathtoolsset{showonlyrefs}

\usepackage{algorithm}
\usepackage{algorithmic}

\usepackage{pgfplots}
\usepackage{xcolor}
\usepackage{graphicx}
\usepackage{subfig}
\usepackage{amssymb}
\usepackage{tikz}
\usetikzlibrary{arrows,calc} \usetikzlibrary{decorations,arrows}
\usetikzlibrary{decorations.pathreplacing}

\usepackage{lipsum}
\usepackage{ifthen}
\usepackage{xifthen} 
\usepackage{xparse}
\usepackage{pgffor}
\usepackage{enumitem}

\numberwithin{equation}{section}

\usepackage[english]{babel} %
\usepackage[latin1]{inputenc}
\usepackage[T1]{fontenc}
\usepackage{amsthm}
\usepackage{array,colortbl}
\usepackage{amsmath,amsfonts,amssymb,bm}

\definecolor{jz}{rgb}{0,0.7,0}
\definecolor{autha}{rgb}{0,0.2,1}
\definecolor{authb}{rgb}{0.89,0,0.13}%
\definecolor{authc}{rgb}{0.333,0,0.666}
\definecolor{authd}{rgb}{1,0.5,0}%

\newtheorem{theorem}{Theorem}[section]

\newtheorem{definition}[theorem]{Definition}
\newtheorem{example}[theorem]{Example}

\newtheorem{lemma}[theorem]{Lemma}

\newtheorem{proposition}[theorem]{Proposition}
\newtheorem{remark}[theorem]{Remark}

\newtheorem{assumption}[theorem]{Assumption}

\newcommand{\dup}[2]{\left\langle #1, #2\right\rangle}

\newcommand{\norm}[2][]{\| #2 \|_{#1}}

\newcommand{\set}[2]{\{#1\,:\,#2\}} \newcommand{\setc}[2]{\left\{#1\,
    :\,#2\right\}}

\newcommand{\dd}{\mathrm{d}}

\newcommand{\dfn}{\vcentcolon=}

\DeclareMathOperator{\supp}{supp}

\DeclareMathOperator{\argmin}{argmin}

\DeclareMathOperator{\tr}{tr}

\newcommand{\eps}{\varepsilon}
\newcommand{\C}{\mathbb{C}}
\newcommand{\N}{\mathbb{N}}
\newcommand{\R}{\mathbb{R}}

\DeclareFontEncoding{FMS}{}{}
\DeclareFontSubstitution{FMS}{futm}{m}{n}
\DeclareFontEncoding{FMX}{}{}
\DeclareFontSubstitution{FMX}{futm}{m}{n}
\DeclareSymbolFont{fouriersymbols}{FMS}{futm}{m}{n}
\DeclareSymbolFont{fourierlargesymbols}{FMX}{futm}{m}{n}
\DeclareMathDelimiter{\VERT}{\mathord}{fouriersymbols}{152}{fourierlargesymbols}{147}

\providecommand{\bbE}{\mathbb{E}}

\providecommand{\bbN}{\mathbb{N}}

\providecommand{\bbP}{\mathbb{P}}

\providecommand{\bbR}{\mathbb{R}}

\providecommand{\CA}{\mathcal{A}}
\providecommand{\CB}{\mathcal{B}}

\providecommand{\cF}{\mathcal{F}}

\providecommand{\CN}{\mathcal{N}}

\providecommand{\Be}{{\boldsymbol{e}}}

\providecommand{\Bv}{{\boldsymbol{v}}}

\providecommand{\Bx}{{\boldsymbol{x}}}
\providecommand{\By}{{\boldsymbol{y}}}

\newcommand{\Bmu}        {{\boldsymbol{\mu}}}
\newcommand{\Bnu}        {{\boldsymbol{\nu}}}
\newcommand{\Bxi}        {{\boldsymbol{\xi}}}

\newcommand{\Brho}       {{\boldsymbol{\rho}}}

\newcommand{{\bomega}}   {{\boldsymbol{\omega}}}
\newcommand{\Bnul}       {{\boldsymbol{0}}}

\usepackage{fullpage}

\DeclareSymbolFont{bbold}{U}{bbold}{m}{n}
\DeclareSymbolFontAlphabet{\mathbbold}{bbold}

\newcommand{\calD}{{\mathcal{D}}}
\newcommand{\calE}{{\mathcal{E}}}

\newcommand{\calG}{{\mathcal{G}}}

\newcommand{\calV}{{\mathcal{V}}}

\newcommand{\calX}{{\mathcal{X}}}
\newcommand{\calY}{{\mathcal{Y}}}

\providecommand{\argmin}{\operatorname*{argmin}}

\newboolean{final}
\setboolean{final}{false}
\newcommand{\cQ}{\mathcal{Q}}
\newcommand{\cR}{\mathcal{R}}
\newcommand{\cX}{\mathcal{X}}
\newcommand{\cZ}{\mathcal{Z}}
\newcommand{\cY}{\mathcal{Y}}
\newcommand{\cA}{\mathcal{A}}
\newcommand{\cC}{\mathcal{C}}
\newcommand{\cD}{\mathcal{D}}
\newcommand{\cE}{\mathcal{E}}
\newcommand{\cB}{\mathcal{B}}

\newcommand{\cG}{\mathcal{G}}

\newcommand{\cL}{{\mathcal L}} %
\newcommand{\cK}{{\mathcal K}} 
\newcommand{\cM}{{\mathcal M}} %
\newcommand{\FNN}{{\mathrm F_{NN}}}
\newcommand{\Flayer}{{\mathrm F_{LayerNorm}}}
\newcommand{\Amult}{{\mathrm A_{multihead}}}
\newcommand{\Wmult}{{\mathrm W_{multihead}}}
\newcommand{\Tout}{{\mathrm T_{out}}}
\newcommand{\Tin}{{\mathrm T_{in}}}
\newcommand{\EL}{{\mathrm E_L}}
\newcommand{\Nsa}{{\mathrm N_{sa}}} %
\newcommand{\IE}{{\mathbb E}}

\newcommand{\bsnul}{{\boldsymbol{0}}}

\newcommand{\bmf}{{\boldsymbol{f}}}

\definecolor{dkgreen}{rgb}{0.0, 0.5, 0.0}
\newcommand{\Tsch}{{Cheby\v{s}ev}} %
\usepackage{cancel}

\DeclareSymbolFont{bbold}{U}{bbold}{m}{n}
\DeclareSymbolFontAlphabet{\mathbbold}{bbold}
\newcommand{\domain}{{\mathrm D}}

\newcommand{\be}{\begin{equation}}
\newcommand{\ee}{\end{equation}}

\newcommand{\covar}{\cC}
\newcommand{\coeffmap}{g}
\newcommand{\coeff}[1][]{\ifx&#1&\Bar{\Bx}\else\bar{x}_{#1}\fi}
\newcommand{\spatialvar}{\eta}
\newcommand{\spatialvarother}{\zeta}
\newcommand{\muV}{\mu_V} %
\newcommand{\muVperp}{\mu^v} %

\usepackage{tcolorbox}
\usepackage{fontawesome5}
\newtcolorbox{codebox}{
    colback  = gray!12,
    colframe = gray!12,
    boxrule  = 0pt,
    arc      = 4pt,
    left=6pt, right=6pt, top=5pt, bottom=5pt,
    before upper = {\faLaptopCode \enspace\bgroup},
    after upper  = {\egroup},
  }
\begin{document}
\title{Operator Surrogates}
\author[1,a]{Christoph Schwab}
\author[2,1,b]{Josephine Westermann}
\author[2,c]{Jakob Zech}

\affil[1]{\footnotesize
  SAM, ETH Z\"urich, R\"amistrasse 101, 8092 Z\"urich, Switzerland}
\affil[2]{\footnotesize IWR, Uni Heidelberg, INF 205, 69120
  Heidelberg, Germany}
\affil[a]{\footnotesize \href{mailto:schwab@math.ethz.ch}{schwab@math.ethz.ch}}
\affil[b]{\footnotesize \href{mailto:josephine.westermann@math.ethz.ch}{josephine.westermann@math.ethz.ch}}
\affil[c]{\footnotesize \href{mailto:jakob.zech@uni-heidelberg.de}{jakob.zech@uni-heidelberg.de}}
\date{July 15, 2026}

\maketitle
\begin{abstract}
\begin{sloppypar}
 This chapter provides a mathematical introduction to operator surrogates, which are finite-parametric maps %
{approximating} complex (typically nonlinear) operators between infinite-dimensional (function) spaces. 
\end{sloppypar}

{A typical approach to constructing operator surrogates is}
the encoder-approximator-decoder paradigm, 
whereby infinite-dimensional inputs are mapped to a finite-dimensional latent space, 
processed, and then mapped back to the infinite-dimensional output space.
We {discuss} %
various methods for creating efficient representation systems, 
including stable linear frameworks like Banach frames and data-driven techniques such as 
Principal Component Analysis (PCA), Active Subspaces, and nonlinear autoencoders. 
Subsequently, we survey key surrogate architectures, 
contrasting spectral methods, 
based on polynomial chaos expansions and related technologies, 
with more recent neural network approaches. 
{We further discuss architectures that do not rely on representation systems, such as }
Fourier Neural Operators and transformers.
{We elaborate on their relationship and how they relate}
to other neural network operator surrogates, 
particularly those obtained by so-called `algorithm-unrolling'.

Additionally, we provide an introduction to the theoretical analysis of approximation and generalization errors. 
We explain how underlying low-dimensional structures, provided for example by holomorphy, are essential to overcome the 
{so-called ``curse of dimensionality'': 
this refers to the fact that the complexity of numerical algorithms tends to increase exponentially 
in the problem dimensionality. 
Since operator surrogates deal with the approximation of mappings between infinite dimensional spaces, 
traditional approaches suitable for low-dimensional problems typically fail in these situations.}
We present convergence rate bounds for holomorphic operators 
that depend on the number of trainable parameters and the regularity of the input and output data. 
Throughout, we prioritize intuition over {detailed} technical proofs 
{and
provide precise pointers to the literature for detailed arguments.
}
\end{abstract}
  
\numberwithin{equation}{section}

\tableofcontents

\section{Introduction}\label{sec:Intr}
Partial (Integro) Differential Equations (P(I)DEs for short)
are the most widely employed 
formalism to express continuum models in engineering and in the sciences. 
P(I)DEs do not admit explicit solutions, as a rule,
and \emph{numerical approximations} need to be employed for predictive 
simulations. 
Accordingly, during the past decades, 
numerical solution technology for large classes of PDEs has matured,
with broad classes of solution methodologies
(e.g.\ finite difference, finite element, finite volume, spectral methods ...)
today being theoretically founded and practically deployed.
The focus of this development was on 
\emph{efficient numerical approximation of PDE solutions for one instance of input data}.
In the course of this development,
numerical PDE solvers have been optimized to 
in many cases near-linear complexity, 
in particular for elliptic and parabolic {P(I)DEs}.%

Increasingly, however,
the availability of \emph{ubiquitous observation and measurement data}
has generated new computational scenarios
which mandate \emph{efficient numerical many-query PDE solutions}.
Examples include
data assimilation \cite{AschDatAss},
uncertainty quantification \cite{UQSmith},
digital twins (see \cite{KWDigTwin24,AschDT,pash2025predictivedigitaltwinsquantified,liu2024deepneuraloperatorenabled} 
and references there),
efficient sampling of statistical solutions of evolution equations 
(e.g.\ \cite{ChaeFoias94,FMRT01,LMS16} and the references there) etc., 
all of which mandate \emph{efficient many query solvers}.

\emph{Parallel execution} 
of the mentioned existing single query 
solution methodologies has been developed in the past decades
but will, usually, not exploit common structure of input queries. 
This is well-known, for example, in 
Single-Level Monte-Carlo (SLMC) simulations in {Uncertainty Quantification (UQ);}
as is by now well-known, 
\emph{Multi-Level discretizations of parametric PDEs} 
such as Multi-Level Monte-Carlo (MLMC) 
simulation strategies offer significant error-vs-work gains
for PDEs with uncertain input data (e.g.\ \cite{LMS16}) 
albeit limited by the low MC convergence rate.

The efficient numerical realization of operators, 
that is generally nonlinear maps between infinite-dimensional function spaces, 
has become a key research topic at the interface of scientific computing (SC) and machine learning (ML). 
A core task in this field, referred to as Scientific Machine Learning {(SciML),}
is to provide efficient numerical surrogates $\hat{\calG}$ 
for the data-to-solution maps
\begin{equation}\label{eq:G}
  \calG: \calX \to \calY
\end{equation}
of operator equations, particularly PDEs; here $\cX$ and $\cY$ are
typically Banach or Hilbert spaces.  {\emph{Operator surrogates}
  are approximations of such mappings, where $\calG$ is truly
  infinite-dimensional; by this we mean that the problem cannot be
  reduced to that of approximating a function whose domain (or codomain) is finite
  dimensional. We will make this more precise in Section
  \ref{sec:Ck}.}  {Such surrogates can be building blocks in
  algorithms that involve many-query evaluations of PDE models.
  Examples are inverse problems and algorithms such as MCMC or
  filtering for data assimilation into PDE models.}

Mathematically, 
the goal is to construct parsimonious, 
finite-parametric approximations of these maps that come with rigorous a-priori performance guarantees. 
Ideally, these approximations should also achieve some form of 
optimality in terms of accuracy versus computational work for a given set of input data. 
This area has been significantly advanced by deep learning methodologies 
that assimilate data-solution pairs into operator surrogates. 
For recent overviews, we refer for example to the surveys 
\cite{kovachki2021neural,kovachki2024operator,MishraActa24} and the references therein.

With the currently emerging availability 
of hierarchic classes of finite-parametric maps between function spaces, 
which this chapter discusses, 
the area of \emph{data assimilation and data-driven {updates}
in hierarchies of operator surrogates} is emerging, 
in particular in connection with 
digital twins \cite{KWDigTwin24,AschDT,AschDatAss}.
This line of reasoning tacitly assumes that 
the operator equations which describe the physics of interest
are known in closed form.

{
\emph{Operator Learning} refers to computational methodologies that 
associate, for an input-output relation given e.g.\ as solution map of an operator equation 
in function spaces (hence, with infinite-dimensional domain and codomain), 
\emph{finite-parametric operator surrogates}
take the form of numerical approximations of such relations, ideally with performance guarantees.
In this process, operator surrogates from a given set of \emph{surrogate architectures} 
are numerically calibrated to the input-output relation, usually subject to data.
}

Within the context of AI and natural language processing, 
the process of operator learning and model updating itself 
can be transferred to computational methods \cite{POSEIDON2024,molinaro2025generativeaifastaccurate}. 
{
This is of particular interest for multi-modal data.
}
The mathematical formalization of operator-learning methodologies 
in this context is still in its infancy and beyond the scope of this (introductory)
chapter. 
However, the mathematical tools presented here will almost certainly 
be required in {some form towards} formalizing such algorithmic approaches.

Besides neural networks, 
operator surrogates can be built from other approximation architectures,
and we provide a first example for dynamical systems 
in Section~\ref{sec:IVP}.
Owing to the infinite dimension of the input and output function spaces, 
in the design and mathematical analysis of numerical operator approximation 
architectures, in particular in the establishment of convergence rate
bounds, the so-called \emph{curse of dimensionality} (CoD) 
takes a prominent role: 
approximation architectures which are successful in operator learning
must realize some mechanism of overcoming the CoD.
To date, several broad classes of approximation architectures have
emerged {with} good performance on practical operator learning tasks {and}
with mathematical foundations in various stages of completeness.
A rough taxonomy could be:
\begin{enumerate}
\item Spectral Operators based on \emph{(Generalized) Polynomial Chaos
    (g)PC Approximations} and related methods: 
  the CoD in the parametric approximations is 
  overcome by \emph{sparsity of gPC expansions} \cite{HSZ24} in holomorphic operators.
  {Operator classes of finite regularity (such as $k$-times 
       {Fr\'{e}chet-}differentiable maps {with finite $k$}) 
       generally suffer from the curse of dimensionality, see e.g.
       \cite{kovachki2024datacomplexityestimatesoperator} and the references there.}
\item \emph{Neural Operators, Neural Operator Networks}:
the CoD in the parametric approximations is overcome by
sparsity in the operator and strong expressivity
of deep neural networks, which allow {us} to detect such sparsity.
For input and outputs from (infinite-dimensional) 
function spaces, this can be realized in the 
so-called encoder-approximator-decoder framework
\cite{SLSMGEM22,kovachki2021neural,li2020fourier,brivio2023error,FRANCO2023129,Franco_2022}.
\item \emph{Random Feature Neural Networks, Extreme Learning Machines and Reservoirs}:
the CoD is overcome by probabilistic architectures and error measures, 
and mathematical analysis furnishes 
statistical convergence guarantees without the CoD, 
via suitable central limit theorems
\cite{lanthaler2023errorboundslearningvectorvalued,GononRndFNN4BS-23,LocEML}.

\item \emph{Algorithm Unrolling}: 
A further approach towards construction of operator networks 
can be termed ``algorithm unrolling''.
{Going back to \cite{monga2020algorithm},}
it consists in re-writing numerical algorithms
with mathematical convergence guarantees in terms of 
error vs. work
for numerical solution approximation 
of nonlinear (operator) equations 
in terms of deep neural network architectures. 
One execution of an iterative solver (e.g.\ \cite{MGNetJCP22})
respectively
of a constructive proof of existence of solutions (e.g.\ \cite{FSZ25_1134}),
will give rise to a map of the solution (approxim{atio}n) space into itself,
and 
will entail recurrent or so-called ``U-Net'' structures 
within the Operator Network architecture. 
{
\item \emph{Kernel Methods}: 
Kernel regression has in recent years been proposed for operator learning e.g. 
in 
\cite{BATLLE2024112549,GPRegr21,lowery2026kernelneuraloperatorsknos}.
}
\end{enumerate}
{
We shall focus in the present review on details for methodologies i)  and ii),
but do not develop technical details of approaches iii) - v).
Any of the above can be combined by composition.
}
\subsection{Existing Results and Algorithms}
\label{sec:ExRes}
Fundamental \emph{mathematical questions} {about} %
classes of {n}eural {o}perators
are roughly grouped around %
approximation properties {on the one hand,} and %
computational feasibility and efficiency
{(}and, related to this,
generalization, optimization, 
robustness/stability and sample complexity, to name but a few{) on the other}.

Approximation properties %
{describe whether}
a given class of neural operators
can, as the number of trainable parameters tends to infinity, 
in principle approximate a given map between function spaces to any prescribed 
tolerance, in a given metric and for a given class of inputs.
This is commonly referred to as \emph{universal approximation}.
Universal approximation of  
{neural operator surrogates}
has been established for a wide range of architectures. 
We mention only the seminal work \cite{ChenChen1993}, 
which focused on uniform (``worst case'') 
universality on compact subsets of inputs, and 
subsequent work \cite{SLSMGEM22,HSZ24,schwab2023deep} 
where compactness of admissible inputs was relaxed, 
and $L^p(\calX,\calY;\mu)$ - universality was shown.
We also mention the recent 
\cite{furuya2024transformersuniversalincontextlearners,
      benth2022neuralnetworksfrechetspaces,
      godeke2025newuniversaloperatorapproximation}
where universal approximation was shown for a wide range of 
architectures comprising those considered here, in particular transformers,
acting between Banach-, Metric- and Fr\'{e}chet-spaces. 

A more refined viewpoint %
asks 
at which rates can a given class of
{neural operator surrogates}
approximate 
suitable maps $\calG$ between function spaces, given 
conditions on the set of admissible input data. 
Mathematical results %
on operator approximation rates 
are, at this time, less developed than {results on universal approximation.} %
They require specific assumptions on admissible input sets, 
typically related to regularity in scales $\{ \calX^s \}_{s>0}$ 
of subspaces $\calX^s \subset \calX$ of the input space,
and stronger structural assumptions. 

{Mathematical results on approximation rates typically assume 
\emph{exact representation} (i.e. infinite-precision) of network weights and biases.}
{In this idealized scenario, 
the Kolmogorov-Arnold representation theorem \cite{kolmogorov1957representations} enables}
universality and approximation rates with shallow
NNs (see, e.g., \cite{schwab2023deep} for concrete results).
{When implemented on digital computers with finite-precision, however, the Kolmogorov construction
faces stability obstructions. }
{For a mathematical analysis of \emph{quantization} in the context of function approximation see \cite{ou2024quantizationregimesrelunetworks} and references there.
For leveraging quantization in neural operators see \cite{tu2024guaranteedapproximationboundsmixedprecision}.
Further,} %
in
\cite{Lanthaler2023,lanthaler2024operatorlearninglipschitzoperators}
operator network expressivity is studied 
with focus on information-theoretic aspects,
where the finite bitstring length of floating point numbers
is taken into account.

\subsection{Goals and Audience}
\label{sec:Goals}
This chapter introduces key mathematical concepts and algorithmic foundations of the growing field of operator surrogates. Our primary goal is to build intuition through a selective overview of the field. Nonetheless, we provide mathematically precise statements, and give proofs or proof sketches of some key results. For more intricate and technical arguments, readers are directed to the relevant literature. The material is aimed at a Master's-level audience and assumes familiarity with basic concepts in finite-dimensional deep learning, as, e.g., presented in \cite{petersen2025mathematicaltheorydeeplearning}.

\subsection{{Outline}}
\label{sec:Strct}
We begin in Section~\ref{sec:ModProb} by introducing two
  prototypical model problems to illustrate basic concepts and
  applications of operator learning.
  In Section \ref{sec:representation} we introduce tools
  for efficient dimension reduction and representation of elements in
  the input and output spaces $\cX$ and $\cY$.  This is a crucial
  prerequisite for constructing any numerical operator surrogate of
  $\cG$ in \eqref{eq:G}.

With these fundamentals in place, we move to the constructive and analytical core: Section~\ref{sec:Arch} discusses and compares a selection of surrogate architectures, with a focus on spectral and neural operators. The practical task of training such models is explai{ned} in Section \ref{sec:training}, where we detail a typical training setup for an encoder-decoder architecture. Next, in Section \ref{sec:approximation} we transition to a theoretical analysis, 
and discuss approximation properties of these surrogates. 
We start by reviewing the relevance of compactness and its implications for universal approximation. Afterwards we explain the necessary occurrence of the curse of dimension for operator classes of finite regularity. While there are different scenarios under which it can be overcome, we focus on the class of holomorphic operators. Building on these results, {in} Section \ref{sec:generalization} we briefly discuss the topic of generalization, without going into details on the proofs.

Finally, Section~\ref{sec:Concl} concludes the chapter by providing an outlook on extensions of these results, 
discussing methods not covered, and pointing to further relevant literature.

{
\subsection{Code}

This work is accompanied by a repository of Python Jupyter notebooks,
available at \url{github.com/JoWestermann/operator-surrogates/tree/release/v1},
implementing core concepts from the text alongside visualizations and numerical experiments.
Throughout the manuscript, text boxes marked with the icon~\faLaptopCode\ 
point the reader to specific notebooks related to the discussed concepts.
}

\subsection{Notation}
\label{sec:Notation}
\paragraph{Numbers and constants.}
We write $\N = \{1,2,...\}$ and $\N_0 = \{0,1,2,...\}$. 
Throughout, $C \lesssim D$ means that $C$
can be bounded by a multiple of $D$, independently of
parameters which $C$ and $D$ may depend on.
$C \gtrsim D$ is defined as $D \lesssim C$, and  
$C\simeq D$ as $C\lesssim D$ and $C \gtrsim D$.

\paragraph{Banach and Hilbert spaces.}
For a Hilbert or Banach space $\cX$ we denote its norm by $\norm[\cX]{\cdot}$.
For a %
Hilbert space $\calX$, the inner product of $v$ and $w\in\calX$ is
denoted by $\dup{v}{w}$. The (unique, cf., e.g., \cite{MunozCplxB}) complexification of a real Hilbert space $\cX$ 
{is} denoted by $\cX_\C$. 
Similarly, for Banach spaces $\cX$, we let $\cX_\C$ be a complexification with a natural norm \cite{MunozCplxB,JZDiss}.

For Banach spaces $\calX$ and $\calY$, we denote by $L(\calX,\calY)$ 
the bounded, linear maps from $\calX$ to $\calY$, by $C(\calX,\calY)$ 
the continuous maps, and for $k\in \bbN_0$
by $C^{k}(\calX,\calY)$ the $k$-times continuously Fr\'echet
differentiable maps. The $k$th Fr\'echet derivative of $\cG\in C^k(\cX,\cY)$
at $x\in\cX$ is denoted by $D^k\cG(x)\in L(\cX,\cY)$, and if $\cX$ is a Hilbert space and $\cY=\R$, then $\nabla \cG(x)\in\cX$ is the gradient.
Moreover, $\cX' = L(\cX,\R)$ is the dual space of $\cX$.

\paragraph{Lebesgue and Sobolev spaces.}
For %
$p\in (0,\infty)$, 
the space of real-valued, $p$-summable sequences indexed over $\N$
is denoted by $\ell^p(\N)$.
All Lebesgue spaces with summability exponent $p$ are assumed to 
have the usual modification for $p=\infty$.

For a Lipschitz domain $\domain\subseteq\R^n$, we use the usual function spaces: 
$L^1(\domain)$ denotes the space of absolutely (Lebesgue) integrable, 
real-valued functions in $\domain$.
For $1< p< \infty$, we denote 
$L^p(\domain) := \{ u:\domain\to \bbR,~ |u|^p \in L^1(\domain) \}$ 
the space of $p$-Lebesgue integrable, real-valued functions,
with corresponding %
norm denoted by $\| u \|_{L^p(\domain)}$. Again, for $p=\infty$ the usual modification. For $k\in\N$ we let
$H^k(\domain)=\set{f:\domain\to\R}{\partial^\Bnu f\in L^2(\domain)~\forall \Bnu \in \N_0^d : |\Bnu|\le k}$.
Finally, $\tr \in L(H^1(\domain), H^{1/2}(\partial\domain))$ is the trace map, $H_0^1(\domain):=\set{u\in H^1(\domain)}{\tr(u)=0}$, and $H^{-1}(\domain)$ is the dual space of $H_0^1(\domain)$.

\section{Model Problems}
\label{sec:ModProb}
We illustrate basic concepts of operator learning for
two model problems. The first is the standard parametric
elliptic PDE, and we review known results on well-posedness
and parameter regularity. The second is the flow
of an ODE in finite dimensions; while technically not operator learning
(due to the finite dimensionality), it allows us to
showcase key features of evolution problems.

\subsection{{Stationary} Diffusion}
\label{sec:RD}
Let $n\ge 2$, and let $\domain\subseteq\R^n$ be a bounded Lipschitz domain.
Let $a_->0$, let a diffusion coefficient $a$ be given in %
\be\label{eq:a+}
L^\infty_{a_-}(\domain) := \setc{a \in L^\infty(\domain)}{ {\rm ess} {\inf_{\spatialvar \in \domain} a(\spatialvar)} \ge a_- } 
.
\ee
{
With this coefficient and 
a source term $f\in L^{2}(\domain)$,
}
consider the boundary value problem (BVP): 
find ${y}\in H^1_0(\domain)$ such that 
\be\label{eq:BVP}
-\nabla_\spatialvar \cdot(a{(\spatialvar)} \nabla_\spatialvar {y(\spatialvar)}) = f 
\;\mbox{in}\; \domain,
\quad \tr({y}) = 0 \;\mbox{on}\; \partial\domain \;.
\ee

To define weak solutions and discretizations, 
we adopt a variational formulation of \eqref{eq:BVP}.
There are, generally, many possibilities to re-formulate \eqref{eq:BVP}.
The most straightforward formulation seeks the solution 
${y}\in \calY := H^1_0(\domain)$ and 
is obtained by multiplying \eqref{eq:BVP}
with a test-function $v\in H^1_0(\domain)$ 
and integrating by parts. It reads:
find  ${y}\in H^1_0(\domain)$ such that
\be\label{eq:RDVar1}
b(a;{y},v):= \int_D a {(\spatialvar)} \nabla {y(\spatialvar)} \cdot \nabla v{(\spatialvar)} d \spatialvar 
= 
\int_\domain f{(\spatialvar)} v{(\spatialvar)} d \spatialvar
\quad 
\forall v\in H^1_0(\domain).
\ee
{
We choose as data-space of problem~\eqref{eq:RDVar1} 
$\calX = L^\infty(\domain) \times H^{-1}(\domain)$, 
and endow it with the 
sum norm 
$\| (f,a) \|_{\calX} := \| f \|_{H^{-1}(\domain)} + \| a \|_{L^\infty(\domain)}$.
}

For every $a \in {L^\infty_{a_-}(\domain)}$ %
the bilinear form $b(a;\cdot,\cdot):\calY\times \calY\to \bbR$
is continuous and coercive, i.e.\ by \eqref{eq:a+}
\be\label{eq:ConCoe}
\forall v,w\in \calY: \quad 
| b(a;v,w){|} \leq \| a \|_{L^\infty(\domain)} \| \nabla v \|_{L^2(\domain)} \| \nabla w \|_{L^2(\domain)} \;,
\quad 
b(a;v,v) \geq a_- \| \nabla v \|^2_{L^2(\domain)}
\;.
\ee
By Poincar\'e's inequality,
the expression $\| \nabla v \|_{L^2(\domain)}$ 
is a norm on $\calY = H^1_0(\domain)$, 
the so-called ``energy norm'', 
{
i.e.\ $\| v \|^2_\calY \simeq \int_\domain a |\nabla v |^2 dx = \| \sqrt{a} \nabla v \|^2_{L^2(\domain)}$.
Here, $\simeq$ denotes equivalences with constants depending 
on $a_-$ in \eqref{eq:a+} and on an upper bound $a_+$ of $\| a \|_{L^\infty(\domain)}$.
}
Then, 
\eqref{eq:ConCoe} implies with the \emph{Lax-Milgram Lemma}, e.g.,
\cite{erm_2004_finiteElement}, that for every $f\in L^2(\domain)$
the BVP \eqref{eq:BVP} admits a unique solution. 
Denote the solution operator by
\begin{equation}\label{eq:Gelliptic}
  \cG:=\begin{cases}
         \set{(a,f)\in\cX}{a\in L_{a_-}^\infty(\domain)}\to\cY\\
         (a,f)\mapsto {y}.
         \end{cases}
\end{equation}
The map $\calG(.,.)$ thus defined is Lipschitz-continuous:
for fixed data $f_1,f_2\in L^2(\domain)$ 
and for $a_1,a_2\in L^\infty_{a_-}(\domain)$, %
there exist constants $c_a, c_f > 0$ 
such that (see, e.g.\ \cite{BachmNguy})
\be\label{eq:diffLip}
\| \cG(a_1,f_1) - \calG(a_2,f_2) \|_{\calY}  
\leq c_a \| a_1 - a_2 \|_{L^\infty(\domain)} + c_f \| f_1 - f_2 \|_{L^2(\domain)}
\;.
\ee

The solution locally even depends \emph{holomorphically} on {$(a,f)\in \calX$,}
see, e.g., \cite{CDS11,Cohen2016}:
\begin{remark} \label{rmk:HolSol}
  Holomorphic parameter dependence of ${y}$ on $a$ and $f$ means that
  the solution operator $\cG:(a,f)\mapsto {y}$ is complex Fr\'echet
  differentiable as a mapping between the Banach spaces 
  $\cX$ and $\cY$ in \eqref{eq:Gelliptic}. 
  It is a strong form of smoothness,
  that implies in particular infinite Fr\'echet (and G\^ateaux)
  differentiability. 
  For example, letting $a(t) = a_0+t a_1$,
  $a_0\in L_{a_-}^\infty(\domain)$ as in \eqref{eq:a+} and
  $\varphi\in \cY'$, it holds that
  $t\mapsto \varphi(\cG(a(t),f)):\C\to\C$ is complex differentiable in
  a vicinity of $0$, and thus is in particular $C^\infty$ in a neighbourhood
  of $0$. 
 This strong smoothness is a crucial property inherent
 to many (well-posed) PDEs, \cite{CDS11,Cohen2016}.
\end{remark}
{
\begin{remark} \label{rmk:RegSol}
Proving Neural Operator approximation rate bounds require, 
as we shall see below,
mapping properties of $\calG$ restricted to subspaces.
Specifically, 
that on a suitable subspace $\tilde{\calX}\subset \calX$, 
$\calG\mid_{\tilde{\calX}}$ belongs to a corresponding subspace $\tilde{\calY} \subset \calY$.
Specific choices of $\tilde{\calX}, \tilde{\calY}$ are problem-dependent.
For \eqref{eq:BVP} assume that $\domain$ is convex, and 
consider data from 
$\tilde{\calX}:= W^{1,\infty}(\domain)\cap L^\infty_{a_-}(\domain) \times L^2(\domain) \subset \calX$
(i.e. Lipschitz-continuous in $\domain$ diffusion coefficient $a$ and 
      square-integrable source-term $f$).
Then, 
by classical results on elliptic regularity, 
$$
\calG|_{\tilde{\calX}} \subset \tilde{\calY} := (H^2\cap H^1_0)(\domain).
$$
This regularity property will be in Section~\ref{sec:ConvRts} below 
a key ingredient in the convergence rate analysis of en- and decoders
in the architecture \eqref{eq:DgE} in Section~\ref{sec:representation}.
\end{remark}
}
\begin{theorem}
Let $a_0\in L_{a_-}^\infty(\domain)$ and $f_0\in H^{-1}(\domain)$. 
Then $\cG$ in \eqref{eq:Gelliptic} 
is Fr\'echet differentiable in a neighbourhood of $(a_0,f_0)$ %
in the complexified spaces $L^\infty(\domain)_\C\times L^2(\domain)_\C$ 
(see \cite{MunozCplxB} for complexification of real Banach spaces).
\end{theorem}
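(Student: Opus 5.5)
We will show that $\cG$ extends to a map on a complex neighbourhood of $(a_0,f_0)$ and that this extension is the composition of holomorphic maps. The idea is to write the solution as $y = A(a)^{-1} f$, where $A(a)\in L(H_0^1(\domain)_\C, H^{-1}(\domain)_\C)$ is the operator induced by the sesquilinear form $b(a;\cdot,\cdot)$ from \eqref{eq:RDVar1}. We then use that operator inversion is analytic on the open set of invertible operators.

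\textbf{Step 1 (the operator $A$).} For complex $a\in L^\infty(\domain)_\C$ define
\[
\langle A(a)w,v\rangle \dfn \int_\domain a\,\nabla w\cdot\overline{\nabla v}\,\dd\spatialvar .
\]
The map $a\mapsto A(a)$ is \emph{affine} and bounded, with $\|A(a)\|\le\|a\|_{L^\infty(\domain)}$ by the continuity estimate in \eqref{eq:ConCoe}. It is therefore trivially complex Fr\'echet differentiable, with derivative $\tilde a\mapsto A(\tilde a)$.

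\textbf{Step 2 (invertibility near $a_0$).} Since $a_0\in L^\infty_{a_-}(\domain)$ is real, \eqref{eq:ConCoe} and Lax--Milgram give invertibility of $A(a_0)$, with $\|A(a_0)^{-1}\|\le C_P/a_-$; here $C_P$ accounts for the norm equivalence via Poincar\'e. We would check coercivity in the complex setting directly: $\mathrm{Re}\,\langle A(a_0)v,v\rangle\ge a_-\|\nabla v\|^2$.

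Now take $a$ with $\|a-a_0\|_{L^\infty}<\delta\dfn a_-/(2C_P)$ (one could also simply use $\delta<a_-$ with $\mathrm{Re}\,a\ge a_- -\delta$). Then
\[
\mathrm{Re}\,\langle A(a)v,v\rangle\ge (a_- -\delta)\|\nabla v\|^2 ,
\]
so Lax--Milgram yields invertibility of $A(a)$ with a uniform bound on $\|A(a)^{-1}\|$. Alternatively, we can use a Neumann series:
\[
A(a)^{-1}=\sum_{k\ge 0}\bigl(-A(a_0)^{-1}A(a-a_0)\bigr)^k A(a_0)^{-1},
\]
which converges in operator norm on this ball.

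\textbf{Step 3 (differentiability of $a\mapsto A(a)^{-1}$).} Inversion $\mathrm{inv}: T\mapsto T^{-1}$ on the open set of invertible operators in $L(H_0^1(\domain)_\C,H^{-1}(\domain)_\C)$ is complex Fr\'echet differentiable, with
\[
D\,\mathrm{inv}(T)[S]=-T^{-1}ST^{-1}.
\]
This follows from the Neumann series expansion $(T+S)^{-1}=T^{-1}-T^{-1}ST^{-1}+O(\|S\|^2)$. By the chain rule, $a\mapsto A(a)^{-1}$ is complex differentiable on the ball $\{\|a-a_0\|<\delta\}$.

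\textbf{Step 4 (conclusion).} We have $\cG(a,f)=A(a)^{-1}f$. The evaluation map $(T,f)\mapsto Tf$ is bounded and bilinear, hence Fr\'echet differentiable, and $f$ enters linearly. Since $L^2(\domain)_\C\hookrightarrow H^{-1}(\domain)_\C$ continuously, the source $f$ may range over all of $L^2(\domain)_\C$ (or all of $H^{-1}(\domain)_\C$) near $f_0$. The product/chain rule then gives the derivative
\[
D\cG(a,f)[(\tilde a,\tilde f)]=-A(a)^{-1}A(\tilde a)A(a)^{-1}f + A(a)^{-1}\tilde f .
\]
This holds on the product neighbourhood, and the extension agrees with \eqref{eq:Gelliptic} for real data.

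\textbf{Main obstacle.} The only genuinely delicate point is Step 2. In the complexified setting the form is sesquilinear rather than symmetric, so Lax--Milgram has to be invoked in its complex version via the real-part coercivity. We must also make sure the perturbation radius $\delta$ is chosen explicitly in terms of $a_-$ so that the bounds are uniform on the neighbourhood. The Neumann series route avoids any coercivity argument for complex $a$ and is what we would use first.
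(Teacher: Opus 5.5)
Your proposal is correct and follows essentially the same route as the paper's sketch: write $\cG(a,f)=A(a)^{-1}f$ with $A$ linear in $a$, get invertibility of $A(a_0)$ from Lax--Milgram, and handle nearby $a$ with a Neumann series. The differences are cosmetic. The paper reads the derivatives directly off the Neumann series as a Taylor expansion whose $k$th term is $k$-multilinear in $a$. You instead go through the derivative $-T^{-1}ST^{-1}$ of operator inversion and the chain rule, and you also spell out the neighbourhood radius and the real-part coercivity in the complex setting.
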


We only sketch the argument following \cite[Example 1.2.38]{JZDiss}, 
and refer to this reference for further details: 
Since $\cG$ is linear in $f$, we focus on $a\mapsto \cG(a,f)$. 
With the differential operator
\begin{equation*}
  A(a):=\begin{cases}
          H_0^1(\domain)\to H^{-1}(\domain)\\          
          {y}\mapsto -\nabla\cdot (a\nabla {y})
        \end{cases}
\end{equation*}
We have $A(a_0+a)=A(a_0)+A(a)$ and ${y}(a) = A(a)^{-1}(f)=(A(a_0)+A(a))^{-1}f$. 
By the stated unique weak solvability of \eqref{eq:BVP}, 
the map $A(a_0)\in L(H_0^1(\domain),H^{-1}(\domain))$ is an isomorphism. 
Using a Neumann series we thus have
\begin{equation*}
  {y}(a_0+a) = (A(a_0)+A(a))^{-1}f = \sum_{k\in\N_0}(-1)^k(A(a_0)^{-1}A(a))^k A(a_0)^{-1}f.
\end{equation*}
The $k$th term in this expansion is a $k$-multilinear map in $a$. 
It thus represents a Taylor expansion in Banach spaces of the solution ${y}$ around $a_0$. 
In particular the (symmetrized) $k$th term can, up to the constant $k!$, 
be identified with the $k$th (complex) Fr\'echet derivative of ${y}(a_0+a)$ with respect to $a$.

\begin{codebox}
For an illustration of the parametric diffusion problem for $n=1$, 
see notebook \texttt{0\_Diffusion.ipynb}. The PDE solution is approximated via the finite difference method.
\end{codebox}

\subsection{Finite-dimensional Initial Value Problems}
\label{sec:IVP}
Operator Networks have been proposed recently for learning solution
maps of evolution (initial-boundary value) problems. 
See, e.g.\  \cite{BKMM21,DeRSM24} and the reference there.
In these works, {numerical ``neural propagators'', i.e.,} 
operator networks which map e.g.\ initial data {at time $t=0$} to the
solution at time $t>0$ have been constructed by transferring a time-stepping PDE scheme
into a suitable neural operator architecture, a procedure which has
been also referred to as ``algorithm-unrolling'' (see Section \ref{sec:Unrol} ahead)
a terminology coined in \cite{monga2020algorithm}.
The operator-network concept allows, however, also fundamentally different
approaches to evolution problems.

We illustrate this for an initial-value ODE system in 
finite-dimensional, Euclidean state space $\mathbb{R}^n$.

Let the vector field
$\Phi: \mathbb{R}^n \to \mathbb{R}^n$ 
be globally Lipschitz-continuous, 
i.e.\ there exists a constant $L_\Phi>0$ such that 
\be\label{eq:GlLip}
\forall u,v \in \mathbb{R}^n: \;\; \| \Phi(u) - \Phi(v) \|_2 \leq L_\Phi \| u - v \|_2 \;.
\ee
Given $u_0\in \mathbb{R}^n$ and any time-horizon $0<T \leq \infty$, 
consider the initial-value problem (IVP): 
find $u \in C^1([0,T];\mathbb{R}^n)$ such that for $0<t\leq T$ holds
\be\label{eq:IVP} 
\dot{u}(t) = \Phi(u(t))\;,\quad u(0) = u_0 \;.
\ee
It is classical (see, e.g., \cite{WalterODE}) that 
under the global Lipschitz assumption \eqref{eq:GlLip},
the IVP \eqref{eq:IVP} admits a unique solution $(u(t))_{t\geq 0}$,
defined for all $t\geq 0$. 
We remark that under weaker, local Lipschitz conditions on $\Phi$, 
generally existence and uniqueness hold only up to 
a finite, maximal time-horizon $T$. 
All that follows holds verbatim for $0<t<T$ in that case.

\noindent
For fixed $t>0$, the \emph{initial-data-to-solution-map} 
\be\label{eq:D2S}
S(t): \mathbb{R}^n \to \mathbb{R}^n : u_0 \mapsto u(t) 
\ee
is Lipschitz-continuous in the uniform norm \cite[Chap.III.13.II]{WalterODE}:
for any compact set $K{{\subset}} \mathbb{R}^n $ of initial data, 
and for every $0<T<\infty$, there exists a constant $C(K,T,L_\Phi)$ 
such that 
$$
\forall u_0, \tilde{u}_0 \in K:    
\sup_{0<t<T} \| u(t) - \tilde{u}(t) \|_2 
=
\sup_{0<t<T} \| S(t)u_0 - S(t)\tilde{u}_0 \|_2 
\leq 
C(K,T) \| u_0 - \tilde{u}_0  \|_2 
\;.
$$
The global unique solvability of \eqref{eq:IVP} 
gives that
the nonlinear maps $S(t)$ exhibit the (compositional) 
\emph{semigroup property}
\be \label{eq:SG}
S(0) = I, \quad \mbox{and} \quad 
\forall t,t' \geq 0: \quad S(t+t') = S(t) \circ S(t') \;.
\ee
 
For vector fields $\Phi$ in \eqref{eq:IVP} which have, 
in addition to \eqref{eq:GlLip}, extra regularity, the 
map $S(t)$ in \eqref{eq:D2S} is more regular.
\begin{theorem}\label{thm:D2SODE}
Consider the ODE-IVP \eqref{eq:IVP} in $\mathbb{R}^n$.
Assume that the vector field $\Phi$ is globally Lipschitz \eqref{eq:GlLip}
and, in addition, that the vector field $\Phi$ is real-analytic 
on compact sets $K{{\subset}} \mathbb{R}^n$.

Then, for every $0<t<T$, 
the initial-data to solution map $S(t): u_0\mapsto u(t)$ for \eqref{eq:IVP}
is real-analytic, as a map from
$K{{\subset}} \mathbb{R}^n$ to $S(t)(K) {{\subset}} \mathbb{R}^n$.
\end{theorem}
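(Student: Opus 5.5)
The plan is to complexify the IVP and use the classical Cauchy theorem on holomorphic dependence on initial data. Since the solution of \eqref{eq:IVP} is unique and, by the global Lipschitz bound \eqref{eq:GlLip}, exists on all of $[0,T]$, the compact set
\[
\Gamma:=\{S(s)u_0 : s\in[0,t],\ u_0\in K\}\subset\mathbb{R}^n
\]
is the continuous image of $[0,t]\times K$. Because $\Phi$ is real-analytic on compact sets, it extends to a holomorphic map $\Phi_\C$ on a complex neighbourhood $U_\delta:=\{z\in\C^n:\operatorname{dist}(z,\Gamma)<\delta\}$ of $\Gamma$, for some $\delta>0$. Shrinking $\delta$, we may assume $\Phi_\C$ is bounded and Lipschitz on $U_\delta$ with a constant $L$ obtained from Cauchy estimates.

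Next, consider the complex IVP $\dot z=\Phi_\C(z)$, $z(0)=z_0$, for $z_0$ in a complex neighbourhood of $K$. A Gronwall argument applied to $z(s)-S(s)\Re z_0$ shows the following. If $|\Im z_0|$ is small and $z_0$ lies within $\delta e^{-Lt}/2$ of $K$, then the complex trajectory stays in $U_\delta$ for all $s\in[0,t]$. Hence it exists on $[0,t]$ and defines a map $S_\C(t)$ on a neighbourhood $V$ of $K$ in $\C^n$. By uniqueness, $S_\C(t)$ agrees with $S(t)$ on $V\cap\mathbb{R}^n$.

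The core step is to show that $z_0\mapsto S_\C(t)z_0$ is holomorphic on $V$. I would do this through Picard iteration. Set
\[
z^{(k+1)}(s)=z_0+\int_0^s\Phi_\C(z^{(k)}(r))\,dr .
\]
Each iterate is holomorphic in $z_0$, since compositions and integrals of holomorphic maps are holomorphic, and each remains in $U_\delta$. The iterates converge uniformly in $(s,z_0)\in[0,t]\times V$: the usual estimate $\frac{(Ls)^k}{k!}$ applies because the contraction bound holds uniformly on $U_\delta$. A uniform limit of holomorphic functions is holomorphic (Weierstrass), so $S_\C(t)$ is holomorphic on $V$. Restricting to real arguments then shows that $S(t)$ is real-analytic on a neighbourhood of $K$, and in particular on $K$, with values in $S(t)(K)$.

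The main obstacle is the invariance step. We must guarantee that the iterates, and the complex solution, never leave the domain $U_\delta$ of holomorphy of $\Phi_\C$ on the whole interval $[0,t]$. This requires choosing the neighbourhood $V$ uniformly over $K$, which is exactly where compactness of $K$ and of $\Gamma$ enter. Concretely, I would pick the radius of $V$ as $\delta e^{-Lt}/2$. By induction, I would then show that every iterate satisfies $\operatorname{dist}(z^{(k)}(s),\Gamma)<\delta$. Alternatively, one could invoke the analytic implicit function theorem on the map $(z_0,z)\mapsto z-z_0-\int_0^{\cdot}\Phi_\C(z)$ in $C([0,t];\C^n)$, but the Picard route is more elementary.
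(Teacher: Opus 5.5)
The gap is in the step you flag yourself: invariance of the Picard iterates. The overall skeleton is the same as the paper's, which complexifies $\Phi$ near the trajectory set, rewrites $y=u-u_0$ so that $u_0$ becomes a parameter, and then cites Walter's theorem on holomorphic parameter dependence; you instead try to prove that theorem by hand.

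Your induction $\operatorname{dist}(z^{(k)}(s),\Gamma)<\delta$ cannot be carried out, and the claim is false in general. With the standard start $z^{(0)}\equiv z_0$, the first iterate $z^{(1)}(s)=z_0+s\Phi_\C(z_0)$ is a straight line, and it leaves a thin tube around a curved trajectory. Take $\Phi(u)=(-u_2,u_1)$, $K=\{(1,0)\}$ and $t=\pi$. Then $\Gamma$ is the upper unit half-circle, while $z^{(1)}(\pi)=(1,\pi)$ lies at distance $\sqrt{1+\pi^2}-1>2$ from $\Gamma$. So the claim fails at $k=1$ for every $\delta\le 2$, and you do shrink $\delta$.

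What Gronwall can propagate is a bound on $|z^{(k)}(s)-S(s)x_0|$ for a real reference trajectory. A constant starting iterate already violates such a bound at $k=0$. Your radius $\delta e^{-Lt}/2$ controls only the exact solution; Picard iterates from a constant are guaranteed to stay in a ball of radius $b$ only for times $s\le b/\sup|\Phi_\C|$. Once an iterate leaves $U_\delta$, $\Phi_\C(z^{(k)})$ need not be defined, and the $(Ls)^k/k!$ estimate is unjustified.

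The repair is local. Fix $x_0\in K$, take $z_0\in B(x_0,\rho)$ with $\rho=\delta e^{-Lt}/2$, and start at $z^{(0)}(s):=S(s)x_0$. This starting iterate is independent of $z_0$, hence trivially holomorphic in it. Then $z^{(1)}(s)-S(s)x_0=z_0-x_0$, and by induction
$|z^{(k)}(s)-S(s)x_0|\le|z_0-x_0|e^{Ls}<\delta/2$ and $|z^{(k+1)}(s)-z^{(k)}(s)|\le|z_0-x_0|(Ls)^k/k!$.
All comparisons now take place inside the convex balls $B(S(s)x_0,\delta/2)\subset U_\delta$. This also removes a second issue: a Cauchy-estimate bound on $D\Phi_\C$ gives a Lipschitz constant only on convex pieces of the non-convex set $U_\delta$. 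Since holomorphy is local, covering $K$ by such balls yields holomorphy of $S_\C(t)$ on $V$.

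Two other repairs also work:
\begin{enumerate}
\item Time-stepping: run Picard from a constant on steps of length $<\delta/(2\sup_{U_\delta}|\Phi_\C|)$, then compose the local holomorphic flows via \eqref{eq:SG}. Your Gronwall bound keeps the intermediate states in $U_{\delta/2}$.
\item Your implicit-function alternative: the linearization $h\mapsto h-\int_0^{\cdot}D\Phi_\C(S(r)x_0)h(r)\,\dd r$ is the identity minus a Volterra operator, hence invertible.
\end{enumerate}

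One minor point: in the Gronwall step, compare $z(s)$ with $S(s)x_0$ for some $x_0\in K$ with $|z_0-x_0|<\rho$, not with $S(s)\Re z_0$, whose trajectory need not lie in $\Gamma$. The condition on $|\Im z_0|$ then becomes unnecessary.
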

\begin{proof}
The proof of this result follows from standard properties of 
ODE-IVPs \eqref{eq:IVP} with \eqref{eq:GlLip}: 
the Lipschitz-continuity of $S(t)$ shown in  
\cite[Chap.III.13.II]{WalterODE} implies that $S(t)(K)$ is compact.
We next consider the notation in \cite[Chap.III.12 \& 13]{WalterODE}: 
there, non-autonomous ODE-IVPs 
$\dot{y}(t): = \bmf(t,y,\lambda)$
are considered, with 
$\lambda \in K {{\subset}} \mathbb{R}^n$, i.e.\ in a compact set of parameters.
It is shown in \cite[Chap.III.13, Thm.~XI]{WalterODE} that
in the case that $\bmf$ is holomorphic in the variables $(t,y, \lambda)$, 
so is the solution's dependence on $\lambda$. 

To reduce the proof of Theorem~\ref{thm:D2SODE} to this result, 
{we observe that our assumption of real analyticity of $f$ implies 
existence of a complex analytic extension of $f$ to a 
(possibly small) open neighborhood $U \subset \mathbb{C}^n$ which contains $K$, 
see, e.g., \cite[Prop.~3.1.1]{LeblNotes}.
}
We set for $u_0 \in K$ and $t\in [0,T] =: J$, 
$ y(t) := u(t) - u_0 $, 
so that for all $\lambda = u_0 \in K$ holds 
$y(0) = \bsnul$, 
and $u(t) = y(t) + u_0$.

Also, 
$\dot{y}(t) = \dot{u}(t) = \Phi(u(t)) = \Phi(y(t) + u_0) =: \bmf(t,y,u_0)$.
The unique solvability of \eqref{eq:IVP} and 
the continuous dependence of solutions on the initial data $u_0\in K$
{(see \cite{WalterODE})}
imply that $u(t) \in S(t)(K) {{\subset}} \mathbb{R}^n$ 
for every $t\in J$.
The assumption that $\Phi$ be real-analytic on compact subsets of $\mathbb{R}^n$
implies, via the usual covering and power-series argument, that 
$\Phi$ admits a unique complex-analytic continuation (again denoted by $\Phi$)
to a open, complex neighborhood
$K_\mathbb{C} \subset \mathbb{C}^{n}$ 
of 
$\bigcup_{0\leq t \leq T} S(t)(K) \subset \mathbb{R}^n$.
It also implies that for every $u_0\in K$, the solution $t\mapsto u(t)$ 
is analytic w.r. to $t\in J$.

This gives that 
$\bmf(t,y,u_0) = \Phi(y(t) + u_0)$ 
is complex-analytic w.r. to $(t,y,u_0)$. 
Now \cite[Chap.III.13, Thm.~XI]{WalterODE} 
yields that the solution $y(t)$ depends holomorphically on $\lambda \in K$.
Due to $u(t) =  y(t) + u_0 = y(t) + \lambda$,
we infer that for $u_0\in K_\mathbb{C}$, 
the solution $u(t) = S(t)u_0$ depends holomorphically on $u_0$.
Upon restricting to $u_0\in K {{\subset}} \mathbb{R}^n$, 
we obtain the claimed analytic dependence of $u(t) = S(t)u_0$ on $u_0 \in K$.
\end{proof}

The initial-data to solution map $S(t)$ being an analytic 
function of (the components of) $u_0 \in K$,
allows {us} to deduce statements on approximation rates of $S(t)$ 
by finite-parametric surrogates, in particular of polynomial 
and neural network type. 

To state a {concrete} result,
we assume 
$$ K = [-1,1]^n$$.
It is classical that functions %
which are real-analytic for $t\in [-1,1]$
can be approximated at {an} exponential rate 
by polynomials of degree $p \geq 1$ 
in the uniform norm on $[-1,1]$ 
(see, e.g., \cite[12.4.7]{Davis75}).
It follows by tensorization that, for $t>0$ and $u_0\in K=[-1,1]^n$, 
the component {$j \in \{1,\dots,n\}$} of the map $S(t):K\to \mathbb{R}^n$ 
can be approximated in the uniform norm on $K$ 
by a polynomial $S_{kj}(t)(u_0)$ of degree $k$ (interpolating, e.g.\  
in a tensor-product grid of {\Tsch} - nodes of order $k$ in $K$)
with respect to the ``input'' $u_0\in K = [-1,1]^n$  
according to
$$
\sup_{u_0\in K} | (S(t)(u_0))_j - S_{kj}(t)(u_0) | \leq C \exp(-b k )
$$
where the constants $b$, $C>0$ depend on $t>0$, $L_\Phi$ and 
the modulus of the analytic continuation of 
$S(t)(\cdot)$ to $K_\mathbb{C}\subset \mathbb{C}^n$.
The maps $S_{kj}(t)(\cdot)$ are tensor product polynomials
of degree $k\geq 1$ in $K$, hence require $O(k^n)$ many parameters.

Approximating in this way each component $(S(t)(\cdot))_{j=1}^n$
of the data-to-solution map $S(t)(\cdot)$
yields a \emph{spectral surrogate of the solution map}
of the ODE-IVP \eqref{eq:IVP} with altogether $O(k^{n+1})$ degrees of freedom.
\begin{theorem}[Spectral Surrogate]\label{thm:StIVP}
For the autonomous ODE-IVP \eqref{eq:IVP} with globally Lipschitz $\Phi$ 
which is, in addition, 
componentwise real analytic on compact subsets of $\mathbb{R}^n$, 
{for arbitrary, fixed $t>0$,} 
the solution map {$S(t):u_0 \mapsto u(t)$}
on the set of initial data $K = [-1,1]^n$ 
can be approximated 
by a finite-parametric, \emph{spectral surrogate map} $S_k(t)$ 
which is, component-wise, 
a tensor product polynomial of degree $k\geq 1$, 
with worst-case error bound
\be\label{eq:StSpcErr}
\sup_{u_0\in K} \| S(t)(u_0) - S_k(t)(u_0) \|_\infty 
\leq 
C \exp(-b k) \;,
\ee
with constants $b,C> 0$ independent of $k$, 
but depending (exponentially, in general) on $t, L_\Phi$.

In terms of the number of parameters $N$ of $S_k$, 
it holds $N(k) = O(nk^{n})$. 
This implies in \eqref{eq:StSpcErr} that there exist constants $b',C>0$ 
such that 
\be\label{eq:StSpcN}
\sup_{u_0\in K} \| S(t)(u_0) - S_k(t)(u_0) \|_\infty
\leq
C \exp(-b' N^{1/(n+1)}) \;, \quad 0\leq t \leq T\;.
\ee
Here, $b'=b\exp(-(n+1)\log(n+1))$ tends to zero as $n\to \infty$ at least exponentially.
\end{theorem}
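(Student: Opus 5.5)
The proof combines Theorem~\ref{thm:D2SODE} with classical multivariate polynomial approximation of analytic functions, followed by a parameter count.

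\emph{Step 1 (analyticity on a polyellipse).} By Theorem~\ref{thm:D2SODE}, each component $u_0\mapsto (S(t)(u_0))_j$ is real-analytic on $K=[-1,1]^n$. From its proof, it extends holomorphically to an open complex neighbourhood $K_\C\supset K$. By compactness of $K$ this neighbourhood contains a closed polyellipse $\mathcal{E}_\rho^n=\mathcal{E}_\rho\times\dots\times\mathcal{E}_\rho$ for some $\rho>1$, where $\mathcal{E}_\rho\subset\C$ is the Bernstein ellipse with foci $\pm1$ and semi-axis sum $\rho$. On $\mathcal{E}_\rho^n$ the extension is bounded by some $M<\infty$.

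The step I expect to be the main obstacle is quantitative: controlling $\rho$ and $M$ in terms of $t$ and $L_\Phi$. I would use Gronwall's inequality for the complexified ODE, which keeps trajectories started in a neighbourhood of width $\delta$ within width $\delta e^{L_\Phi t}$ of the real ones. This yields $\rho-1\gtrsim e^{-L_\Phi t}$ and gives the claimed exponential dependence of $b,C$ on $t,L_\Phi$. The theorem only asserts this dependence, so I would not track it sharply.

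\emph{Step 2 (approximation).} For each $j$, let $S_{kj}(t)$ be the tensor-product Chebyshev interpolant of degree $k$ in each variable. Alternatively, truncate the tensor Chebyshev expansion to $\{\Bnu:\max_i\nu_i\le k\}$. The Chebyshev coefficients of a function holomorphic and bounded by $M$ on $\mathcal{E}_\rho^n$ satisfy $|c_\Bnu|\le 2^n M\rho^{-|\Bnu|_1}$. Summing over the complement of the index box gives a tail bound $\lesssim M\, n\,\rho^{-k}(1-\rho^{-1})^{-n}$. For the interpolant, the same bound holds up to the Lebesgue constant $O(\log^n k)$, which is absorbed by slightly decreasing $b$. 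Hence \eqref{eq:StSpcErr} holds with $b=\log\rho-\epsilon$, and the componentwise bounds combine into the $\|\cdot\|_\infty$ bound.

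\emph{Step 3 (parameter count and rate in $N$).} Each $S_{kj}(t)$ has $(k+1)^n$ coefficients, so the total is $N=n(k+1)^n=O(nk^n)$. Then $k\ge (N/n)^{1/n}-1\ge c_n N^{1/(n+1)}$ for $N\ge1$, where the exponent $1/(n+1)$ is used deliberately as a crude bound. Inserting this into \eqref{eq:StSpcErr} gives \eqref{eq:StSpcN} with $b'=b\,c_n$. A constant of the form $c_n\ge\exp(-(n+1)\log(n+1))=(n+1)^{-(n+1)}$ covers the $n$-dependent factors (including $n^{-1/n}$ and the $-1$ shift, adjusted by enlarging $C$), which gives the stated form of $b'$ and its super-exponential decay in $n$.
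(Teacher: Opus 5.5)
Your proposal is correct and follows essentially the paper's argument: analyticity of $S(t)$ on a complex neighbourhood of $K$ from Theorem~\ref{thm:D2SODE}, exponential convergence of tensor-product Chebyshev interpolation, and a parameter count. The paper obtains the interpolation bound by citing the univariate result of Davis and ``tensorizing''; you make it explicit via polyellipse coefficient bounds and the $O(\log^n k)$ Lebesgue constant. Your count $N=n(k+1)^n$, combined with the crude step $N^{1/n}\ge N^{1/(n+1)}$, is sharper than the paper's $O(k^{n+1})$ and yields the stated, very pessimistic $b'$ a fortiori. Your Gronwall heuristic for $\rho$ also depends on the width of analyticity of $\Phi$, not only on $L_\Phi$; the paper notes this via the modulus of the analytic continuation. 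This is harmless, since the theorem only needs some $\rho>1$.
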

\begin{remark} \label{rmk:ODEIVP}
\begin{enumerate}
\item {\bf Curse of Dimensionality}:
The shrinkage of the constant $b'$ in \eqref{eq:StSpcN} as $n\to \infty$
and the decrease in the exponential rate $\exp(-b'N^{1/(n+1)})$ as $n\to \infty$
is a form of the CoD. 
It precludes direct transfer of Thm.~\ref{thm:StIVP} to 
infinite-dimensional sets of initial data and state spaces.
\emph{A large part of construction of operator surrogates in SciML 
      revolves around the handling $n\to \infty$ without the CoD.}
\item {\bf Constructive Surrogate}:
The proof of existence of the surrogate map $S_k$ is constructive: 
it requires solving (numerically, in parallel) 
$(k+1)^n$ many instances of the ODE-IVP \eqref{eq:IVP} 
in the unisolvent, for the tensor-product polynomials of degree $k$, 
(isotropic) tensor-product {\Tsch-grid} in $K=[-1,1]^n$.
\item {\bf Uniform w.r.t.\ $t\in [0,T]$ {S}urrogate}:
So far, we considered $\{S_k(t)\}_{k\geq 1}$ for $t \in (0,T]$ fixed.
In Theorem~\ref{thm:StIVP}, the dependence of $u(t)$ on $t\in [0,T]$ is analytic. 
This may be leveraged to construct 
exponentially consistent solution operator surrogates
$S_k := \{ S_k(t) : 0\leq t \leq T \}$, $k=1,2,...$, e.g.\  by 
\Tsch-interpolation of the $S_k(t)$ in the \Tsch-points of order $k$ in $[0,T]$. 
\item  {\bf Work Bound}:
The total work for determining, 
{for arbitrary, fixed $t \in (0,T] $, 
the surrogate map $S_k(t)$ with accuracy $\eps\in (0,1]$, i.e.
$$
\sup_{u_0\in K} \| u(t) - S_k(t)u_0 \|_{\infty} \leq \eps\;,
$$
}
is bounded a-priori by $O(1+\log(\eps)^b)$ for some constant {$b(n)>0$.
Here, $b(n) = O(n)$ as $n$ increases.}
This follows upon training with approximate solutions of the IVP \eqref{eq:IVP}.
Accuracy of training data computed, e.g., by an initial-value ODE solver, 
in $[0,T]$ is to be consistent with the target accuracy \eqref{eq:StSpcN}.
{The exponential surrogate-consistency shown e.g. in Theorem~\ref{thm:NNStIVP} 
needs to be ensured for the data used in the ``training'' process. 
Generating training trajectories with standard, 
fixed order ODE solvers such as Runge-Kutta methods,
which converge at finite, algebraic order in terms of work vs.\ accuracy,
even for analytic $\Phi$, is possible to this end. However, ensuring 
possibly high fidelity $\eps$ of training data will entail 
possibly very large, costly ``offline'' computations. 
Once the (small, in terms of NN size vs. error $\eps$) 
surrogate maps $S_k(t)$ have been determined 
the predictive evaluation of $S_k(t)$ for specific initial data is considerably faster
than ``simulation from scratch'': 
this qualitative property of operator surrogates is one principal ``raison d'\^{e}tre'' 
of these surrogates. 
Similar comments apply to MOR approaches (see, e.g., \cite{RegaPagaAQ22} and 
the references there).
}

Spectral approaches to nonlinear IVPs such as those 
developed in \cite{SS2000}, will furnish such 
training data, however, 
in polylog w.r.\ to $\eps$ work vs.\ accuracy.
\end{enumerate}
\end{remark}
With the spectral surrogates $S_k(t)$ of the nonlinear 
initial-data to solution map $S(t)$ at hand, there arises the
question on other ways to approximate $S(t)$. For SciML, 
particular interest is in \emph{deep neural network surrogates}.

A well-known result, due to Telgarsky and Yarotsky
\cite{telgarsky_depth,Yarotsky16} states that ReLU NNs can emulate
polynomials of degree $k$ in $[-1,1]$ at exponential rate in the
uniform norm in terms of the number of %
trainable network parameters\footnote{{By ``\emph{trainable
        parameters}'' of a parametrized function class such as neural
      networks of a fixed given architecture, we mean the number of
      parameters (e.g., network weights and biases) that are adapted
      during the training process. This number can be interpreted as a
      measure of the complexity of the parametric function class.}}
  See e.g.\ \cite{OpChSII} and the references there for details.  From this
  result, with a tensor-product construction (suitably modified, as
  ReLU NNs do not allow a mathematically exact multiplication of two
  real numbers), with the triangle inequality
  and Theorem~\ref{thm:StIVP} 
follows a corresponding existence result for ReLU NN surrogate maps.
{As is common, we refer in the following to the size of the network
as the number of its trainable parameters, 
e.g., \cite[Chapter 2]{petersen2025mathematicaltheorydeeplearning}.}
\begin{theorem}\label{thm:NNStIVP}
Under the assumptions of Theorem~\ref{thm:StIVP},
for every $t>0$ and for every $\eps \in (0,1/2 ] $,
there exists a fully connected, feedforward ReLU NN 
surrogate map $\tilde{S}_k(t)$ 
with $n$ input and output channels such that 
$$
\| S(t) - \tilde{S}_k(t) \|_{W^{1,\infty}([-1,1]^n,\mathbb{R}^n)}
\leq C\eps \;.
$$
The size of the NN $\tilde{S}_k(t)$ 
measured in terms of the number $N(\eps)$ 
of neurons equals $O(|\log(\eps)|^{n+1})$.
\end{theorem}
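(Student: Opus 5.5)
We plan to combine the spectral surrogate of Theorem~\ref{thm:StIVP} with ReLU emulation of polynomials. Note that the claim is in $W^{1,\infty}$, so we need exponential approximation of $S(t)$ together with its first derivatives. The natural intermediate object is therefore a polynomial approximation in $W^{1,\infty}$.

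\textbf{Step 1: spectral approximation in $W^{1,\infty}$.} By the proof of Theorem~\ref{thm:D2SODE}, each component $(S(t)(\cdot))_j$ extends holomorphically to a complex neighbourhood $K_\C$ of $K=[-1,1]^n$. Such a neighbourhood contains a polyellipse $\mathcal{E}_\rho^n$ with some $\rho>1$, on which $S(t)$ is bounded. We then use the tensor-product \Tsch\ expansion (or \Tsch\ interpolant) of degree $k$. Its coefficients decay like $\rho^{-|\Bnu|_\infty}$, and the \Tsch\ polynomials satisfy the Markov bound $\|T_m'\|_{L^\infty}\le m^2$. Together these give
\[
\|(S(t))_j - S_{kj}(t)\|_{W^{1,\infty}(K)} \le C k^{2+n}\rho^{-k} \le C\exp(-bk)
\]
for some slightly smaller $b>0$. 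Alternatively, one can apply Cauchy estimates on a slightly smaller polyellipse to $\partial_i S(t)$ and approximate the derivatives directly. We choose $k\simeq |\log\eps|$ so that this error is at most $\eps/2$.

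\textbf{Step 2: ReLU emulation of $S_k(t)$.} We write $S_{kj}(t)$ in the tensor \Tsch\ basis, $\sum_{\Bnu\in\{0,\dots,k\}^n} c_{\Bnu,j}\prod_{i=1}^n T_{\nu_i}(x_i)$, with coefficients bounded uniformly in $k$. We then use the Yarotsky/Telgarsky constructions in their $W^{1,\infty}$ form (as in \cite{OpChSII}):
\begin{itemize}
\item approximate multiplication $\tilde\times_\delta$ on $[-M,M]^2$, with error at most $\delta$ in $W^{1,\infty}$ and size $O(\log(M/\delta))$;
\item univariate polynomials of degree $k$, via approximate multiplication, with size $O(k\log k + k\log(1/\delta))$.
\end{itemize}
We realize all univariate \Tsch\ polynomials $T_0,\dots,T_k$ in each input $x_i$ in parallel. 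We then form the products $\prod_i T_{\nu_i}(x_i)$ through a binary tree of $\lceil\log_2 n\rceil$ levels of $\tilde\times_\delta$, and take the final linear combination in the output layer. Errors propagate through the product tree with factors bounded by $n$ times the uniform bound $\max(1,\sup|T_\nu|)=1$. The derivative errors are controlled by the product rule, since every factor is bounded in $W^{1,\infty}$ by $k^2$. Choosing $\delta = \eps\, k^{-c}(k+1)^{-n}$ keeps the total error at most $\eps/2$, with $\log(1/\delta)=O(|\log\eps|)$.

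\textbf{Step 3: counting and the main obstacle.} There are $(k+1)^n$ product terms. Each costs $O(n\log(1/\delta))=O(|\log\eps|)$ neurons for fixed $n$, so the total size is $O(k^n|\log\eps|)=O(|\log\eps|^{n+1})$, as claimed.

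The delicate point is the $W^{1,\infty}$ control of the emulated products, for two reasons.
\begin{itemize}
\item ReLU networks are only piecewise linear. The derivative estimate for $\tilde\times_\delta$ holds a.e. and requires its inputs to stay in the range $[-M,M]$, so intermediate approximations must not leave that range. We handle this by clipping intermediate values with ReLU, for instance $\min(\max(\cdot,-1-\delta),1+\delta)$, which is itself a small ReLU network. We then use the known estimate $|\partial_a\tilde\times(a,b)-b|\le\delta$ a.e. from \cite{OpChSII}.
\item The factor $k^2$ from Markov's inequality must be absorbed into $\delta$. This costs only $O(\log k)$ extra neurons per multiplication and does not change the rate.
\end{itemize}
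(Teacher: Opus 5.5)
Your argument is correct, but it is more self-contained than the paper's proof. The paper never uses Theorem~\ref{thm:StIVP}. It places a closed polyellipse $\overline{\calE_\rho}$ inside the complex neighbourhood where $S(t)$ is holomorphic and applies \cite[Theorem~3.6]{OSZ22} to each component. That gives directly a ReLU network of size at most $N$ and depth $O(N^{1/(n+1)}\log N)$ with $W^{1,\infty}$ error $C\exp(-bN^{1/(n+1)})$; choosing $N\simeq|\log\eps|^{n+1}$ finishes the proof. Your Steps 1--3 essentially unpack that cited theorem: a $W^{1,\infty}$ \Tsch\ approximation from coefficient decay and Markov's inequality, then Yarotsky-type emulation of the tensor-product \Tsch\ polynomials, with $\delta$ absorbing the polynomial-in-$k$ error amplification. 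The citation gives brevity and a depth bound for free. Your version shows where the exponent $n+1$ comes from: $(k+1)^n$ terms at $O(|\log\eps|)$ neurons each. It also correctly recognises that the $L^\infty$ bound \eqref{eq:StSpcErr} alone cannot yield a $W^{1,\infty}$ statement. The paper's lead-in remark about combining Theorem~\ref{thm:StIVP} with the triangle inequality glosses over this point, and its proof sidesteps it. A few minor points: the \Tsch\ coefficients actually decay like $\rho^{-|\Bnu|_1}$, which is stronger than the $|\Bnu|_\infty$ decay you use. Clipping is not needed: $|T_m|\le 1$ and the accumulated errors keep every intermediate value in $[-2,2]$, so $M=2$ suffices. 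Finally, known simultaneous emulations of $T_0,\dots,T_k$ may carry an extra logarithmic factor in $k$, but this is of lower order than $k^n|\log\eps|$ and does not change the count.
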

\begin{proof}
We fix $t>0$ and recall that 
the map $S(t): K\to \mathbb{R}^n$ 
is real analytic, component-wise. 
It hence admits a holomorphic extension to a neighborhood 
$K_{\mathbb C}$ of $K$ in $\mathbb{C}^n$. 
Therefore, 
there exists a poly-ellipse $\calE_\rho \subset {\mathbb C}^n$ 
such that 
$$
K= [-1,1]^n \subset \calE_\rho \subset K_{\mathbb C} \subset {\mathbb C}^n 
$$
with strict inclusions and %
$\rho>1$  such that  the map $S(t)$ is holomorphic in $\overline{\calE_\rho}$, 
component-wise. 

Applying \cite[Theorem~3.6]{OSZ22} to emulate each component of $S(t)$, 
we obtain that 
$$
\| S(t) - \tilde{S}_k(t) \|_{W^{1,\infty}([-1,1]^n,\mathbb{R}^n)}
\leq 
C \exp(-b N^{1/(n+1)})\;,
$$
{with constants $b = b(\rho, n)>0$ and $C = C(S, \rho, n)$.}
{By  \cite[Theorem~3.6]{OSZ22} it further holds that}
$$
{\rm size}(\tilde{S}_k(t)) \leq %
N \;,
\quad 
{\rm depth}(\tilde{S}_k(t)) \leq C N^{\frac{1}{n+1}} \log(N),
$$
{where the size refers again to the number of trainable parameters, and depth refers to the depth of the neural network $\tilde S_k$.}
{We then obtain the result by choosing $N$ such that $C \exp(-b N^{1/(n+1)}) \le \eps$.}
\end{proof}
 
The preceding example featured key ideas in the construction of neural operators:
a mathematically well-defined map ($S(t)$ in \eqref{eq:D2S}) is replaced
by a hierarchical sequence of finite-parametric, computable surrogate 
maps ($S_k(t)$ in Theorem~\ref{thm:StIVP}) 
with convergence guarantees (the error bounds \eqref{eq:StSpcErr}, \eqref{eq:StSpcN}).
They also indicate several so-called 
\emph{approximation architectures} (spectral- and neural) for the construction
of finite-parametric surrogates.
The surrogates $S_k$ are constructed for a fixed time-hori{zon} $T$,
but embody in principle full knowledge on the evolution of solutions,
for all times $t>0$, on ground of a) their convergence, as $k\to \infty$, to the 
exact evolution map, and b) the semigroup property \eqref{eq:SG} which 
could be mimicked in order to 
\emph{obtain evolution maps for arbitrary multiples of $T$ by composition}
of $S_k$.
{Naturally, the error of such compositions may accumulate,
and finding stable frameworks for long-time simulation remains an active research area.} 
In this connection, it is important that both, polynomials
and deep neural networks, are closed under composition, if we allow
increases in polynomial degree {and} neural network depth{, respectively}.

In the preceding example, both inputs and output were from possibly 
high-, but finite-dimensional Euclidean space.
Once partial differential equations (PDEs) are considered,
infinite-dimensional in- and outputs must be admitted,
as input and output spaces of PDEs are function spaces.
This mandates extra building blocks in the construction of surrogates:
on the input side, \emph{encoders} which sample function space input
data, and reduce it to finite-parametric signals (typically, finite 
sequences of expansion coefficie{nts}, or ``signals'') which, subsequently,
are processed by a parametric surrogate map of the type of $S_k$ 
considered above. Finally, each output vector provided by $S_k$ must,
in turn, be decoded into a corresponding element of the output (function)
space, by a decoder block. This naturally leads to the so-called 
\emph{encoder-approximator-decoder} architectures, which are shared by 
a large number of neural operator which have been proposed recently.
We conclude discussion of this simple example with the remark, 
that the ODE initial-data-to-solution surrogate can be used 
as a building block in operator networks for the solution of 
linear transport equations {with non-constant advection field} 
as considered in \cite{LaakPhP21}.
For analytic transport fields, Theorem~\ref{thm:NNStIVP} again
implies exponential expressivity.
{
Alternative operator network constructions for linear, hyperbolic evolution PDEs
leverage NN approximations in mathematical solution representations,
e.g. in linear PDEs via fundamental solutions or superpositions of waves.
See, e.g., \cite{cho2025learninglowrankneural,RimWelperLRNR} and the references there.
}

\begin{codebox}
As an illustrative example of an IVP we consider the Lotka-Volterra equations in notebook \texttt{6\_IVP.ipynb}. 
These equations model the dynamics of a population of two species in a predator-prey relationship. 
The notebook further provides experiments in which the mapping of an initial state 
to the state at a time $T$ is approximated with a polynomial interpolant and with a feedforward network.
\end{codebox}
{
\subsection{Parabolic Initial-Boundary Value Problems}
\label{sec:ParIVP}
Operator Learning within the present, abstract framework is also
possible for evolution PDEs. Consider the nonstationary version
of the diffusion equation \eqref{eq:BVP}. 
With the notation of Section~\ref{sec:RD}, we are additionally 
given a finite time-horizon $0<T<\infty$ and a space-time 
cylinder $(0,T)\times \domain$.
We assume given a (time-independent, for simplicity) diffusion
coefficient $a\in L^\infty_{a_-}(\domain)$ as in \eqref{eq:a+},
initial data ${y}_0 \in H:= L^2(\domain)$, 
and a time-dependent source $f(\spatialvar,t)$ with $f\in L^2(0,T;H)$.

Then the 
\emph{linear, parabolic initial-boundary value problem}
reads: find $y(\eta,t)$ such that 
\begin{equation}\label{eq:IBVP}
  \begin{aligned}
    \partial_t {y} - \nabla_\spatialvar \cdot(a(\spatialvar)\nabla_\spatialvar {y}) &= f %
                                                                                                                 &&\qquad\text{in } (0,T)\times \domain\;,\\
    {y}(\cdot,t)|_{\partial\domain} & = 0 %
                                                                                                   &&\qquad\text{for }\;\; 0<t\leq T \;,\\
    {y}(\cdot,0) & =  y_0 %
                                                                                                   &&\qquad\text{in the sense of }
                   \; H \;.
                       \end{aligned}
\end{equation}
For fixed, sufficiently regular source term $f$ and for fixed ${y}_0\in H$,
the evolution equation \eqref{eq:IBVP} is well-posed, for 
$a\in  L^\infty_{a_-}(\domain)\subset \calX = L^\infty(\domain)$,
with the solution space $\calY$ now given by the 
\emph{intersection-type Bochner space}
\begin{equation}\label{eq:xtSpace}
\calY = L^2(0,T;H^1_0(\domain)) \cap H^1(0,T;H^{-1}(\domain)) \;.
\end{equation}
We refer to \cite[Sec.~5]{ChSRobSt2009} and the references there 
for the proof of unique solvability and 
to \cite[Sec.~2.3]{harbrecht2026neuralshapeoperatorsurrogates}
for a proof of analyticity of the coefficient-to-solution map 
$\calG:  L^\infty_{a_-}(\domain) \to \calY: a\mapsto {y}$.

Corresponding space-time formulations for nonlinear evolution equations
are found in \cite{ChSRSt17}.
}
\section{Representation Systems}
\label{sec:representation}
In practice, any computable\footnote{
We assume here \emph{exact numerical computation}. 
However,
practical DL algorithms realize NN weights and biases in float{ing} point arithmetic, 
often in mixed or reduced precision, e.g.\  in connection with \emph{NN quantization}.
Low floating point precision in NN parameters can be compensated by increased 
depth, see \cite[Section~4]{ou2024quantizationregimesrelunetworks}.
In quantum DL, NN weights and biases are represented via {qubits} (see, e.g.\  \cite{Ghosh_2019,GonJacQDL25} 
and the references there).
}
approximation $\hat \cG$ of $\cG:\cX\to\cY$ requires discretization, 
as a digital computer cannot directly handle infinite dimensional objects,
such as elements in function spaces.
The first discretization concerns the operator input in $\cX$ and performs a dimension reduction.
Input discretization is referred to as the {\bf encoder}, $\cE$. 
The encoder is a mapping from
\begin{equation*}
  \cE:\cX\to\R^{d_\calX},
\end{equation*}
for some $d_\calX \in \N$. 
The encoded input is subsequently transformed by an {\bf approximator} or {\bf coefficient map}
\begin{equation*}
  g:\R^{d_\calX}\to \R^{d_\calY},
\end{equation*}
for some $d_\calY \in\N$. 
To obtain an element of the output space $\cY$, we finally apply a {\bf decoder}
\begin{equation*}
  \cD:\R^{d_\calY} \to \cY.
\end{equation*}

Any operator approximation scheme which we consider here is 
then of the generic ``Encoder-Approximator-Decoder'' form
\begin{equation}\label{eq:DgE}
  \hat \cG = \  \cD \circ g \circ \cE\simeq \cG.
\end{equation}
For particular architectures, 
the choice of encoder $\cE$ and decoder $\cD$ plays a central role, 
in particular with respect to 
how the inputs (from function space) are accessed (via the encoder $\cE$), 
and 
how the outputs (into function space) are represented (via the decoder $\cD$).
For other architectures, 
$\cE$ and $\cD$ 
are only given implicitly and may not even be explicitly mentioned in the description of the architecture.
Moreover, the precise boundary between what is considered part of the encoder or decoder 
and what is part of $g$ is generally not canonical and to some extent subject to interpretation.

If $\cX$ and $\cY$ are function spaces, 
the encoder $\cE$ typically evaluates a function at a finite set of sampling points, 
possibly followed by some form of transformation {such as, e.g.,}
{a linear or nonlinear dimension reduction.} %
The decoder $\cD$ usually either maps a finite set of function values 
to some form of interpolant (e.g., a piecewise multilinear interpolant), 
or takes coefficients in a chosen representation system, and maps them via 
a linear or nonlinear transformation to an element of $\cY$. 
The encoder and decoder may be fixed a priori or learned from data. 
The coefficient map $g$ is typically learned.
In this and the following section we discuss some concrete examples.

\subsection{Stable linear representations}\label{sec:stable}
A popular class of encoder/decoder maps in scientific machine learning 
is made up by bounded linear maps. 
In this case
\begin{equation*}
  \cE:=\begin{cases}
         \cX\to\R^{d_\cX}\\
         a\mapsto (\varphi_j(a))_{j=1}^{d_\cX}
       \end{cases}
       \qquad\text{and}\qquad
  \cD:=\begin{cases}
         \R^{d_\cY}\to \cY\\
         (c_j)_{j=1}^{d_\cY}\mapsto \sum_{j=1}^{d_\cY} c_j {\psi}_j,
       \end{cases}
\end{equation*}
with $\varphi_j\in \cX'$ and $\psi_j\in \cY$ for all $j$.

Encoder and decoder sequences 
must be chosen carefully to capture the essential features of the operator's input and output: 
any information in the kernel of $\cE$ is lost, and anything outside the range of $\cD$ 
cannot be approximated by~\eqref{eq:DgE}. 
It is thus clear that \eqref{eq:DgE} can only provide a good approximation if $\ker(\cE)$ 
spans a subspace of input features that are of little importance, 
while ${\rm im}(\cD)$ describes the most important features of the output. 
We therefore wish to understand which functionals $\varphi_j$ 
provide the most information to reconstruct $a$ from $(\varphi_j(a))_{j=1}^m$. 
To make this mathematically concrete, 
we introduce the concept of \emph{Banach frames}. 
We opt for the definition in \cite[Section 1]{KHGroech}, 
and \emph{assume the Banach spaces $\calX$ and $\calY$ 
to be isomorphic to (weighted) $\ell^p$ sequence space}. 
This includes a wide range of separable function spaces of 
Sobolev- and Besov-Triebel-Lizorkin type, see 
\cite{TriebelWavelets2008,TriebelBases2010} and the references there.

\begin{definition}\label{def:frame}
  Let $\cZ$ be a separable Banach space\footnote{We use the generic notation $\cZ$ 
  when introducing concepts relevant both for the input space $\cX$ and the output space $\cY$.} and let $p\in [1,\infty)$.

  A family $(\varphi_{\cZ,j})_{j\in\N}$ of functionals in $\cZ'$
  and elements $(\psi_{\cZ,j})_{j\in\N}$ in $\cZ$
  is called a {\bf $p$-Banach frame} for $\cZ$, if
  \begin{enumerate}
  \item the {\bf analysis operator}
    \begin{align*}
      A_\cZ:&\cZ\to\ell^p(\N):\;
      z \mapsto (\varphi_{\cZ,j}(z))_{j\in\N}
    \end{align*}
    is continuous,
  \item\label{item:frconst} the norms $\norm[\cZ]{z}$ and $\norm[\ell^p]{(\varphi_{\cZ,j}(z))_{j\in\N}}$ 
   are equivalent on $\cZ$,
  \item\label{item:synthesis} the {\bf synthesis operator}
    \begin{align*}
      S_\cZ:&\ell^p(\N)\to \cZ : \;
      (c_j)_{j\in\N}\mapsto \sum_{j\in\N}c_j\psi_{\cZ,j}
    \end{align*}
    is a bounded linear map satisfying
    \begin{equation}\label{eq:SAid}
      S_\cZ(A_\cZ(z))=z
    \end{equation} 
      for all $z\in\cZ$.
  \end{enumerate}
\end{definition}
\emph{Linear Encoders} $\calE_\cZ$ can be obtained 
by combining $A_\cZ$ with $d_\cZ$-term truncation to the first
$d_\cZ$ coefficients of the output of $A_\cZ$. 
\emph{Linear Decoders} $\calD_\cZ$ can be obtained 
by padding a finite sequence in $\R^{d_\cZ}$ 
with zeros to an element of $\ell^p(\N)$, 
and then applying $S_\cZ$.
\begin{remark}\label{rmk:hilbert}
  If $p=2$ and $\cZ$ is a separable Hilbert space, then
  (i) and (ii) imply the existence of $(\psi_{\cZ,j})_{j\in\N}$ satisfying (iii), 
  see e.g.\  \cite{Heil}.
\end{remark}

For a $p$-Banach frame $(\varphi_{\cX,j},\psi_{\cX,j})_{j\in\N}$ on $\cX$, 
{one can estimate the information lost when working with the 
  $d$-term truncations of (sequences of) coefficients $(\varphi_{\cX,j}(a))_{j=1}^d$
  as follows: Motivated by \eqref{eq:SAid}, we define a \emph{reconstruction map} $R$ via
\begin{equation*}
  R:=\begin{cases}
                 \R^d\to \cZ\\
                 (c_j)_{j=1}^d\mapsto \sum_{j=1}^d c_j\psi_{\cZ,j}.
               \end{cases}
\end{equation*}
Then Definition \ref{def:frame} \ref{item:synthesis} implies the existence of a constant $C$ such that
for all $a\in\cX$
\begin{align*}
  \norm[\cX]{a-R(\varphi_{\cX,j}(a))}&=
  \norm[\cX]{S_{\cX}(A_{\cX}(a))-S_{\cX}((\varphi_{\cX,1}(a),\dots,\varphi_{\cX,d}(a),0,0,\dots)}\\
  &\le C \norm[\ell^p]{(\varphi_{\cX,j}(a))_{j>d}}.
\end{align*}
}

{This provides a rough, first} guiding criterion for determining how many coefficients $d$
are needed\footnote{This also depends on the particular way in which the basis/frame elements are enumerated.}
to retain sufficient information about the operator input $a\in\cX$. 
Analogous considerations hold for the representation of the operator output $y$ via a $p$-Banach frame $(\varphi_{\cY,j},\psi_{\cY,j})_{j\in\N}$ in $\cY$. 
We will discuss this in more detail in Section~\ref{sec:Arch}. 
This motivates the en- and decoder choice
\begin{equation}\label{eq:EDframe}
  \cE(a)=(\varphi_{\cX,j}(a))_{j=1}^{d_\calX} \qquad\text{and}\qquad
  \cD((c_j)_{j=1}^{d_\calY})=\sum_{j=1}^{d_\calY} c_j\psi_{\cY,j}.
\end{equation}
The appeal of using Banach frames in operator learning lies in the continuity of both the coefficient-to-function map and its inverse, which ensures stability in either direction.

\begin{example}\label{ex:frames}
  Typical examples include the following:
  \begin{itemize}
  \item \textbf{Orthonormal bases in Hilbert spaces:}  
    Suppose that $(\psi_j)_{j\in\N}$ is an orthonormal basis of the separable Hilbert space
    $\cX$. Using Parseval's identity, it is straightforward to check that
    \begin{equation*}
      \varphi_j(a) \dfn \langle a, \psi_j \rangle_{\cX} \qquad \forall j\in\N
    \end{equation*}
    forms a $2$-Banach frame for $\cX$.

  \item \textbf{Frames in Hilbert spaces:}
More generally, let $(\psi_j)_{j\in\N}$ be a \textbf{frame for a Hilbert space} $\cX$ \cite{Heil}, 
meaning there exist \emph{frame bounds} $\lambda$, $\Lambda > 0$ such that
 \begin{equation}\label{eq:lambdaLambda}
     \lambda \norm[\cX]{a}^2 \le \sum_{j\in\N} |\dup{a}{\psi_j}_\cX|^2 \le \Lambda \norm[\cX]{a}^2\qquad \forall a\in\cX.
 \end{equation}
Setting $\varphi_j(a) \dfn \dup{a}{\psi_j}_\cX$, 
it is again easy to see that \eqref{eq:lambdaLambda} 
guarantees the conditions of Definition~\ref{def:frame} for $p=2$. 
  \item \textbf{Besov-Triebel-Lizorkin spaces:}  
    Let $\cX = B^s_{p,q}(\domain)$ (Besov space) or $\cX = F^s_{p,q}(\domain)$ (Triebel-Lizorkin space) for suitable parameters $(s,p,q)$. 
    For such spaces there exists (on suitable domains $D$, 
    see \cite{TriebelWavelets2008,TriebelBases2010} for details) 
    a countable system of functions $(\psi_j)_{j\in\N}$ (e.g., wavelets) 
    such that any $a\in\cX$ can be represented as
    \begin{equation*}
        a = \sum_{j\in\N} c_j \psi_j,
    \end{equation*}
    and the norm $\norm[\cX]{a}$ is equivalent to a 
    (possibly weighted) $\ell^p$-norm of the coefficient sequence $(c_j)_{j\in\N}$. 
    This isomorphism between the function space 
    and a sequence space provides the structure of a Banach frame. 
    See \cite[Corollary 5.44]{TriebelWavelets2008} for more details.

\item \textbf{Biorthogonal spline-wavelet bases of hilbertian Sobolev spaces in domains:}
     These systems are Riesz-bases and, formally, special cases of frames. 
     Concrete constructions of such systems have been developed over the past decade, we 
     refer to \cite{RekRSt18,DKU} and references there for examples. 
     Figure~\ref{fig:wavelet} shows a simple biorthogonal spline wavelet system 
     in the univariate domain $D=(0,1)$, which constitutes a Riesz basis 
     for the space $\calX = \{v \in H^1(\domain)\mid v(0) = 0 \}$.
     {We refer to \cite[Sec.~6]{ChSRobSt2009} for a construction of space-time spline-wavelet
         systems for the Bochner space $\calY$ in \eqref{eq:xtSpace} for 
         parabolic initial-boundary value problems such as \eqref{eq:IBVP}.
        }
  \end{itemize}
\end{example}
\begin{figure}
\begin{center}
\includegraphics[width=0.75\textwidth]{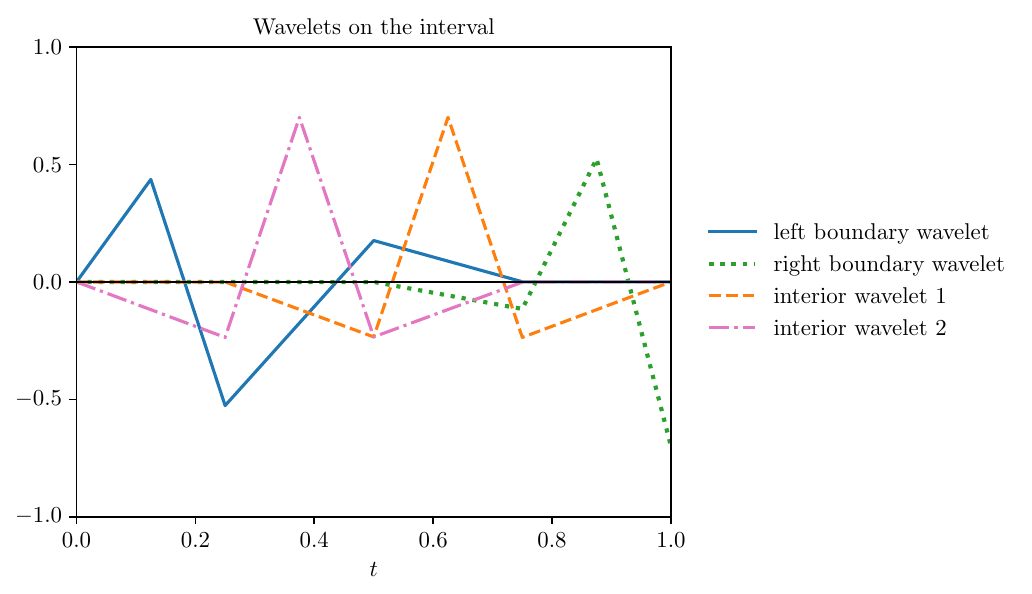}
\caption{Continuous, piecewise linear wavelet basis of 
$\calX = H^1_{\{0\}}(0,1)$ on subdivision level $j=2$ in $I=(0,1)$ 
with two vanishing moments and a homogeneous Dirichlet boundary condition in $t=0$
(imposed only on the left boundary wavelet).
See the appendix of \cite{HaScMZ24} for details and 
\cite{Urban09,DKU,HoangStevFEWav09} for other, {spline- and Finite-Element like} constructions.
\label{fig:wavelet}
}
\end{center}
\end{figure}

\begin{codebox}
We set up an encoder $\cE$ for the parametric diffusion problem 
based on a priori assumption on the input distribution in notebook \texttt{1\_Representation.ipynb}.
Specifically, we assume that the log-normal diffusion coefficient $\log a$ 
can be expressed  in the orthonormal sine basis $\psi_j(x) = \sqrt{2}\sin(j\pi x)$ of $L^2(0,1)$,
i.e., $\log a(x;\, y) = \sum_{j=1}^{d} y_j \, \psi_j(x),$ 
where $y = (y_1,\ldots,y_d) \in \R^d$ is the parameter vector.
The encoder then maps a diffusion coefficient $a$ to (a truncated version of) the parameter vector.
The log-normal form guarantees $a > 0$ pointwise, as required by \eqref{eq:a+},
and the basis $(\psi_j)_{j \geq 1}$ is an instance of the orthonormal basis frame in Example~\ref{ex:frames}.
\end{codebox}

\subsection{PCA}\label{sec:PCA}
The approach in Section~\ref{sec:stable} is sensible when Banach frames of $\cX$ and $\cY$ are {known a priori} and constructable. However, in many applications the inputs of $\cG$ 
 {are assumed to be i.i.d.\ realizations of a random variable taking values in $\cX$ 
     with i.i.d.\ understood with respect to a {probability} measure $\mu$ on $\cX$.}
{The support of a finite sample of input data for $\cG$} 
may occupy only a small subset of $\cX$. 
Similarly, the outputs follow the pushforward measure\footnote{{For a
  measure space $(\cX,\CB,\mu)$, a measurable space
  $(\Omega,\CA)$, and a measurable map $T:\cX\to\Omega$,
  we denote by $T_\sharp\mu$ the so-called \emph{pushforward measure} on $(\Omega,\CA)$;
  it is defined as $(T_\sharp\mu)(A):=\mu(\set{x\in \cX}{T(x)\in A})=\mu(T^{-1}(A))$ for all $A\in\CA$.}}
$\cG_\sharp\mu$, which, 
depending on $\cG$, may be supported on a small subset of $\cY$. 
In these cases, even if Banach frames of $\cX$ and $\cY$ are available, 
they may not yield the most efficient representation systems.

\subsubsection{Formulation}
{For efficient compression, we aim to determine a}
best approximating $d$-dimensional subspace of a data distribution $\mu$ %
on a space $\cZ$. 
{Mathematically, this can be formulated as an optimization problem of the type}
\begin{equation}\label{eq:pcaobj}
  \inf_{\substack{V\subseteq \cZ,~{\rm dim}(V)=d\\ z_0\in \cZ}} \bbE_{z\sim \mu}[\inf_{v\in V}\norm[\cZ]{z-(z_0+v)}^p],
\end{equation}
for a suitable {$p\ge 1$ (and assuming finite $p$th moment of $\mu$).} 
We are thus looking for the best approximating $d$-dimensional affine subspace $z_0+V$ of $\cZ$. 
In general, this optimization problem need not have a unique solution, and even if it does, 
it might be difficult to find in practice. 
In the following we thus only focus on 
{the case in which} $p=2$ and $\cZ$ {is} a Hilbert space;
{This is known as functional PCA, and} in this case the solution can be described explicitly.
{We comment on the usual PCA in finite dimensional spaces in Remark \ref{rmk:PCAfinite}.}

To explain how, we assume %
$\mu$ is a
probability measure on $\cZ$ with finite first and second moments, i.e.\
\begin{equation}\label{eq:muconditions}
    \int_{\cZ}\norm[\cZ]{z}\dd\mu(z)<\infty,\qquad \int_{\cZ}\norm[\cZ]{z}^2\dd\mu(z)<\infty.
\end{equation}
For a $d$-dimensional subspace $V\subseteq\cZ$ and with the orthogonal projection $P_V:\cZ\to V$ it holds for any $z\in\cZ$ \begin{equation*}
  \norm[\cZ]{z-P_Vz}=\inf_{v\in V}\norm[\cZ]{z-v}.
\end{equation*}
Therefore, the objective in \eqref{eq:pcaobj} corresponds to
\begin{equation*}
  \bbE_{z\sim \mu}\norm[\cZ]{z-z_0-P_V(z-z_0)}^2
  =
  \bbE_{z\sim \mu}\norm[\cZ]{(z-P_Vz) - (z_0-P_Vz_0)}^2.
\end{equation*}
The element in $\cZ$ best approximating the random variable 
$\tilde z:=z-P_Vz$ in the least-squares sense 
is $\bbE[\tilde z]=\bbE[z]-\bbE[P_Vz]=\bbE[z]-P_V\bbE[z]$ 
(see Lemma \ref{lemma:averagebest} ahead). 
For this reason, we fix %
\begin{equation}\label{eq:z0}
  z_0 = \bbE_{z\sim\mu}[z] = \int_{\cZ}z\dd\mu(z).
\end{equation}
By Pythagoras' theorem
\begin{equation*}
  \norm[\cZ]{z-P_Vz}^2 = %
  \norm[\cZ]{z}^2-\norm[\cZ]{P_Vz}^2.
\end{equation*}
The optimization problem \eqref{eq:pcaobj} for $p=2$ can now be simplified to
\begin{equation*}
  \inf_{V\subseteq \cZ,~{\rm dim}(V)=d} \bbE_{z\sim \mu}[\norm[\cZ]{(z-z_0)-P_V(z-z_0)}^2]
  =
  \inf_{V\subseteq \cZ,~{\rm dim}(V)=d} \Big(\bbE_{z\sim \mu}[\norm[\cZ]{z-z_0}^2]-
  \bbE_{z\sim \mu}[\norm[\cZ]{P_V(z-z_0)}^2]\Big).
\end{equation*}
Thus, $V$ is equivalently characterized as the subspace \emph{maximizing}
\begin{equation}\label{eq:pcamaxobj}
  \bbE_{z\sim \mu}[\norm[\cZ]{P_V(z-z_0)}^2].
\end{equation}
For this reason, PCA is often described as finding the linear subspace $V$ 
on which the \emph{data projection onto $V$ has maximal variance}.
\subsubsection{Characterization of $V$}
\label{sec:CharV}
Introduce the covariance operator $\covar:\cZ\to\cZ$ of $\mu$ via
\begin{equation}\label{eq:covariance}
\dup{\covar x}{y}_\cZ=\int_{\cZ}\dup{x}{z-z_0}_\cZ\dup{y}{z-z_0}_\cZ\dd\mu(z) \qquad \forall x,y \in \cZ.
\end{equation}
As is well-known, under assumption \eqref{eq:muconditions}, 
$\covar:\cZ\to\cZ$ as in \eqref{eq:covariance} is a well-defined, 
compact, and self-adjoint linear operator, 
see, e.g.\ \cite{StSc2012} and references there. 

\begin{proposition}\label{prop:PCA}
Let $\cZ$ be a Hilbert space {and} let $\mu$ be a probability measure on $\cZ$ 
with finite first and second moments. 
Then, for any $d\in\mathbb{N}$,
a $d$-dimensional subspace $V \subseteq \cZ$
maximizes \eqref{eq:pcamaxobj} 
if and only if $V$ is a span of eigenvectors of $\covar$ corresponding to the $d$ largest eigenvalues.
\end{proposition}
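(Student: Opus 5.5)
The plan is to rewrite the objective \eqref{eq:pcamaxobj} as a trace functional of the covariance operator $\covar$ and then apply the Ky Fan maximum principle for compact self-adjoint operators.

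\textbf{Step 1: reduction to a trace.} Fix a $d$-dimensional subspace $V$ and an orthonormal basis $(v_1,\dots,v_d)$ of $V$. Then $\norm[\cZ]{P_V(z-z_0)}^2=\sum_{i=1}^d\dup{v_i}{z-z_0}^2$. Integrating against $\mu$ and using \eqref{eq:covariance} with $x=y=v_i$ gives
\begin{equation*}
  \bbE_{z\sim\mu}\norm[\cZ]{P_V(z-z_0)}^2=\sum_{i=1}^d\dup{\covar v_i}{v_i}=\tr(P_V\covar P_V).
\end{equation*}
Exchanging the finite sum with the integral is justified by \eqref{eq:muconditions}. The maximization problem is therefore to maximize $\sum_{i}\dup{\covar v_i}{v_i}$ over orthonormal $d$-tuples.

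\textbf{Step 2: spectral setup.} The operator $\covar$ is compact, self-adjoint and positive semidefinite, since $\dup{\covar x}{x}=\int\dup{x}{z-z_0}^2\dd\mu\ge0$. By the spectral theorem there is an orthonormal eigenbasis $(e_k)$ of $\overline{\operatorname{im}\covar}$ with eigenvalues $\lambda_1\ge\lambda_2\ge\dots\ge0$, and $\covar$ vanishes on $\ker\covar$. Expanding each $v_i$ gives
\begin{equation*}
  \sum_i\dup{\covar v_i}{v_i}=\sum_k\lambda_k w_k,\qquad w_k:=\sum_{i=1}^d\dup{v_i}{e_k}^2 .
\end{equation*}
The weights satisfy $0\le w_k\le1$, because $w_k=\norm{P_Ve_k}^2$, and $\sum_k w_k\le d$, because $\sum_k w_k\le\sum_i\norm{v_i}^2=d$.

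\textbf{Step 3: the bound.} A linear functional with nonincreasing nonnegative coefficients, maximized over such weights, attains at most $\sum_{k=1}^d\lambda_k$. Concretely, $\sum_k\lambda_k w_k-\sum_{k\le d}\lambda_k=\sum_{k\le d}\lambda_k(w_k-1)+\sum_{k>d}\lambda_kw_k\le\lambda_d\bigl(\sum_k w_k-d\bigr)\le0$. The value $\sum_{k\le d}\lambda_k$ is attained by $V=\operatorname{span}(e_1,\dots,e_d)$. If $\covar$ has rank less than $d$, the basis is padded with vectors from $\ker\covar$, which carry eigenvalue $0$. This proves the ``if'' direction, including the case of ties: any span of eigenvectors for the $d$ largest eigenvalues, counted with multiplicity, gives $\sum_k\lambda_kw_k=\sum_{k\le d}\lambda_k$.

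\textbf{Step 4: the converse (expected main obstacle).} Equality must hold in every inequality of Step 3. This forces $w_k=1$ for every $k$ with $\lambda_k>\lambda_d$, that is, $e_k\in V$ for each such $k$. It also forces $w_k=0$ for every $k$ with $\lambda_k<\lambda_d$, and $\sum_k w_k=d$ whenever $\lambda_d>0$. The remaining part of $V$, its orthogonal complement in $V$ of the span of the $e_k$ already shown to lie in $V$, must then lie in the eigenspace of $\lambda_d$, which is finite-dimensional if $\lambda_d>0$. So $V$ is spanned by eigenvectors for the $d$ largest eigenvalues.

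The delicate cases are ties at $\lambda_d$ and the degenerate case $\lambda_d=0$. In the latter, $V$ decomposes into eigenvectors for positive eigenvalues plus a part in $\ker\covar$, and kernel vectors are eigenvectors with eigenvalue $0$. The bookkeeping of these cases, together with the precise meaning of ``span of eigenvectors corresponding to the $d$ largest eigenvalues'' under multiplicity, is where care is needed.
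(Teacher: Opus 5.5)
Your proposal is correct and follows essentially the same route as the paper. Both arguments write the objective as $\sum_i\dup{\covar v_i}{v_i}=\sum_k\lambda_k w_k$ with weights $0\le w_k\le 1$ and $\sum_k w_k\le d$, then maximize this linear functional. Your explicit inequality chain and the equality analysis in Step 4 go further than the paper, which proves the ``only if'' direction only for strictly decreasing eigenvalues and, when eigenvalues coincide, merely asserts that spans of top eigenvectors are maximizers.
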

\begin{proof}
Since $\covar$ is self-adjoint and compact, 
there exists a null-sequence of monotonically decreasing, 
non-negative eigenvalues $(\lambda_j)_{j\in\N}$ 
and corresponding orthonormal eigenvectors $(\psi_j)_{j\in\N}$ in $\cZ$, 
such that $\covar$ has the spectral decomposition
  \begin{equation*}
    \covar z=\sum_{j\in\N}\lambda_j \dup{\psi_j}{z}_\cZ\psi_j\qquad\forall z\in\cZ.
  \end{equation*}

Let $V$ be an arbitrary $d$-dimensional subspace of $\cZ$,
and fix an orthonormal basis $e_1,\dots,e_d$ of $V$. Then
\begin{equation*}
    \norm[\cZ]{P_Vz}^2 = \sum_{i=1}^d \dup{z}{e_i}^2\qquad\forall z\in\cZ,
\end{equation*}
and thus
\begin{equation*}
    \bbE_{z\sim \mu}[\norm[\cZ]{P_V(z-z_0)}^2]
    =\int_{\cZ}\sum_{{i=1}}^d \dup{z-z_0}{e_i}^2\dd\mu(z)
    =\sum_{i=1}^d \dup{\covar e_i}{e_i}
    =\sum_{j\in\N}\lambda_j\underbrace{\sum_{i=1}^d \dup{\psi_j}{e_i}^2}_{=:c_j}.
\end{equation*}
Due to the orthonormality of the $(\psi_j)_{j\in\N}$ and the $(e_i)_{i=1}^d$, 
we have $0\le c_j\le 1$ for all $j\in\N$, and $\sum_{j\in\N}c_j\le d$. 
Assume for the moment that the $\lambda_j$ are \emph{strictly} monotonically decreasing, 
i.e.\ $\lambda_1>\lambda_2>\dots$. 
Then the sum $\sum_{j\in\N}\lambda_j c_j$ is maximized under the above constraints on the $(c_j)_{j\in\N}$ 
if and only if $c_1=\dots=c_d=1$ and $c_j=0$ for $j>d$. 
This is achieved if and only if $\psi_j\in{\rm span}\{e_1,\dots,e_d\}$ for all $j=1,\dots,d$.
In other words $V={\rm span}\{\psi_1,\dots,\psi_d\}$.

Finally, if the eigenvalues are not necessarily strictly decreasing, i.e.\ $\lambda_1\ge \lambda_2\ge\dots$, 
the optimal subspace is not necessarily unique. 
The characterization remains valid however: 
any subspace spanned by an orthonormal set of eigenvectors corresponding to the $d$ largest eigenvalues yields a {maximizer}.
\end{proof}

{
\begin{remark}\label{rmk:PCAfinite}
  Consider the special case of $\cZ=\R^n$ and
  $\mu=\frac{1}{m}\sum_{j=1}^m \delta_{z_j}$ being a discrete
  probability measure on $\R^n$ (here $\delta_{z_j}$ denotes the Dirac delta at
  $z_j\in\R^n$). In this case $\covar:\R^n\to\R^n$ is defined by its action
  on all $x$, $y\in\R^n$ via\footnote{If $z_1,\dots,z_m$, $m\ge 1$,
  are i.i.d.\ draws from some probability measure on $\R^n$, then the
  normalization $1/m$ in \eqref{eq:defCfinite} is typically replaced by
  $1/(m-1)$ to obtain an unbiased estimator.}
  \begin{equation}\label{eq:defCfinite}
    \dup{\covar x}{y}=\frac{1}{m}\sum_{j=1}^m x^\top(z_j-z_0) (z_j-z_0)^\top y,
  \end{equation}
  where $z_0=\frac{1}{m}\sum_{j=1}^m z_j\in\R^n$. Letting
  $A\in\R^{n\times m}$ be the matrix with columns $z_j-z_0$, the
  covariance operator corresponds to
  \begin{equation*}
    \frac{1}{m}\sum_{j=1}^m (z_j-z_0) (z_j-z_0)^\top = \frac{1}{m}A A^\top\in\R^{n\times n}.
  \end{equation*}
  The optimal $d$-dimensional subspace is then obtained by the span of
  the $d$ leading eigenvectors of $\covar$ (or equivalently of
  $AA^\top$). For numerical stability, in practice this can be done by computing
  the singular value decomposition of $A$.
\end{remark}
}

\begin{remark}[Error of PCA]\label{rmk:PCAerror}
Denote again by $\lambda_1\ge \lambda_2\ge\dots$ the eigenvalues of the covariance operator $\covar$. 
As in the preceding proof, 
it holds {that}
\begin{equation*}
  \bbE_{z\sim\mu}[\norm[\cZ]{z-z_0}^2] = \sum_{j\in\N}\lambda_j.
\end{equation*}
Together with the estimates shown in the proof this yields the error
\begin{equation*}
  \bbE_{z\sim\mu}[\norm[\cZ]{z-z_0-P_V(z-z_0)}^2] = \sum_{j>d}\lambda_j
\end{equation*}
for an optimal $d$-dimensional subspace $V$. %
\end{remark}

\subsubsection{Computation of Encoder and Decoder}
\label{sec:CompEncDec}
In the case of Hilbert spaces, 
``optimal'' (in the sense of minimizing the least squares objective \eqref{eq:pcaobj} for $p=2$) 
encoders and decoders can be found via PCA: 
let $x_0$ be the expected value of $\mu$ on $\cX$ and let $y_0$ the expected value of $\cG_\sharp\mu$ on $\cY$. 
Let the PCA yield the orthonormal vectors $(\psi_{\cX,j})_{j=1}^{d_\cX}$ and $(\psi_{\cY,j})_{j=1}^{d_\cY}$ in $\cX$, $\cY$, respectively. 
Then the corresponding encoder and decoder are given by
\begin{equation}\label{eq:EDPCA}
  \cE(x)=(\dup{\psi_{\cX,j}}{x-x_0})_{j=1}^{d_\cX}\qquad\text{and}\qquad
  \cD((c_j)_{j=1}^{d_\cY})=y_0+\sum_{j=1}^{d_\cY}c_j\psi_{\cY,j}.
\end{equation}

In practice, $\mu$ and $\cG_\sharp\mu$ might {be unknown} or intractable, but we have access to data sample{s} $(x_j,\cG(x_j))_{j=1}^m$ with $x_j\overset{{\rm iid}}{\sim}\mu$. 
In this case, we can still perform the empirical version of PCA. 
We outline the procedure for the input data:
\begin{enumerate}
\item {\bf Center the data.} Let $\bar x_j:=x_j-\bar x$, where $\bar x=\frac{1}{m}\sum_{j=1}^m x_j$ is the 
sample average.
\item {\bf Compute the empirical covariance operator.} Let
  \begin{equation*}
    \hat\covar x =\frac{1}{m-1}\sum_{j=1}^m\dup{x}{\bar x_j}_\cX \bar x_j.
  \end{equation*}
  Note that $\hat\covar:\cX\to {\rm span}\{\bar x_1,\dots,\bar x_m\}$. 
  For numerical computations, 
  $\hat\covar$ can be represented in a finite dimensional basis of ${\rm span}\{\bar x_1,\dots,\bar x_m\}$, yielding a matrix.
\item {\bf Solve the eigenvalue problem.}
  Compute the eigendecomposition of the symmetric matrix $\hat\covar$ {(cf.~Remark \ref{rmk:PCAfinite})}
\begin{equation*}
    \hat\covar \hat{\psi}_{\cX,j} = \hat{\lambda}_j \hat{\psi}_{\cX,j}.
\end{equation*}
\item \textbf{Select the subspace.} 
      Choose the $d$ (orthonormal) eigenvectors $(\hat{\psi}_{\cX,j})_{j=1}^d$ corresponding to the 
      $d$ largest eigenvalues $(\hat{\lambda}_j)_{j=1}^d$. 
      These vectors are used to define the encoder in \eqref{eq:EDPCA}.
\end{enumerate}

{
  \begin{remark}
It might seem easier to work with an orthonormal basis than a frame; in particular, frames can be redundant, meaning the expansion of an element of the Hilbert/Banach space in the frame is in general not unique. 
However, frames offer the crucial advantage of being computationally tractable. 
Finding an orthonormal basis of Sobolev spaces on standard domains like cubes is straightforward (e.g.,  Fourier modes); 
however, real-world problems are rarely posed on such simple domains. 
Finding an orthonormal basis for a Sobolev space on a complicated geometry is a nontrivial {numerical} 
computation in itself. 
Constructions for frames {on the other hand} are known and computable, based, for example, 
on triangulations of the domain, e.g., {\cite{DavydovStevenson2006,HSS08}}. 
They thus naturally integrate with existing numerical solvers and representations of numerical PDE solutions.
\end{remark}
  }

\begin{codebox}
We set up a PCA-based decoder $\cD$ for the parametric diffusion problem in notebook \texttt{1\_Representation.ipynb}.
\end{codebox}

\subsection{Active Subspaces}\label{sec:ActSubSpc}
Recall that our goal is to approximate $\cG:\cX\to\cY$, with inputs stemming from a measure $\mu$ on $\cX$. 
To perform a \emph{dimension reduction on the input space} $\cX$, 
PCA determines a best approximating $d$-dimensional subspace $V$ of $\cX$ with respect to $\mu$. 
This approach is entirely agnostic to $\cG$, however. Suppose for example that $\cG$ 
is constant along a direction $v\in \cX$, i.e.\ for every $x\in \cX$ the map $t\mapsto \cG(x+tv)$ 
is constant for $t\in\R$. 
Even if $v$ corresponds to an important eigendirection of the covariance of $\mu$, 
it should in this case not %
{be included in the reduced subspace $V$, as it provides no information about the output of $\cG$.}

In this section, we present basic results for the active subspace approach, see e.g.\  \cite{PC-ActSubspc}. 

\subsubsection{Formulation}
\label{sec:AcSpcForm}
Assume that $\cX$, $\cY$ are separable Hilbert spaces. 
To take into account the action of $\cG$,  
the active subspace approach replaces the objective \eqref{eq:pcaobj} by
\begin{equation}\label{eq:asobj}
  \inf_{\substack{V\subseteq \cX,~{\rm dim}(V)=d\\ \bar \cG_V:V\to\cY\text{ meas{urable}}}} \bbE_{x\sim \mu}[\norm[\cY]{\cG(x)-\bar\cG_V(P_Vx)}^2],
\end{equation}
where $P_V$ is again the orthogonal projection onto $V$. 
In words, we wish to determine a $d$-dimensional subspace $V\subseteq\cX$ 
which is \emph{best suited for the approximation of $\cG$ under the input measure $\mu$}.
\begin{remark}
For the objective \eqref{eq:asobj} it suffices to consider linear subspaces $V$ 
(rather than affine subspaces $z_0+V$ as in Section \ref{sec:PCA}): 
for a shift $z_0$, 
the expression $\bar \cG(z_0+P_V(z-z_0))$ corresponds to some $\tilde\cG(P_Vz)$. 
The shift is thus absorbed by the minimization over $\bar\cG_V$ in \eqref{eq:asobj}.
\end{remark}
The next lemma will allow to explicitly construct {the optimal $\bar\cG_V$ in \eqref{eq:asobj}.}
{The actual construction of $\bar\cG_V$ is developed following its proof.}

\begin{lemma}\label{lemma:averagebest}
Let $\cY$ be a separable Hilbert space, 
let $\gamma$ be a probability measure on a measure space $\cZ$, 
and let $\cF:\cZ\to\cY$ be measurable. 

Then
  \begin{equation}\label{eq:averagebest}
    \argmin_{y\in \cY}\norm[L^2(\cZ,\gamma)]{\cF(z)-y}=\int_\cZ \cF(z)\dd\gamma(z).
  \end{equation}
\end{lemma}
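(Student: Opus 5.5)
The plan is the standard bias--variance (Pythagoras) decomposition in the Hilbert space $L^2(\cZ,\gamma;\cY)$. First I have to make sure that the right-hand side of \eqref{eq:averagebest} is meaningful. I will assume implicitly that $\cF\in L^2(\cZ,\gamma;\cY)$; otherwise the objective is identically $+\infty$ and the statement is vacuous. Since $\gamma$ is a probability measure, Cauchy--Schwarz gives $\int_\cZ\norm[\cY]{\cF(z)}\dd\gamma(z)\le \norm[L^2(\cZ,\gamma)]{\cF}<\infty$. Together with separability of $\cY$ (Pettis' theorem gives strong measurability), this shows that the Bochner integral $\bar y:=\int_\cZ\cF(z)\dd\gamma(z)\in\cY$ exists. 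Its key property is that bounded linear functionals commute with it: $\dup{\bar y}{w}_\cY=\int_\cZ\dup{\cF(z)}{w}_\cY\dd\gamma(z)$ for every $w\in\cY$.

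Next I minimize the squared norm, which has the same minimizers as the norm. For an arbitrary $y\in\cY$, write $\cF(z)-y=(\cF(z)-\bar y)+(\bar y-y)$ and expand:
\begin{equation*}
\norm[L^2(\cZ,\gamma)]{\cF-y}^2=\int_\cZ\norm[\cY]{\cF(z)-\bar y}^2\dd\gamma(z)+2\int_\cZ\dup{\cF(z)-\bar y}{\bar y-y}_\cY\dd\gamma(z)+\norm[\cY]{\bar y-y}^2 .
\end{equation*}
Here I used $\gamma(\cZ)=1$ for the last term. By the commuting property with the fixed vector $w=\bar y-y$, the cross term equals $2\dup{\bar y-\bar y}{\bar y-y}_\cY=0$. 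The first term does not depend on $y$, so
\begin{equation*}
\norm[L^2(\cZ,\gamma)]{\cF-y}^2=\norm[L^2(\cZ,\gamma)]{\cF-\bar y}^2+\norm[\cY]{\bar y-y}^2 .
\end{equation*}
It follows that $\bar y$ is a minimizer, and that it is the \emph{unique} one, because the second term is strictly positive whenever $y\ne\bar y$. This justifies writing $\argmin$ as an equality with a single element.

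The only step needing care is the justification of the cross-term cancellation. It requires two facts: that the Bochner integral exists, and that it commutes with $\dup{\cdot}{w}_\cY$. Both are standard for Bochner-integrable functions, and the integrability follows from the $L^2$ assumption as explained above. I will state the hypothesis $\cF\in L^2(\cZ,\gamma;\cY)$ explicitly at the start of the proof so that this step is clean. Beyond that, the argument is just the elementary Hilbert-space identity.
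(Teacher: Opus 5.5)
Your proof is correct and follows essentially the same route as the paper. Both arguments expand $\|\cF(z)-(\bar y+v)\|_{\cY}^2$, note that the cross term vanishes by the defining property of the mean $\bar y=\int_\cZ\cF\,\dd\gamma$, and conclude that the objective is strictly larger whenever $v\neq 0$. Your additional care (stating $\cF\in L^2(\cZ,\gamma;\cY)$ explicitly, and justifying both the existence of the Bochner integral and its commuting with $\dup{\cdot}{w}_\cY$) supplies details the paper leaves implicit.
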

\begin{proof}
 Denote the right-hand side in \eqref{eq:averagebest} by $y$,
 and fix an arbitrary $v\in {\cY}$ with $v\neq 0$.
 Then
  \begin{equation}\label{eq:averagebestproof}
    \int_\cZ\norm[\cY]{\cF(z)-(y+v)}^2\dd\gamma(z)
    =\int_\cZ\norm[\cY]{\cF(z)-y}^2-2\dup{\cF(z)-y}{v}+\norm[\cY]{v}^2\dd\gamma(z).
  \end{equation}
  Since $\int_\cZ\dup{\cF(z)-y}{v}\dd\gamma(z)=0$ by definition of $y$, \eqref{eq:averagebestproof}
  is strictly larger than $\int_{\cZ}\norm[\cY]{\cF(z)-y}^2\dd\gamma(z)$.
\end{proof}

Fix a subspace $V\subseteq\cX$ of finite dimension $d$, 
and denote by $V^\perp$ its orthogonal complement. 
In the following, we let
\begin{equation*}
  x\sim \mu,\qquad
  v=P_Vx,\qquad
  w=P_{V^\perp}x,
\end{equation*}
so that in particular $x=v+w$. 
We denote by $\muV := (P_V)_\sharp\mu$ the {marginal of $\mu$ on $V$}
and by $\muVperp$ the distribution of $w$ conditioned on $v$; 
this is (for almost every $v$) well-defined, see, e.g., \cite[Vol.\ II, Section 10.4]{Bog07}.
With this convention, 
the objective in \eqref{eq:asobj} can be written as
\begin{equation}\label{eq:asobj2}
  \int_{V}\int_{V^\perp}\norm[\cY]{\cG(v+w)-\bar\cG_V(v)}^2\dd\muVperp{(w)} \dd\muV(v).
\end{equation}

By Lemma \ref{lemma:averagebest}, the mapping $\bar\cG_V:V\to\cY$ minimizing this objective can be identified as
the conditional mean of $\cG$ along $v+V^\perp$, i.e.
\begin{equation}\label{eq:barcG}
  \bar\cG_V(v) = \int_{V^\perp} \cG(v+w)\dd\muVperp{(w)}.
\end{equation}
With this observation, \eqref{eq:asobj} reduces to a minimization only over the $d$-dimensional subspace $V\subseteq\cX$. The objective \eqref{eq:asobj2} can further be rewritten to
\begin{align*}
  &\int_{V}\int_{V^\perp}\norm[\cY]{\cG(v+w)-\bbE_\mu[\cG]-(\bar\cG_V(v)-\bbE_\mu[\cG])}^2\dd\muVperp{(w)}\dd\muV(v)\\
  &\qquad=\int_V\int_{V^\perp}\norm[\cY]{\cG(v+w)-\bbE_\mu[\cG]}^2
    -2\dup{\cG(v+w)-\bbE_{\mu}[\cG]}{\bar\cG_V(v)-\bbE_{\mu}[\cG]}\\
  &\qquad\quad+ \norm[\cY]{\bar\cG_V(v)-\bbE_\mu[\cG]}^2\dd\muVperp{(w)}\dd\muV(v)\\
  &\qquad=\int_{\cX}\norm[\cY]{\cG(x)-\bbE_\mu[\cG]}^2\dd\mu(x)-
    \int_V\norm[\cY]{\bar\cG_V(v)-\bbE_\mu[\cG]}^2\dd\muV(v),
\end{align*}
where the second equality follows from the definition of $\bar\cG_V$ in \eqref{eq:barcG}. Since $\bbE_\mu[\cG]=\bbE_{\muV}[\bar\cG_V]$, the last integral corresponds to the variance of $\bar\cG_V(v)$ with respect to $v\sim \muV$. Thus, the optimization problem \eqref{eq:asobj} is equivalent to 
\begin{equation}\label{eq:asobj3}
  \sup_{\substack{V\subseteq \cX,~{\rm dim}(V)=d}} \int_{V} \norm[\cY]{\bar\cG_V(v)-\bbE[\bar\cG_V]}^2\dd\muV(v).
\end{equation}
In words, a solution to this problem is the $d$-dimensional subspace $V$ along which the \emph{variance of the conditional expectation of $\cG$, obtained by averaging over $V^\perp$, is maximized}.

\subsubsection{Characterization of $V$}
Unfortunately, due to the presence of the possibly nonlinear map
$\cG$, \eqref{eq:asobj3} is usually a very
nonconvex and hard to solve optimization problem. The idea is to
instead maximize an upper bound of {\eqref{eq:asobj3}}.

Denote in the following
\begin{align*}
  C^1\cap H^1(\cX,\mu;\cY):=& \Big\{\cG:\cX\to\cY:D\cG\in C^0(\cX,HS(\cX,\cY)),\\
  &\int_{\cX}\norm[\cY]{\cG(x)}^2\dd\mu(x)<\infty,~\int_{\cX}\norm[HS(\cX,\cY)]{D\cG(x)}^2\dd\mu(x)<\infty\Big\},
\end{align*}
where $D\cG(x)\in L(\cX,\cY)$ is the Fr\'echet derivative, which we
assume to be Hilbert-Schmidt at every $x\in\cX$. Suppose that there
exists $C_P>0$ such that for every $d$-dimensional subspace
$V\subseteq\cX$, and every $f\in C^1\cap H^1(V,(P_V)_\sharp \mu;\cY)$
\begin{equation*}
  \int_{V}\norm[\cY]{f(v)-\bbE[f]}^2\dd\mu(v)
  \le
  C_P \int_{V}\norm[HS(V,\cY)]{Df(v)}^2\dd\mu(v).
\end{equation*}
This is referred to as a Poincar\'e inequality. Then %
\begin{align}\label{eq:asupper}
  C_P\int_V\norm[HS(V,\cY)]{D\bar\cG_V(v)}^2\dd\mu(v)\nonumber
  &\le C_P \int_V\norm[HS(V,\cY)]{\int_{V^\perp}D_v\cG(v+w)\dd\mu(w|v)}^2\dd\mu(v)\nonumber\\
  &\le C_P \int_V\int_{V^\perp}\norm[HS(V,\cY)]{D_v\cG(v+w)}^2\dd\mu(w|v)\dd\mu(v)\nonumber\\
  &= C_P \int_{\cX}\norm[HS(V,\cY)]{D\cG(x)|_V}^2\dd\mu(x)
\end{align}
provides an upper bound to the objective in \eqref{eq:asobj3}.

To show how to maximize \eqref{eq:asupper}, we assume in the following
that $\cG\in C^1\cap H^1(\cX,\mu;\cY)$ and define the active subspace covariance operator
\begin{equation}\label{eq:ASCO}
  \covar :=\int_{\cX}D\cG(x)^*D\cG(x) {\dd\mu(x)} \in L(\cX,\cX),
\end{equation}
where $D\cG(x)^*\in L(\cY,\cX)$ is the adjoint of $D\cG(x)$.  Since at
every $x\in\cX$ the operator $D\cG(x)$ is Hilbert-Schmidt (and
therefore compact), and $D\cG(x)^*\in L(\cY,\cX)$, the operator
$D\cG(x)^*D\cG(x)\in L(\cX,\cX)$ is compact and self-adjoint with
$L(\cX,\cX)$-norm bounded by $\norm[HS(\cX,\cY)]{D\cG(x)}^2$. Due to
the square integrability of $\norm[HS(\cX,\cY)]{D\cG(x)}^2$, the
operator $\covar$ is well-defined, compact and self-adjoint.

\begin{proposition}\label{prop:AS}
  Let $V$ be a $d$-dimensional subspace of $\cX$. Then $V$ maximizes
  \eqref{eq:asupper} if and only if $V$ is a span of eigenvectors of
  $\covar$ corresponding to the $d$ largest eigenvalues.
\end{proposition}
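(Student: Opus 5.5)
The plan is to reduce Proposition~\ref{prop:AS} to exactly the situation handled in the proof of Proposition~\ref{prop:PCA}. The objective in \eqref{eq:asupper}, up to the positive factor $C_P$, is
\begin{equation*}
  J(V) := \int_{\cX}\norm[HS(V,\cY)]{D\cG(x)|_V}^2\dd\mu(x).
\end{equation*}
Fix an orthonormal basis $e_1,\dots,e_d$ of $V$. The Hilbert--Schmidt norm of the restriction is then
\begin{equation*}
  \norm[HS(V,\cY)]{D\cG(x)|_V}^2=\sum_{i=1}^d\norm[\cY]{D\cG(x)e_i}^2=\sum_{i=1}^d\dup{D\cG(x)^*D\cG(x)e_i}{e_i}_\cX .
\end{equation*}
We integrate over $x$ and exchange the finite sum with the integral. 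We then use the definition \eqref{eq:ASCO} of $\covar$, understood as a Bochner integral in $L(\cX,\cX)$, which is legitimate since $\norm[L(\cX,\cX)]{D\cG(x)^*D\cG(x)}\le\norm[HS(\cX,\cY)]{D\cG(x)}^2$ is $\mu$-integrable. Pulling the bounded functional $T\mapsto\dup{Te_i}{e_i}$ inside the integral gives
\begin{equation*}
  J(V)=\sum_{i=1}^d\dup{\covar e_i}{e_i}_\cX .
\end{equation*}
This is exactly the quantity appearing in the proof of Proposition~\ref{prop:PCA}, with the PCA covariance replaced by the active subspace covariance operator.

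Next, since $\covar$ is compact and self-adjoint, we diagonalize it. It is also positive semidefinite, because $\dup{\covar z}{z}=\int\norm[\cY]{D\cG(x)z}^2\dd\mu\ge0$. So there are eigenvalues $\lambda_1\ge\lambda_2\ge\dots\ge0$ tending to zero, with orthonormal eigenvectors $\psi_j$. These span $\overline{{\rm im}\,\covar}$, and we complete them by an orthonormal basis of $\ker\covar$, assigned eigenvalue $0$. Expanding $e_i$ in this system gives
\begin{equation*}
  J(V)=\sum_j\lambda_j c_j,\qquad c_j:=\sum_{i=1}^d\dup{\psi_j}{e_i}^2 .
\end{equation*}
Here $0\le c_j\le1$ by Bessel's inequality applied to $\psi_j$ and the orthonormal family $(e_i)$, and $\sum_j c_j\le d$ by Bessel applied to each $e_i$. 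In fact $\sum_j c_j=d$ once the kernel basis is included.

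The final step is the same linear-programming argument as in Proposition~\ref{prop:PCA}. Maximizing $\sum_j\lambda_jc_j$ under these constraints gives the bound $J(V)\le\sum_{j\le d}\lambda_j$. The bound is attained by $V={\rm span}\{\psi_1,\dots,\psi_d\}$, which yields the ``if'' direction.

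For ``only if'', equality forces the weight $c_j$ to sit entirely on indices with $\lambda_j$ among the $d$ largest values. Precisely, $c_j=1$ whenever $\lambda_j>\lambda_d$, and $c_j=0$ whenever $\lambda_j<\lambda_d$. Hence $V$ contains every eigenvector with $\lambda_j>\lambda_d$. The remaining directions of $V$ lie in the $\lambda_d$-eigenspace, since $V$ is orthogonal to all eigenvectors with smaller eigenvalue and $\sum_j c_j=d$. Therefore $V$ is spanned by eigenvectors for the $d$ largest eigenvalues. In the case of ties this should be read as the span of an orthonormal set of such eigenvectors, as in the closing paragraph of the proof of Proposition~\ref{prop:PCA}.

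I expect the main obstacle to be the rigorous handling of the ``only if'' direction with repeated eigenvalues, including a possibly nontrivial kernel when $\lambda_d=0$. This requires the equality-case analysis just described rather than the strict-monotonicity shortcut. A secondary point is justifying the interchange of integral and inner product in the identity for $J(V)$, which the Bochner-integrability bound above handles.
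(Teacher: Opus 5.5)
Your proposal is correct and follows the paper's proof. You rewrite the objective as $\sum_{i=1}^d\dup{\covar e_i}{e_i}$ using the Hilbert--Schmidt norm and the definition of $\covar$, then apply the same eigenvalue/linear-programming argument as in Proposition~\ref{prop:PCA}. Your additional steps (positive semidefiniteness of $\covar$, completing the eigenbasis with $\ker\covar$, and the equality-case analysis for ties and $\lambda_d=0$) are sound and fill in details the paper leaves to ``as in the proof of Proposition~\ref{prop:PCA}''.
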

\begin{proof}
  Denote by $e_1,\dots,e_d$ an orthonormal basis of $V\subseteq\cX$. 
  Then, by definition of the 
  Hilbert-Schmidt norm
  \begin{align*}
    \int_{\cX}\norm[HS(V,\cY)]{D\cG(x)|_V}^2\dd\mu(x) &= \int_{\cX}\sum_{j=1}^d\norm[\cY]{D\cG(x)e_j}^2\dd\mu(x)\\
                                                   &=\int_{\cX}\sum_{j=1}^d\dup{D\cG(x)^*D\cG(x)e_j}{e_j}\dd\mu(x)\\
    &=\sum_{j=1}^d\dup{\covar e_j}{e_j}.
  \end{align*}
  As in the proof of Proposition \ref{prop:PCA}, 
  one can show that this
  quantity is maximized if and only if $\{e_1,\dots,e_d\}$ is a set of
  eigenvectors corresponding to the $d$ largest eigenvalues of $\covar$.
\end{proof}

\begin{remark}[Error of active subspace approach]\label{rmk:aserror}
  Denote by $\lambda_1\ge \lambda_2\ge\cdots$ the eigenvalues of the 
  active subspace covariance operator $\covar$ in \eqref{eq:ASCO}. 
  Under suitable additional assumptions, 
  one can show that for $V$ as in Proposition \ref{prop:AS} and $\bar\cG_V$ as in \eqref{eq:barcG}
  \begin{equation*}
  \bbE_{z\sim \mu}[\norm[\cY]{\cG(z)-\bar\cG_V(P_Vz)}^2] \le C_P\sum_{j>d}\lambda_j.
  \end{equation*}
  We refer to \cite[Theorem 3.4]{PC-ActSubspc} for the proof of such a statement. 
  This again motivates the maximization of \eqref{eq:asupper}.
\end{remark} 

\subsubsection{Computation of Encoder}
The active subspace method can be used to determine efficient linear \emph{encoders}. 
Let $(\psi_{\cX,j})_{j=1}^d$ be $d$ orthonormal eigenvectors corresponding to $d$ 
largest eigenvalues of the active subspace covariance operator 
$\covar = \int_{\cX} D\cG(x)^* D\cG(x) \dd\mu(x)$. 
An encoder can then be constructed by
\begin{equation*}
\cE(x) = (\dup{\psi_{\cX,j}}{x})_{j=1}^d.
\end{equation*}

{The} operator $\covar$ is unknown, and we have to approximate it from a data sample.
A key difference {from} PCA is that we require information on the operator derivatives. 
Assume {we are} given $(D\cG(x_j))_{j=1}^m$ with $x_j\overset{\text{iid}}{\sim}\mu$. 
The procedure is then as follows:

\begin{enumerate}
\item {\bf Compute the empirical active subspace covariance operator}
\begin{equation*}
\hat\covar x = \frac{1}{m}\sum_{j=1}^m D\cG(x_j)^* D\cG(x_j)x\qquad\forall x\in \cX.
\end{equation*}
In a typical computational setting, 
$\cG$ is a function between two high- but finite-dimensional spaces $\R^k\to\R^n$.
In this case $D\cG(x)$ is represented by a $n\times k$ matrix, and
$D\cG(x)^*D\cG(x)=D\cG(x)^\top D\cG(x)\in\R^{k\times k}$.
\item \textbf{Solve the eigenvalue problem.} 
Compute the eigendecomposition of the symmetric matrix $\hat\covar \in\R^{k\times k}$
\begin{equation*}
    \hat\covar \hat{\psi}_{\cX,j} = \hat{\lambda}_j \hat{\psi}_{\cX,j}.
\end{equation*}
\item \textbf{Select the active directions.} 
Choose $d$ orthonormal eigenvectors $(\hat{\psi}_{\cX,j})_{j=1}^d$ 
corresponding to the $d$ largest eigenvalues $(\hat{\lambda}_j)_{j=1}^d$. 
These vectors form the basis for the empirical active subspace and are used to construct the encoder.
\end{enumerate}

\subsection{Autoencoders}\label{sec:Autoenc}
Autoencoders try to find latent representations of a data distribution $\mu$ on a Banach space $\cZ$. 
The ansatz is given via the optimization problem
\begin{equation}\label{eq:autoencoder}
  \inf_{\substack{\cA:\cZ\to\R^d\\ \cB:\R^d\to\cZ}} \bbE_{z\sim\mu}[\norm[\cZ]{z-\cB\circ\cA(z)}^p],
\end{equation}
for some $p\in [1,\infty)$. Further constraints on $\cA$ and $\cB$
discussed below, but importantly, they are allowed to be nonlinear;
{as we shall see, restricting $\cA$ and $\cB$ to be affine linear,
  gives a method that is equivalent to PCA.}
The interpretation of $\cA$ is that it finds a latent
variable representing the most important \emph{features} of the
data. The map $\cB$ reconstructs the input $z$ from its features
$\cA(z)\in\R^d$.
Again we only focus on $p=2$ and $\cZ$ a Hilbert space.

\subsubsection{Affine Autoencoder}
Suppose that we restrict $\cA$, $\cB$ to be affine linear maps. In this case $\cB\circ\cA(z)\in {\rm im}(\cB)$, which is an at most $d$-dimensional affine subspace of $\cZ$. Denote this subspace by $z_0+V$. The best approximation of $z$ within $z_0+V$ is given by $z_0+P_V(z-z_0)$. Therefore, if $\cA$, $\cB$ solve \eqref{eq:autoencoder}, it must hold
\begin{equation*}
  \cB\circ\cA(z) = z_0+P_V(z-z_0).
\end{equation*}
Assuming $\mu$ to have finite first and second moment, the optimal $z_0$ and $V$ are given by \eqref{eq:z0} and Proposition \eqref{prop:PCA}. Thus, an optimal solution $\cA$ and $\cB$ is given by
\begin{equation*}
  \cA(z) = (\dup{z-z_0}{\psi_j})_{j=1}^d,\qquad\cB((c)_{j=1}^d)= z_0+\sum_{j=1}^d c_j\psi_{j}
\end{equation*}
where $(\psi_{j})_{j=1}^d$ is an orthonormal eigenbasis corresponding to $d$ largest eigenvalues of the covariance of $\mu$ in \eqref{eq:covariance}. In this sense (affine) linear autoencoders are equivalent to PCA. 
We emphasize however that such $\cA$ and $\cB$ are not unique: For any invertible affine transformation $\cQ:\R^d\to\R^d$, we have that $(\cB\circ\cQ)\circ (\cQ^{-1}\circ\cA)$ provides another optimal solution to \eqref{eq:autoencoder}.

\subsubsection{Nonlinear Autoencoder}
\label{sec:NonlAEc}
While affine autoencoders can detect optimal linear subspace representations, they fail to capture nonlinear structure in the data. Autoencoders are therefore typically used for $\cA$, $\cB$ in nonlinear function classes such as neural networks.

Suppose that $\mu$ is a distribution whose support lies on a compact $d$-dimensional smooth manifold $M\subseteq\cZ$. If $M$ can be parametrized with a single chart $\varphi:[0,1]^d\to M$, then any extension $\cA:\cZ\to\R^d$ of $\varphi^{-1}:M\to\R^d$, and any extension $\cB:\R^d\to M$ of $\varphi$ satisfy that $\cB\circ\cA(z)=z$ for every $z$ in the support of $\mu$. Thus the latent dimension $d$ is sufficient to perfectly represent the data. In general this is not the case, if $M$ requires multiple charts. However, one can show that the latent dimension $2d+1$ still suffices in this case, e.g.\ \cite{dlrom}.

By contrast, linear subspaces may approximate manifolds only poorly, often requiring a much larger latent dimension. In such situations, nonlinear autoencoders can achieve far more efficient representations.

\subsubsection{Active Subspace Autoencoder}\label{sec:asae}
The idea of active subspaces can also be generalized to autoencoders.
In this case we replace the objective \eqref{eq:autoencoder} by
\begin{equation}\label{eq:asautoencoder}
  \inf_{\substack{\cE:\cX\to\R^{d_\cX}\\ \bar\cG:\R^{d_{\cX}}\to\cX}} 
  \bbE_{x\sim\mu}\!\left[\norm[\cX]{\cG(x)-\bar\cG\circ\cE(x)}^2\right].
\end{equation}
For linear $\cE$, this corresponds to the active subspace approach. 
For nonlinear $\cE$, the map may learn nonlinear features or a manifold 
capturing the directions of largest variation of $\cG$. This
then yields an encoder map $\cE$ for our surrogate.

Note that the objective \eqref{eq:asautoencoder} is very generic:
training it does not just yield an encoder $\cE$, but in principle it
can be used to obtain a complete surrogate $\bar\cG\circ\cE$ for
$\cG$. In fact, the {so-called ``DeepONet''} architecture can be interpreted as
an example of that type with a specific structure for $\bar\cG$ 
(see Section \ref{sec:DeepONet} and especially \eqref{eq:deepONet}).

\subsubsection{Computation of Encoder and Decoder}
Autoencoders can be used to determine nonlinear encoders and decoders 
$\cE$, $\cD$ from data as in Section~\ref{sec:representation}. 
Assume given training data $(x_j,\cG(x_j))_{j=1}^N$ with 
$x_j \overset{\text{iid}}{\sim}\mu$. 
To train an encoder $\cE:\cX\to\R^d$, we approximate the objective 
\eqref{eq:autoencoder} by its Monte-Carlo estimate
\begin{equation}\label{eq:autoencoder-encoder}
  \inf_{\substack{\cA_\cX:\cX\to\R^d\\ \cB_\cX:\R^d\to\cX}} 
  \frac{1}{N}\sum_{j=1}^N \norm[\cX]{x_j - \cB_\cX\circ\cA_\cX(x_j)}^2.
\end{equation}
The minimization can for example be carried out over neural network 
classes for $\cA_\cX$ and $\cB_\cX$. 
After training, $\cB_\cX$ is discarded, and we obtain an encoder $\cE:=\cA_\cX$. 
Alternatively, one may also employ a Monte-Carlo
approximation to the active-subspace autoencoder objective \eqref{eq:asautoencoder}.

To train a decoder, we analogously optimize
\begin{equation}\label{eq:autoencoder-decoder}
  \inf_{\substack{\cA_\cY:\cY\to\R^m\\ \cB_\cY:\R^m\to\cY}} 
  \frac{1}{N}\sum_{j=1}^N \norm[\cY]{\cG(x_j) - \cB_\cY\circ\cA_\cY(\cG(x_j))}^p.
\end{equation}
In this case $\cA_\cY$ is discarded, and we set $\cD:=\cB_\cY$.
\section{Architectures}
\label{sec:Arch}
With en- and decoders at hand, we turn to the actual architecture 
of (families of) finite-parametric operators 
$\{ \hat{ {\calG^N}} \}_{N\geq 1}:\calX\to \calY$ 
acting on infinite-dimensional function spaces.
Several classes of architectures of operator surrogates
have emerged in the past five years. 
Most, 
if not all operator surrogates considered fit into the 
``Encoder-Approximator-Decoder'' paradigm \eqref{eq:DgE}
i.e.\ they are of the generic form 
\begin{equation}\label{eq:DgEN}
{\hat\calG}^N = \calD_\calY \circ {g^N} \circ \calE_\calX 
\;.
\end{equation}
Encoders and decoders as discussed above 
relate function space in- and output
to parameter domains in sequ{enc}e spaces.
{In addition to finite-parametric} 
surrogate maps $g^N$ in \eqref{eq:DgEN}, on the input side, 
\emph{dimension-truncation} to a finite input dimension $d_{\calX}$ is performed 
on the coefficient sequence representing the encoded input (signal),
and the $d_{\calY}$-dimensional output of $g^N$ 
is understood to be padded by zeros 
prior to synthesis operator application of $\calD_\calY$ in \eqref{eq:DgEN}. 
Early examples of mathematical results for \eqref{eq:DgEN} are 
\cite{SLSMGEM22}, and \cite{HSZ24}.

The \emph{approximator} $g^N$ depends on ``latent'' 
parameters which are to be found from data 
by suitable training procedures.
Increasing the number $N$ of latent parameters, 
one minimally aims
at some form of \emph{universality} of the sequence $g^N$
{(see the discussion in Sec.~\ref{sec:ExRes})}.
Accordingly, 
the so-called \emph{curse of dimensionality} must be addressed and,
if at all possible, overcome. 
Several approaches to obtain suitable $g^N$ in \eqref{eq:DgEN}
have emerged in the past years:
we mention only 
$n$-term truncated polynomial chaos expansions and 
stochastic Galerkin resp. stochastic collocation {\cite{DNSZ23_2957}.} %
neural networks \cite{HSZ18_2635},
tensor-structured approximation \cite{lebedev2015speedingupconvolutionalneuralnetworks}, 
probabilistic ``random feature'', 
kernel surrogates \cite{NN+AMS24,nguyen2026randomfeaturesoperatorvaluedkernels},
and lattice point interpolation and approximation.
We detail here polynomial chaos approximators $g^N$ in \eqref{eq:DgEN}
giving rise to Spectral Operator Surrogates (with $N$ depending on $n$).
We also discuss several neural network based approximator 
constructions
giving rise {to} the ``DeepONet'' and ``Neural Operator'' surrogates;
\emph{their structure is distinct {from that} of \eqref{eq:DgEN}}.
\subsection{Spectral Operators}
\label{sec:SpecOp}
Consider stable linear representations of the operator in- and output
as described in Section \ref{sec:stable}:
\begin{equation}\label{eq:iorepresentation}
  x = \sum_{j\in\N}\varphi_{\cX,j}(x)\psi_{\cX,j}\qquad\text{and}\qquad
  y = \sum_{i\in\N}\varphi_{\cY,i}(y)\psi_{\cY,i}\qquad\forall x\in\cX,~y\in\cY.
\end{equation}
An operator $\cG:\cX\to\cY$ can then be decomposed via
\begin{equation}\label{eq:opdecomp}
  \cG(x) = \sum_{i\in\N} g_i((\varphi_{\cX,j}(x))_{j\in\N}) \psi_{\cY,i},
\end{equation}
for the functions $g_i:\R^\N\to\R$. 
Since the $\varphi_{\cX,j}$ and the $\psi_{\cY,i}$ are known, 
{the task of operator learning now}
boils down to approximating the $g_i$.
Using \eqref{eq:SAid} we have an explicit formula 
\begin{align}\label{eq:giexplicit}
  g_i((\varphi_{\cX,j}(x))_{j\in\N}) &= \varphi_{\cY,i}(\cG(x))
  = \varphi_{\cY,i}\Big(\cG\Big(\sum_{i\in\N}\varphi_{\cX,j}(x) \psi_{\cX,j}\Big) \Big).
\end{align}
If $(\psi_{\cY,j})_{j\in\N}$ is a basis of $\cY$, then \eqref{eq:giexplicit} are the unique functions such that \eqref{eq:opdecomp} holds.
Due to the linearity of the $\varphi_{\cX,j}$, the coefficient functions $g_i$ inherit smoothness properties, such as differentiability, from the operator $\cG$. Since many practically relevant operators are highly smooth, this motivates the use of spectral methods to approximate the $g_i$.

\subsubsection{Polynomial Chaos {Expansions}}\label{sec:pc}
Each coefficient function $g_i$ is a function of the infinitely many variables $\coeff[j]:=\varphi_{\cX,j}(x)\in\R$, $j\in\N$. We also write
\begin{equation}\label{eq:Bx}
  \coeff =(\coeff[j])_{j\in\N}=(\varphi_{\cX,j}(x))_{j\in\N}\in\R^\N.
\end{equation}
Under certain conditions, such infinite-parametric functions allow for multivariate polynomial expansions (known as ``polynomial chaos''), see, e.g., \cite{Xiu2002,Ernst2012} or \cite[Section 2.1]{JZDiss}. A simple example is Taylor polynomial chaos: Denote %
\begin{equation*}
  \cF:=\setc{\Bnu=(\nu_j)_{j\in\N}}{\nu_j\in\N_0,~\sum_{j\in\N}\nu_j<\infty}.
\end{equation*}
The Taylor polynomial chaos expansion of $g_i$ (assuming it exists) then reads
\begin{equation*}
  g_i((\varphi_{\cX,j}(x))_{j\in\N})=g_i(\coeff) = \sum_{\Bnu\in\cF} c_{i,\Bnu} \coeff^{\Bnu},
\end{equation*}
for certain coefficients $c_{i,\Bnu}\in\R$. 
Under suitable assumptions %
it holds {that} $c_{i,\Bnu}=\frac{1}{\Bnu!}\frac{\partial^{\Bnu}}{\partial{\Bx}^{\Bnu}} g_i(\Bx)|_{\Bx=\Bnul}$, e.g.\ \cite[Lemma 2.1.2]{JZDiss}; here $\Bnu!=\prod_{j\in{\rm supp}(\Bnu)}\nu_j!$ and $\Bx^{\Bnu}=\prod_{j\in\supp(\Bnu)}x_j^{\nu_j}$ where ${\rm supp}(\Bnu)=\set{i}{\nu_i\neq 0}$ and empty products equal $1$ by convention.

For a finite set of multiindices $\Lambda_i\subseteq\cF$, the $i$th coefficient function $g_i$ can then be approximated by a truncated polynomial chaos expansion
\begin{equation*}
  g_i((\varphi_{\cX,j}(x))_{j\in\N})\simeq \sum_{\Bnu\in\Lambda_i} c_{i,\Bnu} \coeff^{\Bnu}.
\end{equation*}
This yields an approximation to the operator via
\begin{equation}\label{eq:pcoperator}
  \cG(x) \simeq \sum_{i\in\N} \psi_{\cY,i} \sum_{\Bnu\in\Lambda_i} c_{i,\Bnu}\coeff^\Bnu.
\end{equation}
In order for this formula to be computable, we require $\Lambda_i=\emptyset$ for all but finitely many $i$. Since the right-hand side in \eqref{eq:pcoperator} is (affine) linear in the unknown coefficients $c_{i,\Bnu}$, they can be fitted by least-squares. 
In practice, for stability reasons other expansions such as multivariate Legendre or Hermite polynomials are preferred.

A key difficulty of this approach is the choice of efficient multiindex sets $\Lambda_i$. A concrete construction is provided in \cite[Sections 5 and 6]{HSZ24}. Here we only give some intuition:
\begin{itemize}
\item $g_i$ is the coefficient function corresponding to $\psi_{\cY,i}\in\cY$ on the output side. The representation system on the output side is typically ordered from coarse to fine scales. 
  This is for instance the case if $\psi_{\cY,i}$ is a Fourier type basis or a wavelet, e.g.\ 
{$\psi_{\cY,i}(\spatialvar)=\sin(i \spatialvar)$, $\spatialvar\in\R$}. 
Assuming smoothness of the operator output, low-index (small $i$) functions capture the dominant features of the output, 
while high-index functions correspond to finer details. 
{As smoothness assumptions imply faster decay of the coefficient sequence, in this
      case one} expects the cardinality of the $\Lambda_i$ to decrease, i.e.
  \begin{equation*}
    |\Lambda_1|\ge |\Lambda_2|\ge\dots.
  \end{equation*}
\item Similar considerations hold for the representation of the input with respect to $\psi_{\cX,j}$, cf.~\eqref{eq:iorepresentation}. 
      Since lower $j$ corresponds to more dominant features of the input, for fixed $i$, 
      the multiindex set $\Lambda_i\subseteq\cF$ should be anisotropic, with higher importance given to lower $j$.
\end{itemize}
Without recalling the full details of \cite{HSZ24}, 
the multiindex sets constructed there are of the following type, 
which reflects the above intuition: 
For two null sequences $\Brho=(\rho_j)_{j\in\N}$, $\Bxi=(\xi_i)_{i\in\N}$ with values in $(0,1)$,
\begin{equation*}
  \Lambda_i = \set{\Bnu\in\cF}{\xi_i\Brho^{\Bnu}\ge \eps},
\end{equation*}
with $\eps>0$ being a parameter associated to the overall accuracy of the scheme. 
This construction ensures in particular $\Lambda_i=\emptyset$ for all but finitely many $i$.

\subsubsection{Sparse-Grid {Interpolation}}\label{sec:sg}
A closely related approach is to use sparse-grid interpolants instead of polynomial chaos expansions. 
With the notation in \eqref{eq:Bx}, the surrogate is constructed as
\begin{equation}\label{eq:sgsurrogate}
  \cG(x) \simeq \sum_{i\in\N} I_{\Lambda_i}[g_i](\coeff) \psi_{\cY,i},
\end{equation}
where $I_{\Lambda_i}[g_i]$ is a sparse-grid interpolant of $g_i$; 
we recall its construction below. 
The $\Lambda_i\subseteq\cF$ are again multiindex sets, for which the remarks in Section \ref{sec:pc} hold verbatim. 
One advantage of this approach is that it has deterministic convergence guarantees, see \cite{HSZ24}. 

Computing \eqref{eq:sgsurrogate} requires computation of the interpolation operators. 
We assume in the following that each $\Lambda_i$ is \emph{downward closed}, 
meaning that if $\Bnu\in\Lambda_i$, then $\Bmu\in\Lambda_i$ for all $\Bmu\in\cF$ such that $\mu_i\le\nu_i$ for all $i\in\N$. 
For $n\in\N$ denote by $I_{n}:C^0(\R)\to \bbP_n$ a univariate polynomial interpolant in $n+1$ points; 
typical examples are %
{\Tsch} or Hermite interpolants. 
With the multivariate tensorized interpolants
\begin{equation*}
  I_{\Bnu} = \otimes_{j\in\supp{\Bnu}} I_{\nu_j}
\end{equation*}
the corresponding sparse-grid interpolant of coefficient map $g_i$ is defined as
\begin{equation*}
  I_{\Lambda_i}[g_i] = \sum_{\Bnu\in\Lambda_i} a_{\Bnu,i} I_\Bnu[g_i]\qquad\text{where}\qquad
  a_{\Bnu,i}=\sum_{\substack{\Be\in\{0,1\}^\N\\ \text{s.t. }\Bnu+\Be\in\Lambda_i}}(-1)^{|\Be|},
\end{equation*}
where $|\Be|=\sum_{j\in\N}e_j\in\N_0$. 
For more details and explicit formulas we refer for example to \cite{Bungartz_Griebel_2004,klimke2006uncertainty,JZDiss}.

We emphasize that the computation of \eqref{eq:sgsurrogate} requires {us} to evaluate the coefficient maps $g_i$ in \eqref{eq:giexplicit} at the interpolation points, which are determined by the one-dimensional interpolation operators $I_n$, $n\in\N$, as well as the sets $\Lambda_i$, $i\in\N$. As long as the $\Lambda_i$ are fixed, so is the set of required evaluation points of the operator. These operator evaluations can then be done a priori and in parallel. Subsequently, no further form of training is required, as the surrogate has an explicit formula.

\begin{codebox}
We apply operator surrogates based on sparse-grid interpolation to the parametric diffusion problem in notebook \texttt{2\_Interpolation.ipynb}.
\end{codebox}

\subsubsection{Low-rank Tensor Approximations}
\label{sec:LwRkTT}
Another possibility is to represent the $g_i$ via a low-rank representation of its values at some grid. 
Specifically, fix a set of univariate interpolation points $\xi_0,\dots,\xi_n\in\R$; these could for example again be %
{\Tsch} or Gauss-Hermite points. %
{Setting the variables after the first $d$ dimensions to $0$, $g_i$ can be approximated by the tensor interpolant}
\begin{equation}\label{eq:tensorinterpolant}
  I_n\otimes I_n\otimes\cdots \otimes I_n[g_i(\cdot,\dots,\cdot,0,0,\dots)].
\end{equation}
This representation requires {us} to store the function values at the interpolation points, i.e.\ the $d$ tensor
\begin{equation}\label{eq:gitensor}
  g_i(\xi_{j_1},\dots,\xi_{j_d})_{j_1,\dots,j_d=0}^n\in\R^{(n+1)\times\cdots\times (n+1)}.
\end{equation}
The idea is to compress this tensor via a low rank representation.

A popular low-rank tensor format is the tensor train (TT) format \cite{oseledets11}. 
It tries to approximate \eqref{eq:gitensor} via
\begin{equation}\label{eq:separation}
  g_i(\xi_{j_1},\dots,\xi_{j_d}) \simeq \Bv^{(1)}(j_1)\cdots\Bv^{(d)}(j_d)\qquad\forall j_1,\dots,j_d=0,\dots,n,
\end{equation}
with the {\bf TT cores}
\begin{equation*}
  \Bv^{(1)}\in\R^{1\times (n+1)\times r},~
  \Bv^{(d)}\in\R^{r\times (n+1)\times 1}\qquad\text{and}\qquad
  \Bv^{(j)}\in\R^{r\times (n+1)\times r}~\forall j=2,\dots,d-1.
\end{equation*}
The memory cost of this format is $O(r^2nd)$, compared to $O(n^d)$ for the direct storage of \eqref{eq:gitensor}. 
For moderate ranks $r$, this is a significant improvement. 
Moreover, given a representation in the TT format, the interpolant in \eqref{eq:tensorinterpolant} 
is very efficient to evaluate: Due to the separation of the cores \eqref{eq:separation}, 
the interpolation can be done individually in each dimension. 
Evaluating the interpolant thus scales only linearly in the dimension.

A key question is how to efficiently obtain a TT approximation of the tensor in \eqref{eq:gitensor}. 
A standard algorithm is so-called TT-cross \cite{oseledets10}. A drawback is that it requires many sequential evaluations of $\cG$, 
which can make the process slow in practice. 
For this reason the ALS-cross algorithm was developed in the context of parametric PDEs \cite{dolgov18}; 
while it also relies on sequential evaluations of $\cG$, it can be significantly more efficient. 
The downside is that it requires a specific structure for the operator $\cG$, thereby limiting its applicability.
\subsection{{Nonlocal Operator Networks}}
\label{sec:NeurOp}
A unifying mathematical framework covering a wide class of 
operator surrogate architectures \eqref{eq:DgEN}
was recently promulgated in \cite{kovachki2021neural} 
under the label ``Neural Operator''.
{In order to distinguish it from other surrogate architectures based on neural networks 
     we here refer to this architecture as \emph{Nonlocal Operator Network} (Nonlocal ONet).}
{Nonlocal ONets} mimic the architectures of traditional 
     finite-parametric, deep feedforward neural networks in finite dimensions, 
while  
a) admitting infinite-dimensional input and output spaces $\cX$ and $\cY$, 
b) \emph{replacing the affine weight-bias maps by linear operators}, 
and, importantly,
by application of a 
\emph{nonlocal (integral) operator with prescribed kernel} 
(``linear kernel layer'' \cite{kovachki2021neural},
``continuum attention'' \cite{calvello2025continuumattentionneuraloperators} ) 
and 
c) by injecting nonlinearity, as in the finite-dimensional case, 
via \emph{activation functions}. Acting on function spaces,
these are interpreted either in a pointwise sense,
or as Nemy{tskii} operators in the (Sobolev or Besov) function spaces of interest.

The general structure of a {nonlocal ONet} of finite depth $L\in \N$ acting between 
Banach input space $\cX$ and output space $\cY$ is \cite{kovachki2021neural}
\be\label{eq:nonlocalONet}
{\hat\calG} : \cX \to \cY: u\mapsto Q\circ \cL_L\circ ... \circ \cL_2\circ \cL_1 \circ \cR(u) \;.
\ee
The 
layers $\cL_\ell:\cX_\ell \to \cX_{\ell+1}$, $\ell = 1,...,L$, 
are \emph{nonlinear maps between suitable function spaces}
$\{\cX_\ell\}_{\ell = 1,...L+1}$
of the general form
\be \label{eq:NeurLay}
\cL_\ell: \cX_\ell \to \cX_{\ell+1}: v \mapsto \sigma(W_\ell v + b_\ell + \cK_\ell(v;\gamma_\ell) )\;,
\quad \ell = 1,2,...,L\;.
\ee
Here, 
the activation function $\sigma(\cdot)$ 
is understood to act in a pointwise sense in \cite{kovachki2021neural},
which mandates that $\cX_\ell \subset C^0(\domain_\ell)$ 
{for some domain $\domain_\ell \subset \R^{d_\ell}, d_\ell \in \N$. Further,  
$W_\ell \in L(\cX_\ell,\cX_{\ell+1})$ is a {bounded,} linear operator, %
$b_\ell \in \cX_{\ell+1}$ a bias term,
and $\cK_\ell$ a nonlocal integral operator
of} the generic form 
\[
{\cK_\ell}(v;\gamma)(\spatialvar)
:= 
\int_{\spatialvarother\in \domain_\ell} \kappa_\ell(\spatialvar,\spatialvarother;\gamma)v(\spatialvarother) d\spatialvarother, \quad {{\spatialvar \in \domain_\ell,}}
\]
with suitable kernel functions $\kappa_\ell(\cdot,\cdot;\gamma)$ 
which depend on \emph{(trainable) latent parameter vectors $\gamma$}. 
In \eqref{eq:nonlocalONet},
$\cR: \cX \to \cX_1$ and $\cQ: \cX_{L+1} \to \cY$.
All layers $\cL_\ell$ in \eqref{eq:nonlocalONet} may share the same kernel, 
in which the subscript $\ell$ is dropped in the definition of $\cL_\ell$,
i.e.\ $\domain_\ell = \domain$, 
$\cX_\ell = \cX$ and $\kappa_\ell = \kappa$ for $\ell = 1,...,L$.

In \eqref{eq:nonlocalONet},
{the kernel parameters are (a subset of)} trainable parameters
of the {neural operator $\hat\cG$,}
i.e.\ 
$\{\gamma_\ell\}_{\ell = 0,1,...,L}\subset {\theta}$.
Definition \eqref{eq:nonlocalONet} as such is not finite-parametric, due to
it involving operators acting on the function spaces $\cX_\ell$. 
{A nonlocal ONet as in} \eqref{eq:nonlocalONet} could be viewed as 
``infinite-width closure'' or continuum limit 
of finite-parametric, fully connected feedforward networks 
(subject to suitable scaling hypotheses).

Computationally viable architectures arise from 
{nonlocal ONets} \eqref{eq:nonlocalONet} via \emph{discretization}, 
where any discretization method (Nystr\"om, Galerkin, quadrature,...) could 
be employed.
The conceptual abstraction \eqref{eq:nonlocalONet} 
in the form of {nonlocal ONets} %
allows, however, by judicious choices of discretizations, 
to unify existing constructions and to 
generate new, finite-parametric architectures. 
In any case, a discretization error analysis is required 
to quantify the difference between a {nonlocal ONet} %
and
its computational version,
see, e.g.\  \cite{Kovachki2021,lanthaler2024discretizationerrorfourierneural} 
for a study for Fourier Neural operators {(FNO)}.

Particular choices of the kernels $\kappa_\ell$ 
in the operators $\cK_\ell$
will give rise to particular neural operators: 
translation-invariant kernels
$\kappa(\spatialvar,\spatialvarother;\gamma) = \bar{\kappa}(z;\gamma)\mid_{z=\spatialvar-\spatialvarother}$ 
yield the convolutional neural operator \cite{NEURIPS2023_f3c1951b}.
It {can} %
be discretized and numerically realized via 
discrete Fourier- and inverse Fourier transforms
\cite{Kovachki2021,lanthaler2024discretizationerrorfourierneural},
Fast Multipole Methods (FMMs) \cite{li2020multipole},
${\mathcal H}$-Matrix Techniques,
Tensor-structured Formats \cite{lebedev2015speedingupconvolutionalneuralnetworks}, 
to name but a few.
In periodic settings, where $\cX_\ell$ are function spaces on 
the torus ${\mathbb T}^d$ with periodicity, \eqref{eq:nonlocalONet} 
takes the form of the %
Fourier Neural Operator %
\cite{Kovachki2021,kovachki2021neural}, i.e.\ 
\be\label{eq:FNO}
{\cK}_\ell(v;\gamma)(x)
:= 
\cF^{-1}_{\xi \to \spatialvar} ( \hat{\kappa}_\ell(\xi;\gamma) \hat{v}(\xi) ) 
\ee
with $\hat{\kappa}_\ell$ denoting the Fourier-transform (``Fourier-symbol'') 
of translation-invariant kernel $\bar{\kappa}_\ell(z;\gamma)$, 
i.e.
$\hat{\kappa}_\ell(\xi;\gamma) = \cF_{z\to \xi} \bar{\kappa}_\ell(z;\gamma) $.

\begin{codebox}
An implementation of the discretized FNO for one-dimensional input and output functions is provided and applied to the parametric diffusion problem in notebook \texttt{4\_FNO.ipynb}.
\end{codebox}

\subsection{PCA-Net and FrameNet}
\label{sec:PCAFrm}
The PCA-Net architecture was among the first neural network-based approaches for learning operator surrogates \cite{HESTHAVEN201855, SMAI-JCM_2021__7__121_0}. It fits the abstract encoder-approximator-decoder framework \eqref{eq:DgE}. Given data $(x_i, \cG(x_i))_{i=1}^m$ with $x_i \overset{\text{iid}}{\sim} \mu$, 
the training is in two steps: first, the encoder $\cE:\cX\to\R^{d_\cX}$ and decoder $\cD:\R^{d_\cY}\to\cY$ are constructed via empirical PCA (see Section~\ref{sec:PCA}). Second, the approximator $g:\R^{d_\cX}\to\R^{d_\cY}$ in \eqref{eq:DgE} is set up as a feedforward neural network, and is trained on the resulting latent variables. The error analysis for empirical PCA is well-developed (see, e.g., \cite{Reiss2020Nonasymptotic,MILBRADT2020108741}). In the specific context of PCA-Net, an analysis was done in \cite{Lanthaler2023}: due to data randomness, the error bounds from the PCA reconstruction errors scale in general only with the inverse square root of the number of data points, i.e., the Monte-Carlo rate; 
{this dimension reduction error is independent of the choice of coefficient map, 
     and is thus a general limitation valid also for other PCA-based architectures.}

Closely related is the FrameNet architecture \cite{HSZ24, RWZ24}, where the encoder $\cE$ and decoder $\cD$ are defined by a priori (data-independent) Banach frames on $\cX$ and $\cY$, see Section~\ref{sec:stable}. For PDE-based solution operators, this is a feasible approach, as suitable frame constructions for typical solution spaces are well-known in the literature, e.g., Example~\ref{ex:frames}. While this allows for fast algebraic convergence rates for the approximation error \cite{HSZ24}, this advantage does not necessarily carry over to the generalization error, see Section~\ref{sec:generalization} and also \cite{grohs2025theorytopracticegapneuralnetworks}.

In practice, while FrameNet has desirable theoretical properties, PCA-Net is often simpler to implement. Moreover, by using data-dependent representations for its input and output spaces, PCA-Net may be better able to capture latent structures within the operator $\cG$ and the input measure $\mu$.

\begin{codebox}
An implementation of PCA-Net is provided and applied to the parametric diffusion problem in notebook \texttt{3\_PCANet.ipynb}.
\end{codebox}

\subsection{DeepONet}
\label{sec:DeepONet}
The ``DeepONet'' class of neural operators is one of the earliest approaches. 
Its ``branch-and-trunk'' architecture differs at first sight 
from \eqref{eq:DgE} considered elsewhere in this text, 
{but can also be interpreted in this sense, see below.}
Universality of the DeepONet architectures has first been established in 
\cite{ChenChen1993}, on compact subsets of admissible input signals, 
and in 
\cite{SLSMGEM22} also in the non-compact case. 
There, also first error bounds which are free from the CoD were obtained, 
in the case of holomorphic maps, and with PCA decoding.

We refer to \cite{GEK-Fair22} for a succinct presentation of {DeepONet} architecture,
comparison with Fourier-Neural Operator architecture, and quantitative comparisons.
A key feature of DeepONet is the so-called ``branch Net'' part of the DeepONet 
architecture (see \cite[Fig.~1]{GEK-Fair22} for an illustration), which performs
decoding of signals in a feature space which is data-adaptive, 
i.e. learned from training data.
{Importantly, and contrary to the PCA approach, this decoder is learned
simultaneously with the approximator.}

The trunk-net part of the DeepONet 
takes the coordinates $\spatialvar$ in the physical domain $\domain$ 
which supports the output,
{and} the branch net takes the discretized function $x$ as input.
The output of the DeepONet takes the generic form
\begin{equation}\label{eq:deepONet}
{\hat \calG} (x)(\spatialvar) = \sum_{k=1}^{d_{\cX}} b_k(x) t_k(\spatialvar) + b_0, 
\quad 
\spatialvar \in \domain
\end{equation}
where $b_0\in \R$ is a bias, $\{b_1, b_2, . . . , b_{d_{\cX}} \}$ 
are  $d_{\cX}$ outputs of the branch net, 
and $\{t_1, t_2, . . . , t_{d_{\cX}} \}$ 
representing the DeepONet output are the $d_{\cX}$ output features of the trunk net,
which can be interpreted as an encoder $\cE:\cX\to\R^{d_{\cX}}$ in \eqref{eq:DgE}; 
also see Section \ref{sec:asae}. 
The approximator and decoder $\cD\circ g:\R^{d_\cX}\to\cY$ then 
correspond to the map 
$(c_j)_{j=1}^{d_{\cX}}\mapsto b_0 + \sum_{k=1}^{d_{\cX}}c_k t_k(\spatialvar)$.
\begin{remark}\label{rmk:DONEAD}
We may interpret the DeepONet's components 
within an encoder-decoder framework 
by defining the branch net as the encoder. 
While not canonical, this choice aligns with our presentation of 
active subspace autoencoders in Section \ref{sec:asae}.
\end{remark}

DeepONet is a neural operator architecture framework 
which, a-priori, does not 
restrict the branch net and the trunk net yet to any specific architecture. 
As $\xi \in \domain$ is the physical domain of the input function, 
a {feedforward} NN is often used as the trunk net. 
The architecture of the branch net which defined the output feature space
can be chosen as a {convolutional neural network (}CNN{)}, ResNet, or a graph neural network (GNN), etc. 
For example, 
if the discretization of $x$ is equispaced and cartesian, a CNN or a {feedforward} NN offers advantages.
If the discretization of $x$ is on an unstructured mesh, then a 
GNN is preferable.
\subsection{Transformers}
\label{sec:Transf}
Starting with the seminal \cite{vaswani2023attentionneed} introduction of 
transformers for finite-dimensional maps,
transformer architectures have emerged in recent years as key
ingredie{nts} for large language models, such as Chat-GPT. 
They are also at the heart of integrating \emph{multimodal data} 
(such as images, text and sound) into large language models 
{\cite{calvello2025continuumattentionneuraloperators, LLMmathematical}.}
Their current relevance
for SciML and SciAI lies in transformer ability 
to learn from and generalize across multimodal PDE input data 
(constitutive equations, domain shapes, loadings, initial and boundary data,
 solution snapshots, form of the PDE, etc.).
For operator learning in a PDE context, 
this capability is key for 
\emph{Scientific AI}: 
the generalization of data-to-solution maps across a range of partial differential equations
{\cite{POSEIDON2024}, e.g.,}
from a simple diffusion equation in a square to linear elastostatics in polygonal domains.
Multimodal inputs can also arise in the challenging task %
{of generalizing} across physical domains, or 
{of identifying} a PDE from data and snapshot solutions,
{or of the inverse problems to identify a domain of a PDE from solution snapshots.}

An early contribution to transformer-architectures for neural PDE solvers in SciML
is
\cite{cao2021choosetransformerfouriergalerkin}.

More recently, on the side of mathematical theory,
universal approximation of transformer-based neural operators has been shown e.g. 
for stochastic processes in path space in 
\cite{acciaio2022metric} and in the finite-dimensional setting
in \cite{furuya2024transformersuniversalincontextlearners}, and references there.
See also \cite{wen2025geometryawareoperatortransformer} for Transformer-generalization of PDE
solution operators across classes of physical domains. 
The recent work \cite{molinaro2025generativeaifastaccurate} leverages transformers for
prediction of statistical properties of (turbulent) flows.
Approximation rate bounds for transformer-based neural architectures
have been established in \cite{furuya2024transformersuniversalincontextlearners,
shih2024transformersneuraloperatorssolutions}.

To present the transformer architectures ${\hat \calG_T}$ in function space, 
we follow \cite[Sec.~4]{calvello2025continuumattentionneuraloperators} 
and place ourselves again in an encoder-transformer-decoder architecture, i.e.\ 
\begin{equation}\label{eq:Trns}
{\hat \calG_T} : \calX\to \calY : x \mapsto \Tout \circ \EL \circ \Tin(x) \;.
\end{equation}
Here, $\Tin: \calX \to \calV$ maps input data ${x} \in \calX$ to an embedding function
space $\calV$ which could be viewed as \emph{latent function space}.
Importantly, 
$\Tin$ which is key for the transformers' ability to deal with multimodal data
is \emph{not a linear operator} as {e.g., in PCA-Net and FrameNet.}
The operator $\EL: \calV\to \calV$ is the nonlinear, function space analog
to the transformer encoder block in \cite{vaswani2023attentionneed},
and $\Tout: \calV\to \calY$ is a linear decoder.
To describe the structure of the blocks $\Tin, \EL, \Tout$ 
in \eqref{eq:Trns}
in more detail,
we denote $\domain\subset \R^{{d_\spatialvar}}$ and 
$\domain'\subset \R^{{d_\spatialvarother}}$ bounded domains
on which the input and output function spaces, 
$\calX = \calX(\domain;\R^{{d_x}})$ and $\calY = \calY(\domain';\R^{{d_y}})$
are modelled, respectively (think of Besov or Sobolev spaces on $\domain$ resp. $\domain'$,
of vector-valued functions), and the map to be emulated being  
$\calG:\calX\to \calY: a\mapsto u$.

In constructing $\Tin$, 
an input ${x} \in \calX(\domain;\R^{{d_x}})$ is \emph{positionally encoded} 
{by appending the spatial coordinate $\spatialvar$ to each function value:} %
$$
{x}_{in}(\spatialvar) := ({x}(\spatialvar),\spatialvar)\;,\quad \spatialvar\in \domain\;.
$$
{This approach breaks the translation equivariance of the learned weights and biases, 
     and is also common in Fourier Neural Operators.}

Subsequently, 
a linear transformation 
$W_{in} \in \R^{d_{model}\times (d_x + {d_\spatialvar})}$
is applied to ${x}_{in}(\spatialvar)$ for $\spatialvar \in \domain$ to lift ${x}_{in}$ to $\calV$, 
so that 
$$
(\Tin ({x}))(\spatialvar) := W_{in} {x}_{in}(\spatialvar)\;,\quad \spatialvar \in \domain \;.
$$
The transformation $\Tin$ acts locally, 
pointwise (w.r.t. $\spatialvar\in \domain$) on ${x}_{in}$,
with output in the latent embedding space $\calV(\domain,\R^{d_{model}})$.

The transformer block $\EL: \calV\to\calV$ in \eqref{eq:Trns}
acts on inputs $v^{(0)} := \Tin(x) \in \calV $ by an iteration
(reminiscent of recurrent blocks in operator unrolling)
which is defined, for $l=1,2,...,L$, by the formal assignments
\begin{equation}\label{eq:EL}
\begin{array}{rcl}
v^{(l-1)} & {\leftarrow} & W^{(l-1)}_1 v^{(l-1)} + \Amult^{(l-1)}( v^{(l-1)} ) \;,
\\
v^{(l-1)} & {\leftarrow} & \Flayer^{(l-1)}( v^{(l-1)} ) \;,
\\
v^{(l-1)} & {\leftarrow} & W^{(l-1)}_2 v^{(l-1)} + \FNN^{(l-1)}(  v^{(l-1)} ) \;,
\\
v^{(l)} & {\leftarrow} & \Flayer^{(l-1)}( v^{(l-1)} ) \;.
\end{array}
\end{equation}
with $\EL(v^{(0)}) := v^{(L)} \in \calV$.
{
Here, 
the nonlinear $\Flayer$ performs layer normalization,
and $\FNN: \calV\to\calV$ is a standard feedforward neural network layer.

In \eqref{eq:EL},
}
the multi-head, self-attention operator $\Amult: \calV \to \calV$ is 
a composition of a (multi-) linear transformation $\Wmult$ of 
$\Nsa\in \N$ many self-attention operators, 
$$
(\Amult(v))(\spatialvar) = \Wmult(A_*(v;\theta_1)(\spatialvar),...,A_*(v;\theta_{\Nsa})(\spatialvar)), \spatialvar\in \domain, 
$$
with $A_*$ a trainable (with training parameters $\theta_j$)
\emph{continuum self-attention operator} 
acting pointwise on inputs $v\in \calV(\domain,\R^{d_{model}})$.

The continuum self-attention operator is obtained in 
\cite[Appendix~1]{calvello2025continuumattentionneuraloperators}
by a limiting argument from the originally proposed 
discrete self-attention in \cite{vaswani2023attentionneed}.
With the probability density function $p(\cdot;v,\spatialvar):\domain\to \R_+$
$$
p(\spatialvarother;v,\spatialvar) 
= 
\frac{\exp(\langle Qv(\spatialvar), Kv(\spatialvarother) \rangle)}{\int_\domain \exp(\langle Qv(\spatialvar), K{v}(s) \rangle) ds}\;,
\quad \spatialvar\in \domain, \spatialvarother\in \domain, 
$$
where $Q,K$ are query and key matrices, respectively, 
it takes the form
\begin{equation} \label{eq:ContSA}
A_*(v)(\spatialvar) = \IE_{\spatialvarother\sim p(\spatialvarother;v,\spatialvar)}[Vv(\spatialvarother)] \;, \quad \spatialvar\in \domain \;.
\end{equation}
Here, $V$ is a (trainable) value matrix.

The weights $W^{(l-1)}_1,W^{(l-1)}_2$ in \eqref{eq:EL} are linear (matrix-multiplication) 
operations that act pointwise: $v(\spatialvar)\mapsto W^{(l-1)}_i v(\spatialvar)$. 
Similar integral-operator representations can be obtained 
for a ``continuum cross-attention'' operator, 
see \cite[Appendix~2]{calvello2025continuumattentionneuraloperators} for details.

An important observation is that for $Q,K = 0$ and $V=I$, 
the continuum self-attention map
reduces to a simple averaging, which can be viewed a continuous analog to the pooling operation:
$$
A(v;Q,K,V)(\spatialvar) = \IE_{\spatialvarother\sim p(\spatialvarother;v,\spatialvar)}[Vv(\spatialvarother)] = \frac{1}{|\domain|} \int_{\domain} v(\spatialvarother)d\spatialvarother.
$$
This implies \emph{universal approximation} of transformers 
\cite[Theorems~22,23]{calvello2025continuumattentionneuraloperators}, 
as a consequence of the general result in \cite{lanthaler2023nonlocal}.
\section{Training}\label{sec:training}
Constructing a surrogate model for a function $\cG$ is typically framed as a supervised learning problem. 
This process relies on a training dataset consisting of $m\in\N$ input-output pairs of the form
\begin{equation}\label{eq:data}
(x_i, y_i)_{i=1}^m\qquad \text{where}\qquad y_i=\cG(x_i) + \eps_i\in\cY.
\end{equation}
The goal is to learn an approximation of $\cG$ from this data. 
Again, $x_i$ represent the input points, 
and the $\eps_i$ may account for (measurement) noise or numerical errors, depending on the source of the data. 
The training points $x_i$ {are assumed to be} %
i.i.d.\ draws from a {probability} measure $\mu$ on $\cX$. 
Alternatively, they can be a priori selected based on a fixed design strategy, 
or chosen adaptively to maximize the information gained about $\cG$, 
which is known as active learning.
\begin{remark}\label{rmk:whitenoise}
A common modelling assumption is that $\eps_i$ is white noise. 
In this case $y_i$ is formally not an element of $\cY$. 
We omit discussing this in more detail, and refer to \cite{RWZ24}.
\end{remark}

We distinguish between three cases:
\begin{itemize}
\item {\bf direct construction}: In some cases, the surrogate can be
  constructed directly without the need for optimization. This applies
  when there is a known, computable map from the data to the surrogate
  function, as in classical interpolation methods (e.g., the
  sparse-grid spectral operator in Section \ref{sec:sg}, or {kernel
    based approaches \cite{BATLLE2024112549} computed by directly
    solving a linear system}). {These methods typically require the
    ability to evaluate $\cG$ at specific, deterministically chosen
    points.}
\item {\bf convex objective}: For some models, the training objective is a convex function of the parameters $\theta$. This typically occurs when the surrogate $\cG_\theta$ depends affine linearly on $\theta$ and a standard loss function is used. The resulting optimization problem usually has at least one global minimum, which can be found efficiently. Examples include polynomial chaos surrogates (Section \ref{sec:pc}), compressed sensing based approaches \cite{ADCOCK2025106761}, 
  random feature models \cite{NN+AMS24}, and kernel-based methods \cite{BATLLE2024112549}.
\item {\bf non-convex objective}: 
This is the most common scenario in operator learning, particularly when deep neural networks are part of the architecture. Here, the surrogate $\cG_\theta$ depends nonlinearly on its parameters $\theta$, yielding a non-convex optimization landscape with multiple local minima. This includes the majority of deep learning-based architectures.
\end{itemize}

For simplicity, we let $\cX$, $\cY$ be Hilbert spaces in the rest of this section.

\subsection{Empirical Risk Minimization}
A typical loss function used in operator learning is the square loss. 
For a class of operator {surrogates} ${\hat \cG}_\theta$ 
depending on a finite dimensional parameter $\theta$, 
given data \eqref{eq:data} we then wish to minimize the empirical risk
\begin{equation}\label{eq:erm}
    \hat{\cR}_m(\theta):=\frac{1}{m}\sum_{j=1}^m \norm[\cY]{{\hat \cG}_\theta(x_j)- y_j}^2
\end{equation}
in the parameter $\theta$. 
In practice the {$\cY$}-norm is not computable, but must be approximated.

  Let us specify how to do this for specific encoder-decoder architectures
  \begin{equation}\label{eq:ermcg}
    {\hat \cG}_\theta = \cD\circ {g}_\theta \circ \cE.
  \end{equation}
  We assume $\cE:\cX\to\R^{d_\cX}$, $\cD:\R^{d_\cY}\to\cY$ to be linear as defined in \eqref{eq:EDframe} via orthonormal bases $(\psi_{\cX,i})_{i\in\N}$, $(\psi_{\cY,j})_{j\in\N}$, of $\cX$, $\cY$, i.e.\
  \begin{equation}\label{eq:ermed}
    \cE(x)=(\dup{x}{\psi_{\cX,i}}_\cX)_{i=1}^{d_\cX}\in\R^{d_\cX},\qquad
    \cD((c_j)_{j=1}^{d_{\cY}}) = \sum_{j=1}^{d_\cY} c_j\psi_{\cY,j}.
  \end{equation}
  Due to Parseval's identity it is natural to work directly on the coefficient sequences. 
We thus let
  \begin{align*}
    \bar{\cR}_m(\theta)&:=\frac{1}{m}\sum_{j=1}^m \norm[2]{\underbrace{{g}_\theta\circ \cE(x_j)-%
        (\dup{y_j}{\psi_{\cY,i}})_{i=1}^{d_\cY}
      }_{\in\R^{d_{\cY}}}}^2 \ {=\hat\cR_m(\theta)-\underbrace{\frac{1}{m}\sum_{j=1}^m\norm[2]{(\dup{y_j}{\psi_{\cY,i}})_{i>d_{\cY}}}^2}_{\text{truncation error}},}
  \end{align*}
  {where the last equality follows by the fact that
  $(\psi_{\cY,i})_{i\in\N}$ is a sequence of orthonormal vectors.} 
  Up to the additive and $\theta$ independent truncation
  error, %
  this objective is thus equal to the empirical risk $\hat{\cR}_m$ and
  is easily computable. For simplicity we assumed orthonormal bases in
  Hilbert spaces; when working with $p$-Banach frames, the coefficient
  based losses additionally differ up to the frame constants.
  
\subsection{Adding Derivatives}
\label{sec:LrnDeriv}
Let ${\hat \cG}_\theta$, $\cE$ and $\cD$ be as in the previous subsection. 
As proposed and analyzed in 
\cite{10.5555/3294996.3295182,OLEARYROSEBERRY2022114199,OLEARYROSEBERRY2024112555,
      luo2025dimensionreductionderivativeinformedoperator,yao2025derivativeinformedfourierneuraloperator}, 
one can add gradient information to the objective, {yielding so-called \emph{Sobolev training}.}
{For smooth classes of operators such as e.g.\ holomorphic maps $\calG$,
information on first derivatives of the operators to be emulated is often available.
For example, in shape optimization, shape-gradients are in any case numerically evaluated
(see, e.g., \cite{OLEARYROSEBERRY2024112555}).
In practice, inclusion of differential or gradient information 
in operator learning}
can significantly boost accuracy of the surrogate \cite{OLEARYROSEBERRY2024112555}. 
{Derivative-informed operator learning requires training data of the form}
\begin{equation*} 
  (x_j, y_j,Z_j)_{j=1}^m\qquad \text{where}\qquad y_j=\cG(x_j) + \eps_j\in\cY,\qquad
  Z_j = D\cG(x_j)+E_j\in L(\cX,\cY).
\end{equation*}
As before $x_j\in\cX$, and $\eps_j$ as well as $E_j$ 
are interpreted as noise or error terms. 

Using again the square loss, the risk becomes
\begin{equation}\label{eq:derloss}
\hat{\cR}_m^1(\theta)
:=\hat{\cR}_m(\theta) + \frac{1}{m}\sum_{j=1}^m\norm[]{\underbrace{\calD \circ D \coeffmap_\theta(\cE(x_j))\circ \calE(x_j)}_{= D{\hat \cG}_\theta(x_j)}- Z_j}^2,
\end{equation}
{where $\norm[]{\cdot}$ is a suitable norm (see below),}
$D\coeffmap_\theta\in \R^{d_{\cY}\times d_{\cX}}$ is the Jacobian of $\coeffmap_\theta:\R^{d_\cX}\to\R^{d_\cY}$, 
and we used that $\cE$, $\cD$ are linear maps.

Since $D{\hat \cG}_\theta(x_j)$ and $Z_j$ belong to $L(\cX,\cY)$, 
it would be natural to take the operator norm in \eqref{eq:derloss}. 
In practice the Hilbert-Schmidt norm is preferred, as it simplifies the calculation. 
Recall that for a linear map $A:\cX\to\cY$, 
its squared Hilbert Schmidt norm is 
{
$\norm[{\rm HS}(\cX,\cY)]{A}^2 =
\sum_{i\in\N}\norm[\cY]{Ae_i}^2=\sum_{i,k\in\N}\dup{Ae_i}{f_k}^2$, 
}
where $(e_i)_{i\in\N}$, $(f_k)_{k\in\N}$, are (arbitrary) orthonormal bases of $\cX$ and $\cY$. 
Choosing our orthonormal encoding and decoding bases $\psi_{\cX,i}$, $\psi_{\cY,k}$, 
we thus have
\begin{equation*}
  \norm[{\rm HS}(\cX,\cY)]{A}^2 = \sum_{i\le d_{\cX},k\le d_{\cY}}\dup{A\psi_{\cX,i}}{\psi_{\cY,k}}^2 + \underbrace{\sum_{i>d_{\cX}\text{ or }k>d_{\cY}}\dup{A\psi_{\cX,i}}{\psi_{\cY,k}}^2}_{\text{truncation error}}.
\end{equation*}

Since $\dup{D{\hat \cG}_\theta\psi_{\cX,i}}{\psi_{\cY,k}}=0$ 
whenever either $i>d_{\cX}$ or $k>d_{\cY}$ (cf.~\eqref{eq:ermcg} and \eqref{eq:ermed}), 
we obtain {(using the Hilbert-Schmidt norm in \eqref{eq:derloss})}
  \begin{align} \label{eq:barcrm}
    \bar{\cR}_m^1(\theta)&:=\bar{\cR}_m(\theta)+\frac{1}{m}\sum_{j=1}^m\norm[F]{\underbrace{D \coeffmap_\theta(\cE(x_j))- (\dup{Z_j\psi_{\cX,i}}{\psi_{\cY,k}})_{i\le d_{\cX},k\le d_{\cY}}}_{\in\R^{d_\cY\times d_\cX}}}^2\nonumber\\
                         &{=\hat\cR_m^1(\theta) -\underbrace{\frac{1}{m}\sum_{j=1}^m\Big(
                           \sum_{i>d_{\cY}}\dup{y_j}{\psi_{\cY,i}}^2 +
                           \sum_{i>d_{\cX}\text{ or }k>d_{\cY}}\dup{Z_j\psi_{\cX,i}}{\psi_{\cY,k}}^2\Big)}_{\text{truncation error}},}
  \end{align}
  where $\norm[F]{\cdot}$ is the Frobenius norm of this matrix in $\R^{d_\cY\times d_\cX}$. 
  {Since the truncation error in \eqref{eq:barcrm} is independent of $\theta$, minimizing
    $\bar{\cR}_m^1(\theta)$ is equivalent to the minimization of
    $\hat{\cR}_m^1(\theta)$. The term $\bar{\cR}_m^1(\theta)$ is easy
    to compute, and merely requires the Jacobians of $\coeffmap_\theta$. 
    In particular, all computations can be done at the level of coefficients.
    }
\section{Approximation Errors}\label{sec:approximation}
\label{sec:AppErr}
For ease of presentation, throughout this section we focus once more on $\cX$, $\cY$ being separable 
Hilbert spaces and stable linear representations as described in Section \ref{sec:stable}. 
However, many arguments extend straightforwardly to the case of {separable} Banach spaces 
which admit a {so-called Schauder basis (a countable basis with certain stability properties
analogous to Example~\ref{ex:frames}, 3rd bullet point). 
We refer to \cite{TriebelWavelets2008,TriebelBases2010} for constructions of such bases
for a wide range of separable Sobolev and Besov spaces which arise 
as data and solution spaces for PDEs.
}
\subsection{Universal Approximation}\label{sec:compactness}
We first address the question of what kind of operators can be learned {by finite-parametric surrogate families}.
For functions with finite dimensional in- and output, 
\emph{universal approximation with neural networks} 
(and polynomials, wavelets etc.) is classical, e.g., \cite{LESHNO1993861}. 
In infinite dimensions, {in the context of operator learning},
the situation becomes a bit more subtle. 
The next statement highlights the importance of compactness of the input set.

\begin{proposition}\label{prop:compactness}
  Let $\cX$ be a Banach space.
  Let $K\subseteq\cX$ be bounded and such that $\bar K$ is not compact. 
  Then there exists $\eps>0$, such that 
  for any $d\in\N$, any bounded linear $\cE:\cX\to\R^d$, 
  and for any continuous map $\cF:\R^d\to\cX$
  \begin{equation*}
    \sup_{x\in K} \norm[\cX]{x-\cF\circ \cE(x)}\ge \eps.
  \end{equation*}
\end{proposition}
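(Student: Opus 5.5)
The plan is to use the standard characterization of compactness in Banach spaces. A set is relatively compact if and only if it is totally bounded. Moreover, a bounded set is relatively compact if and only if it can be approximated uniformly, to any accuracy, by a relatively compact set. Since $\bar K$ is not compact, $K$ is not totally bounded. So there is $\delta>0$ such that $K$ contains an infinite $\delta$-separated sequence $(x_n)_{n\in\N}$, i.e.\ $\norm[\cX]{x_n-x_m}\ge\delta$ for $n\neq m$. I will set $\eps:=\delta/3$. Crucially, $\delta$ depends only on $K$, not on $d$, $\cE$ or $\cF$.

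Next, fix $d$, a bounded linear $\cE:\cX\to\R^d$ and a continuous $\cF:\R^d\to\cX$. Since $K$ is bounded, say $K\subseteq\{\norm[\cX]{x}\le M\}$, the image $\cE(K)$ lies in the closed ball of radius $\norm{\cE}M$ in $\R^d$. This ball is compact by Heine--Borel, and this is exactly where finite dimensionality of the latent space enters. The continuous map $\cF$ sends this compact ball to a compact set $\cF(\overline{B})\subseteq\cX$. Hence $\cF\circ\cE(K)$ is relatively compact, and in particular totally bounded.

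Then I argue by contradiction. Suppose $\sup_{x\in K}\norm[\cX]{x-\cF\circ\cE(x)}<\eps$. The sequence $(\cF\circ\cE(x_n))_n$ lies in a compact set, so it has a Cauchy subsequence. Choose $n\neq m$ in that subsequence with $\norm[\cX]{\cF\circ\cE(x_n)-\cF\circ\cE(x_m)}<\eps$. The triangle inequality then gives
\begin{equation*}
\delta\le\norm[\cX]{x_n-x_m}\le \norm[\cX]{x_n-\cF\circ\cE(x_n)}+\norm[\cX]{\cF\circ\cE(x_n)-\cF\circ\cE(x_m)}+\norm[\cX]{\cF\circ\cE(x_m)-x_m}<3\eps=\delta,
\end{equation*}
which is a contradiction. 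Therefore the supremum is at least $\eps$.

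The only step needing care is the first one: extracting a uniform $\delta$-separated sequence from non-compactness of $\bar K$, with $\delta$ independent of $d$, $\cE$ and $\cF$. I would do this directly from the failure of total boundedness. Since $\cX$ is complete, $\bar K$ compact is equivalent to $K$ totally bounded. So there is $\delta>0$ such that no finite $\delta$-net exists, and greedily picking points not covered by the $\delta$-balls around the previous ones yields the sequence. Note that linearity of $\cE$ is used only through boundedness of $\cE(K)$. In fact, continuity of $\cF$ on the compact set $\overline{\cE(K)}$ is all that is needed.
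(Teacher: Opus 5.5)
Your proof is correct and follows the paper's argument. You extract a separated sequence in $K$ from the failure of total boundedness, with the separation constant depending only on $K$. You then get a Cauchy subsequence of the images from boundedness of $\cE(K)$ in $\R^d$ and continuity of $\cF$, and finish with the triangle inequality. The differences are cosmetic: the paper takes the Cauchy subsequence of $\cE(x_j)$ in $\R^d$ before applying $\cF$, and its limit argument gives half the separation constant ($\delta/2$) where your strict-inequality contradiction gives $\delta/3$.
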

\begin{proof}
Since $\bar K$ is closed but not compact, 
$\bar K$ is not totally bounded, meaning there exists $\eps>0$ 
such that $\bigcup_{x\in K} B_{2\eps}(x)\supseteq \bar K$ has no finite subcover. 
From this we can conclude that there exists a sequence $(x_j)_{j\in\N}\subseteq K$ 
such that
\begin{equation}\label{eq:xixjeps}
  \norm[\cX]{x_i-x_j}\ge 2\eps\qquad\forall i\neq j.
\end{equation}
On the other hand $\cE(x_j)_{j\in\N}$ is a bounded subset of $\R^d$, 
and therefore has a subsequence that is a Cauchy sequence. 
Without loss of generality, let this sequence be $\cE(x_j)_{j\in\N}$. 
Due to the continuity of $\cF$ 
  \begin{equation*}
    \lim_{N\to\infty}\sup_{i,j\ge N} \norm[\cX]{\cF(\cE(x_i))-\cF(\cE(x_j))} = 0.
  \end{equation*}
  Together with \eqref{eq:xixjeps} we conclude that
  \begin{equation*}
    2\sup_{i\in\N}\norm[\cX]{x_i-\cF(\cE(x_i))}\ge
    \sup_{i\neq j\in\N}|\norm[\cX]{x_i-x_j} -\norm[\cX]{\cF(\cE(x_i))-\cF(\cE(x_j))}|\ge
    2\eps.\qedhere
  \end{equation*}
\end{proof}
The proposition states that if $\bar K$ is not compact, 
and we use a bounded linear encoder $\cE$, 
then no architecture that processes the encoded input $\cE(x)$ continuously 
(which covers all common architectures) is capable of 
uniformly approximating the identity operator $\cG={\rm Id}$ on $K$. %
The statement does not extend to nonlinear encoders $\cE$. 
However, in most practical situations $\cX$ is a function space continuously 
embedded into the continuous functions, and the first step is 
to evaluate the input on a grid {in the physical (space-time) domain;}
this corresponds to a linear encoding. 
For any such method, uniform approximation of operators on non-compact input sets in $\cX$
is therefore in general impossible. 
We emphasize that this limitation does not apply to approximation 
with respect to weaker norms, 
such as $L^p$ norms, see, e.g., \cite{SLSMGEM22,HSZ24}.

A similar argument implies that for linear decoders, 
the operators' response manifolds $\cG(x)_{x\in K}\subseteq\cY$ 
must also be relatively compact; again this is not required for nonlinear decoders. 
We do not discuss this in further detail and 
instead turn to a positive result on universal approximation in Hilbert spaces from \cite{schwab2023deep}. 
To state it we first introduce the set of admissible activation functions from \cite{LESHNO1993861}
\begin{align*}
  \cM:=\{&\sigma\in L_{\rm loc}^\infty(\R)\text{ is not a polynomial{,} and the closure of}\\
  &\text{the points of discontinuity form a Lebesgue null set}\},
\end{align*}
{where $L_{\rm loc}^\infty(\R)$ is the set of (possibly unbounded) measurable 
functions on $\R$ which are (essentially) bounded on each bounded subinterval of $\R$.}
We consider surrogates of the form
  \begin{align}\label{eq:UAarchitecture}
    \{\cD\circ \coeffmap \circ \cE\,:\,&d_{\cX},d_{\cY}\in\N;~\cE:\cX\to\R^{d_\cX},~\cD:\R^{d_\cY}\to\cY\text{ are linear encoders/decoders }\nonumber\\
                              &\text{based on stable representations as in Section \ref{sec:stable}};\\
    &\coeffmap:\R^{d_\cX}\to\R^{d_\cY}\text{ is a feedforward NN with activation function }\sigma\}.
  \end{align}
  The next theorem corresponds to \cite[Theorem 3.1]{schwab2023deep}.

\begin{theorem}\label{thm:UA}
  Let $\cX$, $\cY$ be separable Hilbert spaces. 
  Then for every continuous $\cG:\cX\to\cY$ 
  there exists a sequence %
  $\cG^k=\cD^k\circ \coeffmap^k\circ\cE^k$, $k\in\N$, 
  with architecture \eqref{eq:UAarchitecture} 
  such that
  \begin{equation}\label{eq:UA}
    \forall x\in\cX:\qquad\lim_{k\to\infty} \cG^k(x)=\cG(x).
  \end{equation}
  The convergence is uniform on every compact subset $K$ of $\cX$.
\end{theorem}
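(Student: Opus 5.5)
The plan is to combine finite-dimensional universal approximation of \cite{LESHNO1993861} with orthonormal-basis truncation, using a diagonal argument. Fix orthonormal bases $(\psi_{\cX,j})_{j\in\N}$ of $\cX$ and $(\psi_{\cY,i})_{i\in\N}$ of $\cY$ (first bullet of Example~\ref{ex:frames}). For $k\in\N$ let $\cE^k(x)=(\dup{x}{\psi_{\cX,j}})_{j=1}^{k}$, let $\cD^k$ be synthesis with $\psi_{\cY,1},\dots,\psi_{\cY,k}$, and write $P_k=R_k\circ\cE^k$ for the orthogonal projection onto ${\rm span}\{\psi_{\cX,1},\dots,\psi_{\cX,k}\}$, where $R_k$ is the reconstruction map. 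Let $Q_k$ be the orthogonal projection in $\cY$ onto the first $k$ output basis vectors. The target map on coefficients is $f^k:=(\cD^k)^{-1}\circ Q_k\circ\cG\circ R_k:\R^k\to\R^k$, i.e.\ $f^k(c)=(\dup{\cG(R_kc)}{\psi_{\cY,i}})_{i=1}^k$. It is continuous, since $\cG$ is.

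Step 1 is the error split. For $x\in\cX$,
\begin{equation*}
\norm[\cY]{\cG(x)-\cD^k g^k\cE^k x}\le \norm[\cY]{\cG(x)-\cG(P_kx)}+\norm[\cY]{(I-Q_k)\cG(P_kx)}+\norm[2]{f^k(\cE^kx)-g^k(\cE^kx)},
\end{equation*}
where the last term uses that $\cD^k$ is an isometry.

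Step 2 treats the first two terms for fixed $x$. Since $P_kx\to x$, continuity of $\cG$ gives $\cG(P_kx)\to\cG(x)$, so the first term tends to zero. For the second term, $\norm{(I-Q_k)\cG(P_kx)}\le \norm{\cG(P_kx)-\cG(x)}+\norm{(I-Q_k)\cG(x)}$, and both pieces tend to zero.

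Step 3 treats the third term. By \cite{LESHNO1993861}, for $\sigma\in\cM$ there is a feedforward NN $g^k$ with $\sup_{\norm[2]{c}\le k}\norm[2]{f^k(c)-g^k(c)}\le 1/k$, applied componentwise on the compact ball of radius $k$ in $\R^k$. For fixed $x$ we have $\norm[2]{\cE^kx}\le\norm[\cX]{x}\le k$ eventually, so this term is eventually at most $1/k$. This proves pointwise convergence \eqref{eq:UA}.

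Step 4, uniform convergence on a compact $K$, is the main obstacle, since the Step 2 terms must vanish uniformly on $K$. The first ingredient is $\sup_{x\in K}\norm{x-P_kx}\to0$. This follows from compactness via a finite $\delta$-net, because the $P_k$ are uniformly bounded by $1$. Next, $K':=K\cup\bigcup_k P_k(K)$ is compact: any sequence in it either has infinitely many terms in a single $P_k(K)$, or is of the form $P_{k_n}x_n$ with $k_n\to\infty$, which by the previous fact accumulates on $K$. Hence $\cG$ is uniformly continuous on $K'$, and the first term goes to zero uniformly. For the second term, $\cG(K)$ is compact, so $\sup_{y\in\cG(K)}\norm{(I-Q_k)y}\to0$ by the same net argument. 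Finally, $K$ is bounded, so Step 3 applies uniformly once $k\ge\sup_K\norm{x}$.
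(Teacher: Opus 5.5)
Your proof is correct. The paper gives no argument of its own here; it only records that the statement is \cite[Theorem 3.1]{schwab2023deep}. Your route is the standard self-contained way to establish it:
\begin{itemize}
\item truncation in orthonormal bases, so that $\cD^k$ is an isometry;
\item the finite-dimensional density result of \cite{LESHNO1993861}, applied on the balls of radius $k$ in $\R^k$;
\item compactness of $K\cup\bigcup_k P_k(K)$ and of $\cG(K)$, which makes both truncation errors uniform on $K$.
\end{itemize}

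Two small gaps need tidying. In the compactness argument for $K\cup\bigcup_k P_k(K)$, also treat the trivial case of infinitely many terms lying in $K$ itself. If $\cX$ or $\cY$ is finite dimensional, truncate at $\min\{k,\dim\cX\}$ resp.\ $\min\{k,\dim\cY\}$.

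One further remark concerns discontinuous $\sigma\in\cM$. The density statement of \cite{LESHNO1993861} is formulated in $L^\infty(K)$, i.e.\ up to Lebesgue null sets, whereas you evaluate the bound at the specific points $\cE^k x$. This does not matter for continuous $\sigma$. For $\sigma$ that are bounded on bounded sets, the Riemann-sum step of their proof actually yields a genuine supremum, so the fix costs one sentence. The paper's formulation passes over the same point.
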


{
  \begin{remark}\label{rmk:UnivAppr}
    Given a class of approximating mappings $\mathcal{H}$, 
    universal approximation theorems are often stated in the following form:
\begin{equation}\label{eq:approximation_quote}
  \begin{minipage}{0.7\textwidth}
    For every $\epsilon>0$ and every compact $K\subseteq\cX$ 
    there exists $\hat{\cG}_{\epsilon,K}\in\mathcal{H}$ 
    such that $\sup_{x\in K}\lVert\cG(x)-\hat{\cG}_{\epsilon,K}(x)\rVert_{\cX}<\epsilon$.
  \end{minipage}
\end{equation} 

If $\cX$ is finite dimensional, this is equivalent to Theorem \ref{thm:UA} 
as is easy to see (e.g., \cite[Prop.~3.4]{petersen2025mathematicaltheorydeeplearning}). 
In the infinite dimensional case, Theorem \ref{thm:UA} gives a strictly stronger statement.
\end{remark}
}

\begin{remark}
A typical setting where compactness holds is the following: 
let $\domain\subset \R^n$ be a bounded domain, with $\partial \domain$ Lipschitz.
Then for $s>s' \geq 0$, {and any $1\leq p,q \leq \infty$, it holds}
$B^{s}_{pq}(\domain) \subset B^{s'}_{pq}(\domain)$ with compact
embedding (Rellich's theorem) \cite{TriebelWavelets2008}.
\end{remark}
Universal approximation theorems for other architectures can for instance be found in 
\cite{SLSMGEM22,Kovachki2021,godeke2025newuniversaloperatorapproximation,kratsios2023approximation}.
For the recently popular transformer architectures, 
universal approximation has been shown
e.g.\ in \cite{calvello2025continuumattentionneuraloperators}.
\subsection{$C^k$ Operators}
\label{sec:Ck}
After having established universal approximation, we next discuss convergence rates over classes of operators. Classical approximation theory is concerned with smoothness spaces, such as the $k$ times differentiable functions. This motivates {us} to consider the unit ball of the $C^k$ operators from $\cX\to\cY$, i.e.\
\begin{equation*}
  C_1^{k}(\cX,\cY):= \set{\cG:\cX\to\cY}{\max_{0\le j\le k}\sup_{x\in \cX}\norm{D^j\cG(x)}\le 1}.
\end{equation*}
As we have seen, compactness of the input set is crucial, and therefore we focus on the uniform approximation of these operators on some compact set $K\subseteq\cX$. In order to qualify as true ``operator learning'', 
$K$ should not be contained in a finite {dimensional} subspace or manifold; 
this would render the problem effectively finite dimensional. 
To formalize this, we assume that $K$ contains a full ball within a compactly embedded subspace:
\begin{assumption}\label{ass:K}
  Let $\tilde\cX$ be an infinite dimensional Hilbert space that embeds compactly in $\cX$. There exists $r>0$ such that $\set{x\in\tilde\cX}{\norm[\tilde \cX]{x}\le r}\subseteq K$.
\end{assumption}
We start by recalling a by now classical result regarding 
continuous nonlinear $N$-widths of unit balls in $C^k([0,1]^d)$ from \cite{DeVore1989}.

{
\begin{theorem}\label{thm:devore}
  Let $d$, $k\in\N$. Then there exists $C>0$ such that
  for every map $\bar\cA:\R^N\to C^0([0,1]^d)$
  and every continuous
  map $\bar\Theta:C^0([0,1]^d)\to\R^N$
  there exists $g \in C^k([0,1]^d)$
  with $\norm[{C^k([0,1]^d)}]{g}\le 1$
  such that
\begin{equation}\label{eq:nonlnwidth}
  \norm[{C^0([0,1]^d)}]{g-\bar\cA(\bar\Theta(g))}\ge C N^{-k/d}\qquad\forall N\in\N.
\end{equation}
\end{theorem}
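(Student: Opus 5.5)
The plan is the classical DeVore--Howard--Micchelli argument: a Borsuk--Ulam obstruction applied to an $(N+1)$-dimensional space of bump functions. Note that continuity of the encoder $\bar\Theta$ is essential, while $\bar\cA$ may be arbitrary.

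\textbf{Step 1 (bump construction).} Fix a nonnegative $\phi\in C^\infty(\R^d)$ supported in $(0,1)^d$ with $\phi\not\equiv 0$. Given $N$, choose $m\in\N$ minimal with $m^d\ge N+1$, so that $m\simeq N^{1/d}$. Partition $[0,1]^d$ into $m^d$ congruent cubes of side $1/m$ with lower corners $z_i$, and set
\[
\phi_i(\eta):=c\,m^{-k}\phi(m(\eta-z_i)).
\]
The $\phi_i$ have disjoint supports, and $\partial^\Bnu\phi_i$ carries a factor $m^{|\Bnu|-k}\le 1$ for $|\Bnu|\le k$. Choosing $c$ small (depending only on $\phi,k,d$), every $g_\By:=\sum_{i=1}^{N+1}y_i\phi_i$ with $\norm[\infty]{\By}\le 1$ satisfies $\norm[C^k]{g_\By}\le 1$, by disjointness of supports. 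Moreover
\[
\norm[C^0]{g_\By}=c\,m^{-k}\norm[\infty]{\phi}\,\norm[\infty]{\By}.
\]

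\textbf{Step 2 (Borsuk--Ulam).} Let $S$ be the boundary of the unit $\ell^\infty$-ball in $\R^{N+1}$, which is homeomorphic to the sphere $S^N$. The map $\By\mapsto\bar\Theta(g_\By)$ is continuous from $S$ to $\R^N$, since $\By\mapsto g_\By$ is continuous into $C^0$. By Borsuk--Ulam there is $\By\in S$ with $\bar\Theta(g_\By)=\bar\Theta(g_{-\By})=\bar\Theta(-g_\By)$. Write $h:=\bar\cA(\bar\Theta(g_\By))$. Then
\[
2\norm[C^0]{g_\By}\le\norm[C^0]{g_\By-h}+\norm[C^0]{-g_\By-h},
\]
so one of $g=\pm g_\By$ satisfies $\norm[C^0]{g-\bar\cA(\bar\Theta(g))}\ge\norm[C^0]{g_\By}=c\,m^{-k}\norm[\infty]{\phi}$, using $\norm[\infty]{\By}=1$.

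\textbf{Step 3 (rate).} Since $m\le (N+1)^{1/d}+1\le 3N^{1/d}$, we get $m^{-k}\ge 3^{-k}N^{-k/d}$. This gives the bound \eqref{eq:nonlnwidth} with $C=c\,3^{-k}\norm[\infty]{\phi}$, independent of $N$, $\bar\cA$ and $\bar\Theta$.

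The statement as written quantifies over all $N$ at once. I read it as: for each $N$ and each pair $(\bar\cA,\bar\Theta)$ with latent dimension $N$ there is such a $g$, which is exactly what Steps 1--3 provide.

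The only genuinely nontrivial ingredient is Step 2, which needs the topological fact (Borsuk--Ulam) and requires that the test family be $(N+1)$-dimensional and symmetric. The $C^k$ norm bookkeeping in Step 1 is routine, but the scaling $m^{-k}$ must be matched exactly so that the rate comes out as $N^{-k/d}$.
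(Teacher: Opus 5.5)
Your argument is correct. It is exactly the classical DeVore--Howard--Micchelli proof: continuous nonlinear widths are bounded below by the Bernstein width via the Borsuk antipodality theorem, applied to the $\ell^\infty$-sphere of an $(N+1)$-dimensional span of scaled, disjointly supported bumps. The paper does not reproduce this proof but relies on it by citing \cite{DeVore1989}. The only implicit point is that the homeomorphism $S\cong S^N$ must commute with $\By\mapsto-\By$ (radial projection does), and your reading of the quantifier over $N$ is the intended one.
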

}

{An} interpretation of the theorem is as follows: 
the mapping $\bar\cA$ takes $N$ parameters to a function in $C^0([0,1]^d)$. 
We can think of it as some neural network (or other) framework parametrizing a function class with a finite number of parameters $\theta\in\R^N$. 
The mapping $\bar\Theta$ takes a $g\in C^0([0,1]^d)$ and 
returns a corresponding parameter vector $\bar\Theta(g)\in\R^N$. 
The composition $\bar\cA\circ\bar\Theta$ should then yield an approximation of the identity. 
It thus gives a lower bound on how well the function class 
$\set{g\in C^k([0,1]^d)}{\norm[C^k]{g}\le 1}$ can be uniformly approximated with $N$ parameters. 
Importantly, \emph{the parameter assignment $\bar\Theta$ is assumed continuous},
i.e.\ functions that are close have similar parameters.  This is a
form of stability, that is reasonable to request for a practical
algorithm.  Without such a restriction, the quantity
\eqref{eq:nonlnwidth} becomes zero, due to the existence of
space-filling curves \cite{DeVore1989}.
\begin{remark}
{Typical neural network training does not satisfy continuity 
  of the parameter
  assignment $\bar\Theta$: Random initialization and the use of
  stochastic optimization algorithms produce different weights on
  different runs, even for the same data. Nevertheless, some form of
  stability is necessary for any practical algorithm to avoid the
  pathology of space-filling curves. As such, while a simplification,
  the continuity hypothesis serves as a mathematical idealization, and
  the lower bounds remain meaningful to indicate the limits of stable
  approximation methods.}
\end{remark}

\begin{proposition}\label{prop:parameters}
Let $\cX$, $\cY$ be two separable Hilbert spaces. 
Let $K\subseteq\cX$ be compact and satisfy Assumption \ref{ass:K}. 
Then for any $\alpha>0$ there exists $C_\alpha>0$ such that
  \begin{equation*}
  \inf_{\substack{\Theta:C^0(\cX,\cY)\to\R^N~\text{cont.}\\
      \cA:\R^N\to C^0(K,\cY)}}~~\sup_{\cG\in C_1^k(\cX,\cY)} ~~\sup_{x\in K}\norm[\cY]{\cG(x)-\cA(\Theta(\cG))(x)}\ge C_\alpha N^{-\alpha}\qquad\forall N\in\N.
  \end{equation*}
\end{proposition}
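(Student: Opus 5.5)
Given $\alpha>0$, I reduce to Theorem~\ref{thm:devore} in dimension $d$ large enough that $k/d<\alpha$; any fixed $d>k/\alpha$ works. The idea is to embed the unit ball of $C^k([0,1]^d)$, suitably rescaled, into $C_1^k(\cX,\cY)$ via a $d$-dimensional cube sitting inside $K$. By Assumption~\ref{ass:K} and $\dim\tilde\cX=\infty$, I pick $\tilde\cX$-orthonormal $e_1,\dots,e_d$ and define
\begin{equation*}
\iota:[0,1]^d\to K,\qquad \iota(t)=\frac{r}{\sqrt d}\sum_{j=1}^d t_je_j .
\end{equation*}
Then $\norm[\tilde\cX]{\iota(t)}\le r$, so $\iota(t)\in K$. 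Let $V={\rm span}\{e_1,\dots,e_d\}\subseteq\cX$; it is finite dimensional, so the coordinate map $\pi:V\to\R^d$ with $\pi\circ\iota={\rm id}$ is a bounded linear isomorphism. Let $P_V$ be the $\cX$-orthogonal projection onto $V$, and fix a unit vector $y_0\in\cY$.

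Next I lift scalar functions to operators. For $g\in C^k([0,1]^d)$ with $\norm[C^k]{g}\le1$, I first extend $g$ to $\tilde g\in C^k(\R^d)$ with compact support and $\norm[C^k]{\tilde g}\le C_E$, using a Whitney/Stein-type bounded linear extension operator on the cube. I then set
\begin{equation*}
\cG_g(x):=c\,\tilde g(\pi(P_Vx))\,y_0 .
\end{equation*}
By the chain rule, $D^j\cG_g$ is bounded by $c\,C_E\norm{\pi P_V}^j$, so choosing $c>0$ small (depending only on $d,k,r$ and the $e_j$) gives $\cG_g\in C_1^k(\cX,\cY)$. Moreover $\cG_g(\iota(t))=c\,g(t)y_0$ for $t\in[0,1]^d$.

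The reduction step is as follows. Given continuous $\Theta$ and arbitrary $\cA$, I define $\bar\Theta:C^0([0,1]^d)\to\R^N$ and $\bar\cA:\R^N\to C^0([0,1]^d)$ by
\begin{equation*}
\bar\Theta(g):=\Theta(\cG_g),\qquad \bar\cA(\theta)(t):=c^{-1}\dup{\cA(\theta)(\iota(t))}{y_0}_\cY .
\end{equation*}
The function $\bar\cA(\theta)$ is continuous because $\cA(\theta)\in C^0(K,\cY)$. Then for all $t$,
\begin{equation*}
|g(t)-\bar\cA(\bar\Theta(g))(t)|\le c^{-1}\norm[\cY]{\cG_g(\iota(t))-\cA(\Theta(\cG_g))(\iota(t))}.
\end{equation*}
Theorem~\ref{thm:devore} yields $g$ with left-hand sup $\ge CN^{-k/d}$. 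Hence the supremum over $C_1^k$ and $K$ is at least $cCN^{-k/d}\ge cC N^{-\alpha}$, as desired.

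The main obstacle is continuity of $\bar\Theta$. Theorem~\ref{thm:devore} takes its domain to be $C^0([0,1]^d)$, while $g\mapsto\cG_g$ involves a $C^k$ extension and is only naturally defined on $C^k$ functions. The topology on $C^0(\cX,\cY)$ in which $\Theta$ is continuous is presumably uniform convergence (on bounded sets or on $K$). I would first note that the proof of DeVore--Howard--Micchelli only needs $\bar\Theta$ continuous on the $C^k$ unit ball in the $C^0$ topology: the lower bound comes from a Borsuk--Ulam argument on a finite-dimensional sphere of bump combinations. So I would restrict to the linear span of those bumps, where any extension is linear and finite dimensional, hence continuous. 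Equivalently, I can define $\cG_g$ only on that finite-dimensional family, where $g\mapsto\cG_g$ is linear and therefore continuous into $C^0(\cX,\cY)$ with the sup norm. If necessary, I would reprove that Borsuk--Ulam step directly for this family.
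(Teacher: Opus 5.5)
Your proof is correct, and apart from one point it is the paper's reduction. You embed a $d$-dimensional cube in $K$ with $k/d<\alpha$, lift $g\mapsto\cG_g$ through an extension operator composed with a bounded linear coordinate map on $\cX$, and pull back via $\bar\Theta(g)=\Theta(\cG_g)$ and $\bar\cA(\theta)=\cA(\theta)\circ\iota$. Testing against a unit vector $y_0$ is equivalent to the paper's preliminary reduction to $\cY=\R$.

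The divergence is in what you call the main obstacle, and the paper avoids it with a better choice of extension operator. It takes $E$ linear and bounded \emph{simultaneously} from $C^0([0,1]^d)$ to $C^0(\R^d)$ and from $C^k([0,1]^d)$ to $C^k(\R^d)$; Stein's extension operator, or a Hestenes-type reflection on the cube, has this property. With that choice $\cG_g$ is defined for every $g\in C^0([0,1]^d)$, and $\sup_{x\in\cX}|\cG_g(x)|\le C_E\|g\|_{C^0}$. So $g\mapsto\cG_g$ is a bounded linear map into $C^0(\cX,\R)$ with the sup norm, $\bar\Theta$ is continuous on all of $C^0([0,1]^d)$, and Theorem~\ref{thm:devore} applies as stated.

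Your finite-dimensional workaround is also sound. The Borsuk antipodal step of DeVore--Howard--Micchelli only needs $\bar\Theta$ continuous on a sphere in an $(N+1)$-dimensional span of bumps, and on that span any linear lift is automatically continuous. Since the bumps can be supported in the open cube, you need no extension there at all. The cost is that you must reopen that proof rather than cite Theorem~\ref{thm:devore}. The gain is that your version works for any vector topology on $C^0(\cX,\cY)$ in which $\Theta$ is continuous. The paper's route implicitly needs $\Theta$ to be continuous for a topology no stronger than uniform convergence.
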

\begin{proof}
It suffices to consider the case $\cY=\R$: If $\cY$ is any other Hilbert space, then the statement follows by lower bounding the supremum over $C_1^k(\cX,{\rm span}(e))\subseteq C_1^k(\cX,\cY)$, where $e\in\cY$ is an arbitrary norm one vector; the space $C_1^k(\cX,{\rm span}(e))$ 
can be identified with $C_1^k(\cX,\R)$.

{\bf Step 1.}
  Fix $d\in\N$ so large that $k/d<\alpha$. Let $(\tilde e_i)_{i\in\N}$ be an ONB of $\tilde \cX$, with $\tilde \cX$ as in Assumption \ref{ass:K}. By Gram-Schmidt orthogonalization, we can create an orthonormal system $(e_i)_{i\in\N}$ in $\cX$ from this sequence. By Assumption \ref{ass:K}
  \begin{equation*}
    \setc{\sum_{i=1}^dy_i\tilde e_i}{\sum_{i=1}^d y_i^2\le r}\subseteq K.
  \end{equation*}
  Hence, for some fixed $\lambda>0$ sufficiently small, 
  \begin{equation*}
    \setc{\lambda\sum_{i=1}^d y_i e_i}{0\le y_i\le 1}\subseteq K.
  \end{equation*}

{\bf Step 2.}
  Now let $\cA:\R^N\to C^0(K,\R)$ {be} arbitrary, and let $\Theta:C^0(\cX,\R)\to\R^N$ be continuous. Set
  \begin{equation*}
    \bar\cA:=\begin{cases}
                \R^N\to C^0([0,1]^d)\\
                \theta\mapsto \Big((y_1,\dots,y_d)\mapsto \cA(\theta)(\lambda\sum_{i=1}^d y_i e_i)\Big).
             \end{cases}
\end{equation*}
Next let $E:C^0([0,1]^d)\to C^0(\R^d)$ be a linear extension operator 
satisfying for all $f\in C^0([0,1]^d)$ and $g\in C^k([0,1]^d)$
\begin{equation*}
  \norm[C^0(\R^d)]{E(f)}\le C_E
  \norm[{C^0([0,1]^d)}]{E(f)}\qquad\text{and}\qquad
  \norm[C^k(\R^d)]{E(g)}\le C_E
  \norm[{C^k([0,1]^d)}]{E(g)},
\end{equation*}
for some fixed constant $C_E>0$. 
Existence of such (Whitney) extension operators is well-known {(e.g. \cite[Chap.~VI]{stein70})}.
For arbitrary $f\in C^j([0,1]^d)$, $j\in\{0,k\}$,
\begin{equation*}
  \cG_g(x):= E(g)(\lambda^{-1}\dup{x}{e_1},\dots,\lambda^{-1}\dup{x}{e_d})
\end{equation*}
defines an operator in $C^j(\cX,\R)$. 
Moreover, there exists a constant $C_\cG$ depending on $\lambda$ and $C_E$ such that for $j\in\{0,k\}$
\begin{equation*}
  \norm[C^j(\cX,\R)]{\cG_g}\le C_{\cG} \norm[{C^j([0,1]^d)}]{g}.
\end{equation*}
With this we define       
\begin{equation*}
  \bar\Theta:=\begin{cases}
                C^0([0,1]^d)\to\R^N\\
                g\mapsto \Theta(\cG_g).
              \end{cases}
\end{equation*}
Note that continuity of $\Theta$ implies continuity of $\bar\Theta$.

{\bf Step 3.}
The definitions in Step 2 imply for all $g\in C^k([0,1]^d)$
\begin{equation*}
  \bar\cA(\bar\Theta(g))(\By)=\cA(\Theta(\cG_g))\Big(\lambda\sum_{i=1}^dy_ie_i\Big)\qquad\forall\By\in [0,1]^d.
\end{equation*}
Therefore
\begin{align*}
  \sup_{\cG\in C_1^k}~~\sup_{x\in K}|\cG(x)-\cA(\Theta(\cG))(x)|
  &\ge 
  \sup_{\substack{g\in C^k([0,1]^d)\\ \norm[C^k]{g}\le C_\cG^{-1}}}~~\sup_{x\in K}|\cG_g(x)-\cA(\Theta(\cG_g))(x)|  \\
  &\ge\sup_{\substack{g\in C^k([0,1]^d)\\ \norm[C^k]{g}\le C_\cG^{-1}}}~~\sup_{x\in \set{\lambda\sum_{i=1}^d y_i e_i}{0\le y_i\le 1}}|\cG_g(x)-\cA(\Theta(\cG_g))(x)|  \\
  &=\sup_{\substack{g\in C^k([0,1]^d)\\ \norm[C^k]{g}\le C_\cG^{-1}}}~~\sup_{\By\in [0,1]^d}|g(\By)-\bar\cA(\bar\Theta(g))(x)|.
\end{align*}
By Theorem \ref{thm:devore}, the last quantity is lower bounded by $CN^{-k/d}$. Since $\cA$ and $\Theta$ were arbitrary, and $\alpha\le k/d$, this concludes the proof.
\end{proof}

According to the above theorem, under the assumption of continuous parameter dependence, there cannot exist an architecture capable of uniformly approximating $C^k$ operators at an algebraic rate. This is a fundamental restriction related to this function class. We can draw a similar conclusion for the amount of data that is required to learn $C^k$ operators.
\begin{proposition}\label{prop:data}
  Let $\cX$, $\cY$ be two separable Hilbert spaces. Let $K\subseteq\cX$ be compact and satisfy Assumption \ref{ass:K}, and let $x_1,\dots,x_N\in K$. Then for every $\alpha>0$ there exists $C_\alpha>0$ such that
  \begin{equation*}
    \inf_{\cA:\cY^N\to C^0(K,\cY)}\sup_{\cG\in C_1^k(\cX,\cY)}\sup_{x\in K}\norm[\cY]{\cG(x)-\cA(\cG(x_1),\dots,\cG(x_N))(x)}\ge C_\alpha N^{-\alpha}\qquad\forall N\in\N.
  \end{equation*}
\end{proposition}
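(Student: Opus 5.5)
The plan is to derive this statement as a direct corollary of Proposition~\ref{prop:parameters}. The key observation is that a data-based reconstruction is a special case of the ``parameter assignment plus reconstruction'' scheme considered there. Specifically, the parameter assignment is the sampling map $\cG\mapsto(\cG(x_1),\dots,\cG(x_N))$, which is linear and continuous with respect to the supremum norm. The fixed sample points therefore play the role of the $N$ parameters.

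\textbf{Step 1: reduction to $\cY=\R$.} As in the proof of Proposition~\ref{prop:parameters}, fix $e\in\cY$ with $\norm[\cY]{e}=1$ and restrict the supremum to operators $\cG=\tilde\cG\,e$ with $\tilde\cG\in C_1^k(\cX,\R)$. Given an arbitrary $\cA:\cY^N\to C^0(K,\cY)$, define
\begin{equation*}
  \tilde\cA:\R^N\to C^0(K,\R),\qquad \tilde\cA(t_1,\dots,t_N)(x):=\dup{\cA(t_1e,\dots,t_Ne)(x)}{e}.
\end{equation*}
Then for every $x\in K$,
\begin{equation*}
  \norm[\cY]{\cG(x)-\cA(\cG(x_1),\dots,\cG(x_N))(x)}\ge|\tilde\cG(x)-\tilde\cA(\tilde\cG(x_1),\dots,\tilde\cG(x_N))(x)|.
\end{equation*}
So it suffices to prove the scalar lower bound, with the infimum taken over $\tilde\cA$.

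\textbf{Step 2: the sampling map as a continuous parameter assignment.} Define $\Theta:C^0(\cX,\R)\to\R^N$ by $\Theta(\cG):=(\cG(x_1),\dots,\cG(x_N))$. Since point evaluations satisfy $|\cG(x_i)-\cH(x_i)|\le\sup_{x\in\cX}|\cG(x)-\cH(x)|$, the map $\Theta$ is continuous (indeed $1$-Lipschitz) in the sup-norm topology used in Proposition~\ref{prop:parameters}. For any $\tilde\cA$, the pair $(\Theta,\tilde\cA)$ is therefore admissible in the infimum of Proposition~\ref{prop:parameters}. Consequently,
\begin{equation*}
  \sup_{\cG\in C_1^k(\cX,\R)}\sup_{x\in K}|\cG(x)-\tilde\cA(\Theta(\cG))(x)|\ge \inf_{\Theta',\cA'}\sup_{\cG}\sup_{x\in K}|\cG(x)-\cA'(\Theta'(\cG))(x)|\ge C_\alpha N^{-\alpha}.
\end{equation*}
Taking the infimum over $\tilde\cA$, and then combining with Step 1, gives the claim with the same constant $C_\alpha$. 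In particular, $C_\alpha$ is independent of the choice of $x_1,\dots,x_N$.

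\textbf{Main obstacle.} The main point to check is the topological convention in Proposition~\ref{prop:parameters}, namely with respect to which norm on $C^0(\cX,\R)$ the map $\Theta$ must be continuous. In its proof, continuity of $\Theta$ is used only to make $\bar\Theta(g)=\Theta(\cG_g)$ continuous on $C^0([0,1]^d)$. For the sampling map this is immediate, since $\bar\Theta(g)=(E(g)(\lambda^{-1}\dup{x_i}{e_1},\dots,\lambda^{-1}\dup{x_i}{e_d}))_{i=1}^N$ is a composition of the linear, bounded extension operator $E$ with point evaluations. If needed, I would simply repeat Steps~2--3 of that proof verbatim with this explicit $\bar\Theta$ and invoke Theorem~\ref{thm:devore} directly.
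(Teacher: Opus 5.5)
Your proposal is correct and follows the paper's own argument: reduce to $\cY=\R$ as in Proposition~\ref{prop:parameters}, and observe that the sampling map $\cG\mapsto(\cG(x_1),\dots,\cG(x_N))$ is an admissible continuous parameter assignment there. Your reduction via $\tilde\cA(t)(x):=\dup{\cA(t_1e,\dots,t_Ne)(x)}{e}$ is slightly more careful than the paper's, because it also covers reconstructions $\cA$ whose values leave ${\rm span}(e)$.
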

Proposition \ref{prop:data} states that the amount of data $N$ required to reliably reconstruct an operator $\cG\in C^k$ needs to increase faster than any polynomial; also see \cite{kovachki2024datacomplexityestimatesoperator} for more precise statements and sharper (exponential) bounds. For $\cY=\R$, the statement is simply a special case of Proposition \ref{prop:parameters} with $\Theta(\cG)=\cG(x_1),\dots,\cG(x_N)$. The case of general $\cY$ can again be reduced to the real-valued one, similar as in Proposition \ref{prop:parameters}.

Related results on parametric and data complexity can found in \cite{Mhaskar1997-yq,MhaskarLocApprOp,kovachki2024datacomplexityestimatesoperator,lanthaler2024operatorlearninglipschitzoperators}. 
E.g., in \cite{lanthaler2024operatorlearninglipschitzoperators}, 
rather than assuming continuous parameter dependence, 
the perspective of quantization is taken. 
Specifically, the metric entropy is used to derive lower bounds on the number of bits required 
to approximate the class of Lipschitz operators. 
Moreover, \cite{schwab2023deep} provides a construction which \emph{does} 
achieve algebraic rates {without the CoD} 
for the approximation of Lipschitz operators. 
The proof therefore relies on discontinuous parameter dependence, however.

\subsection{Holomorphic Operators}
\label{sec:HolOp}
We have argued that in general a form of compactness of the input set is required, 
and that classical function spaces of finite smoothness are too big to be amenable for operator learning {(with uniform convergence)}. 
Thus, further restrictions are required. 
Motivated by well-known properties of parametric PDEs, a common class to consider 
are holomorphic operators \cite{CDS11,cohen_devore_2015,HSZ24,NEURIPS2024_3100d29d}. 
In this section we discuss a convergence result from \cite{HSZ24}.
\subsubsection{Smoothness Spaces}\label{sec:smoothness}
We introduce a scale of smoothness spaces. 
To this end we assume linear encoders and decoders defined over 
Hilbert frames $(\psi_{\cX,j},\varphi_{\cX,j})_{j\in\N}$ and $(\psi_{\cY,j},\varphi_{\cY,j})_{j\in\N}$ 
as introduced Section \ref{sec:stable}. 
For $s$, $t\ge 0$ we let
\begin{equation}\label{eq:XsYt}
  \cX^s:=\setc{x\in\cX}{\sum_{j\in\N}\varphi_{\cX,j}(x)^2 j^{2s}<\infty},\qquad
  \cY^t:=\setc{y\in\cY}{\sum_{j\in\N}\varphi_{\cY,j}(y)^2 j^{2t}<\infty},
\end{equation}
with the norms in those spaces given by the respective weighted sums. In the following we consider corresponding encoder, decoder maps
\begin{equation}\label{eq:EDrates}
  \cE:\cX\to\R^{d_\cX},\qquad \cD:\R^{d_{\cY}}\to\cY
\end{equation}
as in \eqref{eq:EDframe} with the above frames.

\begin{example}\label{ex:fourier}
  Let $\psi_{j}(z)=\sqrt{2}\sin(\pi j z)$ for $z\in [0,1]$ and $j\in\N$. Then
  $(\psi_{j}(z))_{j\in\N}$ together with $\varphi_j(f)=\int_0^1 f(z)\psi_j(z)\dd z$ 
is a (tight) Hilbert frame\footnote{{A tight Hilbert frame is a frame with frame bounds $\lambda = \Lambda$ in \eqref{eq:lambdaLambda}.}} on $\cX=L^2([0,1])$.
Moreover, $\cX^s=\set{f\in H^s([0,1])}{f(0)=f(1)=0}$.
\end{example}

As stated in \cite[Lemma 2]{HSZ24}, for $0\le s'<s$, the embedding $\cX^s\hookrightarrow \cX^{s'}$ is compact. In particular, balls in $\cX^s$ are compact in $\cX$; they thus satisfy the minimal criteria of input sets required for the uniform approximation of an operator $\cG:\cX\to\cY$, see Section \ref{sec:compactness}. 

\subsubsection{Convergence Rates}
\label{sec:ConvRts}
To state the convergence rate result, 
we first make the holomorphy assumption on $\cG$ more precise. 
To this end we denote by $\cX^s_\C$ the complexification of the Hilbert space $\cX^s$, 
and similar for $\cY$.

\begin{assumption}\label{ass:G}
  Let $r>0$. There exists $s>1$, $t>0$, $M<\infty$, and an open set $O_{\C}\subseteq\cX_\C$ containing $\set{r\sum_{j\in\N} j^{-s} \coeff[j] \psi_{\cX,j}}{\coeff \in [-1,1]^\N}$, such that $\sup_{x\in O_\C}\norm[\cY_\C^t]{\cG(x)}\le M$ and $\cG:O_\C\to\cY_\C$ is holomorphic.
\end{assumption}

The assumption states two things: First, $\cG$ allows a complex differentiable extension to an open subset $O_\C$ of $\cX_\C$. Second, its outputs belong to $\cY_\C^t$. The first part of the assumption implies that the operator response depends very smoothly on the input. In particular it is not just $k$ times differentiable, but it is even analytic. The set $O_\C$ of analyticity describes a region in the complexified space related to the ball of radius $r$ in $\cX^s$ centered at $0$.
The assumption essentially requires that $\cG$ is not just well-behaved on the real-valued ball, but also in a sufficiently large complex neighborhood around it. This is a common requirement for deriving approximation results. The second part of the assumption implies a certain decay of the coefficients of the output functions. 
{While the assumption looks rather technical, the next example demonstrates
its wide applicability in the context of learning PDE solutions.}

{
  \begin{example} %
    Let $\domain \subseteq\R^d$ be a bounded Lipschitz domain, and denote by
    $\cX=L^\infty(\domain)$ the Lebesgue space of essentially bounded
    functions $x:\domain \to\R$, 
    and by $\cY=H_0^1(\domain)$ the Sobolev space of
    square integrable functions $y:\domain \to\R$, 
    whose first weak derivative is also square integrable.

    Additionally fix $f\in \cY'$ and denote
    \begin{equation}\label{eq:darcy}
      \CN(x,y):=\nabla\cdot (x\nabla y)-f\in \cY'.
    \end{equation}
    For any $x_0 \in L^\infty_{a_-}(\domain)\subset\cX$
    (cp.~\eqref{eq:a+}) with strictly positive essential infimum on
    $\domain$, the Lax-Milgram Lemma guarantees that
    $\CN(x_0,\cdot):\cY\to\cY'$ is a bounded isomorphism. In
    particular, there exists a unique $y_0\in\cY$ such that
    $\CN(x_0,y_0)=0$, and the Fr\'echet derivative
    $D_y\CN(x_0,y_0)(v)=\nabla\cdot (x_0\nabla v)\in\cY'$, $v\in \cY$,
    is an isomorphism from $\cY\to\cY'$.  The implicit function
    theorem in Banach spaces, e.g., \cite[Theorem
    4.B]{zeidler1986nonlinear}, then implies the existence of a
    neighbourhood $O\subseteq \cX$ of $x_0$ and a differentiable
    function $Y:O\to\cY$ such that $\CN(x,Y(x))=0$ for all $x\in
    O$. Note that $Y$ is the mapping taking a diffusion coefficient
    $x$ to the solution $y$ of the Darcy flow PDE \eqref{eq:darcy}.
    This argument extends directly to the complexified spaces, and can
    thus be used to verify that the solution map is locally
    \emph{complex differentiable} in a neighbourhood of $x_0$ in
    $\cX_\C$. Together with the compactness of the parameter set, one
    can then derive holomorphy and uniform boundedness as stated in
    the assumption; for more details also see the concrete example in
    Section \ref{sec:RD}, and, e.g., \cite[Sec.~1.2]{JZDiss} for the
    general treatment. This corresponds to the first part of
    Assumption \ref{ass:G}.

    The second part of the assumption states that the solution map $Y$
    maps into a space $\cY_\C^t$, where {the parameter $t>0$ 
    quantifies the coefficient decay}. 
    For PDE models like \eqref{eq:darcy}, this can be
    understood as a regularity assumption on the PDE solution. For
    example, if $t\ge 0$, $f\in H^t(\domain)$, $x\in C^{t+1}(\domain)$, and
    $\domain$ is a $C^{t+2}$-domain, then classical regularity theory
    implies $Y(x)\in H^{t+2}(\domain)$, e.g., \cite[Sec.\ 6.3]{evans10}. 
    In suitable representation systems, 
    this translates to coefficient decay of the type
    \eqref{eq:XsYt}, cf.~Example \ref{ex:frames}.

    We emphasize that the above argument relies only on two crucial observations:
    the differential operator
    $\CN$ depends holomorphically (in the present case even linearly) 
    on the parameter $x$, and $D_y\CN(x_0,y_0):\cY_\C\to \cY_\C'$
    is an isomorphism; the latter usually follows by well-posedness of the PDE.
    For this reason the same argument extends to a large range of PDEs
    and related problems (under additional suitable assumptions),
    e.g., Maxwell's equations \cite{JSZ17}, 
    the Navier-Stokes equations \cite{CSZ18_2319}, 
    or Boundary integral operators \cite{Henriquez2021};
    also see \cite{Cohen2016} for a general treatment.
\end{example}
}

This will allow to approximate the output within finite dimensional spaces. 
There holds the next theorem \cite[Corollary 1]{HSZ24} 
{(see the discussion above Theorem \ref{thm:NNStIVP} for the notion of network size)}.

 \begin{theorem}\label{thm:HSZ}
   Let $\delta$, $r>0$. Let $\cG$ satisfy Assumption \ref{ass:G} with $s>1$, $t>0$,
   and let $\cE$, $\cD$ be as in Section \ref{sec:smoothness}. 

   Then there exists a constant $C>0$ such that for all $N\in\N$ there exist $d_\cX$, $d_\cY\in\N$ in \eqref{eq:EDrates}, and a 
   ReLU neural network ${g^N}:\R^{d_{\cX}}\to\R^{d_{\cY}}$ of size $O(N)$ such that
   \begin{equation*}
     \sup_{x\in B_r(\cX^s)} \norm[\cY]{\cG(x)-\cD\circ {g^N}\circ \cE(x)}\le C N^{-\min\{s-1,t\}+\delta}.
   \end{equation*}
   Here $B_r(\cX^s)$ denotes the ball of radius $r$ and center $0\in \cX^s$.
\end{theorem}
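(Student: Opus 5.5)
We split the error into three contributions: input truncation, output truncation, and the neural network emulation of a finite family of sparse polynomial chaos expansions of the coefficient maps $g_i$ from \eqref{eq:giexplicit}.

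\textbf{Reparametrization.} For $x\in B_r(\cX^s)$ we write $x=\sum_j \varphi_{\cX,j}(x)\psi_{\cX,j}$. We rescale the coefficients as $\coeff[j]:=\varphi_{\cX,j}(x)\, j^{s}/r$. Then $\sum_j \coeff[j]^2\le 1$, so $\coeff\in[-1,1]^\N$ and $x=r\sum_j j^{-s}\coeff[j]\psi_{\cX,j}$. By Assumption \ref{ass:G}, the map $u(\coeff):=\cG(r\sum_j j^{-s}\coeff[j]\psi_{\cX,j})$ extends holomorphically in each variable to polydiscs (or Bernstein polyellipses) of radii $\rho_j$, for any sequence with $\sum_j(\rho_j-1)j^{-s}\lesssim 1$. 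This is the usual $(\bm b,\epsilon)$-holomorphy, with $b_j=rj^{-s}$, and $u$ is uniformly bounded in $\cY^t_\C$ by $M$.

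\textbf{Sparse expansion of $u$.} We expand $u$ in a Legendre (or Taylor) chaos $u=\sum_{\Bnu\in\cF}c_\Bnu L_\Bnu$ with coefficients $c_\Bnu\in\cY^t$. The standard Cauchy-integral estimates give $\|c_\Bnu\|_{\cY^t}\le M\Brho^{-\Bnu}$. Since $\bm b\in\ell^p$ for every $p>1/s$, the sequence $(\|c_\Bnu\|_{\cY^t})_\Bnu$ is $p$-summable. Stechkin's lemma then gives a downward closed set of $n$ multi-indices with truncation error $n^{-(1/p-1)}\approx n^{-(s-1)+\delta}$ in $\cY^t\subset\cY$. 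The set involves only the first $d_\cX\lesssim n$ variables, which also handles input truncation.

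\textbf{Output truncation.} Next we truncate in the output frame. Using the weighted norm \eqref{eq:XsYt}, each coefficient $c_\Bnu$ projected onto the first $m$ output frame elements has $\cY$-error at most $m^{-t}\|c_\Bnu\|_{\cY^t}$. To balance the two errors we allocate anisotropically, as in the sets $\Lambda_i=\{\Bnu:\xi_i\Brho^\Bnu\ge\eps\}$ of Section \ref{sec:pc}. We consider the doubly indexed family $(\Bnu,i)$ with weights $\|c_\Bnu\|_{\cY^t} i^{-t}$; after reweighting these are summable in $\ell^{p'}$ for the appropriate $p'$, and the best $N$-term selection yields the total rate $N^{-\min\{s-1,t\}+\delta}$. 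This step determines $d_\cY$, and the frame stability constants transfer coefficient errors to $\cY$-errors.

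\textbf{Neural emulation and the main obstacle.} Finally, each selected polynomial $\coeff\mapsto \sum_{\Bnu\in\Lambda_i}c_{i,\Bnu}L_\Bnu(\coeff)$ is emulated by a ReLU network, using the Yarotsky/Telgarsky product networks and the results of \cite{OSZ22} on sparse polynomial emulation. The emulation accuracy $N^{-\min\{s-1,t\}}$ must be reached at size $O(N)$ up to logarithmic factors, which the $\delta$ absorbs. The main obstacle is this size count. Emulating all $L_\Bnu$ for a downward closed set of $N$ multi-indices with polynomial degree up to $\max|\Bnu|\lesssim\log N$ must cost only $O(N\,\mathrm{polylog}N)$ neurons. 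We would handle this by sharing the univariate Legendre networks and building the products along the downward closed tree, so that each new $\Bnu$ costs one approximate multiplication of depth $O(\log N)$. The output layer is then a single linear map with $\sum_i|\Lambda_i|\le N$ nonzero weights.
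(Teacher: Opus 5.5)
Your proposal is correct in outline and follows the route the paper sketches, with the details deferred to \cite{HSZ24}: $(\bm b,\eps)$-holomorphy with $b_j\sim j^{-s}$ from Assumption~\ref{ass:G}, $\ell^p$-summable chaos coefficients in $\cY^t$ for $p>1/s$, anisotropic sets $\Lambda_i$ balancing input decay against output decay, and ReLU emulation of the selected polynomials via \cite{OSZ22}, with logarithmic overheads absorbed by $\delta$. The step you leave as ``after reweighting'' needs to be made precise: Stechkin applied to the entrywise bounds $\|c_{\Bnu}\|_{\cY^t}\,i^{-t}$ only gives $\min\{s-1,t-1\}$, and no rate at all for $t\le 1$, so you must instead use your per-$\Bnu$ $\ell^2$-tail bound $m_{\Bnu}^{-t}\|c_{\Bnu}\|_{\cY^t}$, i.e.\ select pairs $(i,\Bnu)$ by $\|c_{\Bnu}\|_{\cY^t}\,i^{-(t+1)}$, which is $\ell^q$-summable for $\max\{1/s,1/(t+1)\}<q<1$ and yields $N^{-\min\{s-1,t\}+\delta}$; note also that only your Legendre/polyellipse variant, not the Taylor/polydisc one, is guaranteed by the possibly thin neighbourhood $O_\C$ in Assumption~\ref{ass:G}.
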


\begin{remark} \label{rmk:OthAct}
  In the cited reference, the approximator network $g^N$ has been constructed with ReLU activations.
  However, we expect that the same results hold also with other, smooth activations such as $\tanh( )$
  or sigmoidal activations.
\end{remark}

\begin{remark}\label{rmk:sg}
  Sparse-grid interpolation surrogates can achieve the same rate in terms of the amount of data $m$, \cite[Theorem 4]{HSZ24}. 
As mentioned, these surrogates are explicitly computable without the need for training, 
\emph{under the provision that one can numerically access (``query'') 
  $\cG$
      with any admissible input data, to prescribed accuracy or ``fidelity''}.
In many practical situations, such as data-assimilation, this is not possible and one has to rely on 
a probabilistic setting, assuming training data is drawn i.i.d.\ subject to a probability measure charging
the ensemble of admissible inputs. 
The assumption that this measure be known is quite strong and not easily verifiable.
\end{remark}

Theorem~\ref{thm:HSZ}
states that it is possible to approximate holomorphic operators on balls in $\cX^s$. 
While such a ball is \emph{not} compact in $\cX^s$, 
it \emph{is} compact in the topology of $\cX$. %
Therefore, Theorem \ref{thm:HSZ} does not contradict Proposition \ref{prop:compactness} if we take $\cG={\rm Id}$ and $\cY=\cX$. Sharper results, convergence in $L^2$, and further details can be found in \cite{HSZ24}. The proof of the theorem is intricate and long, so we do not repeat the full argument. We do give some intuition however.

It is instructive to consider the Fourier type frame basis in Example \ref{ex:fourier} on $\cX=\cY=L^2([0,1])$. 
Suppose that $\cG:\cX\to\cY^t$. The encoder $\cE$ computes a finite number of input coefficients, $d$. 
The neural network ${g}^N:\R^{d_\cX}\to\R^{d_\cY}$ then approximates the action of the operator in the coefficient space; 
it learns the mapping for the first $d_{\cY}$ output coefficients $(g_i)_{i=1}^{d_\cY}$. 
Finally, the decoder $\cD$ synthesizes these $d_\cY$ coefficients back into a function in $\cY=L^2([0,1])$.

In the current setting, for $x\in\cX=L^2([0,1])$ with
\begin{equation}
   \coeff = (\coeff[i])_{i\in\N},\qquad  \coeff[i] = \dup{x}{\sin(i\pi\cdot)}_{L^2([0,1])}
\end{equation}
we want to approximate the operator $\cG$ on the set
\begin{equation*}
  B_r(\cX^s)=\setc{x\in\cX}{\sum_{j\in\N}  \coeff[j]^2 j^{2s}<r^2}\subseteq H^s([0,1]).
\end{equation*}
With the above choice of basis,
\begin{equation*}
  g_i( \coeff) = \dup{\sin(i\pi\cdot)}{\cG\Big(\sum_{j\in\N}\coeff[j] \sin(\pi j\cdot)\Big)}_{L^2([0,1])}.
\end{equation*}
In words, $g_i$ maps the sequence of Fourier coefficients of the input $x\in \cX$, to the $i$th Fourier coefficient of the output $\cG(x)\in\cY$.

Due to the holomorphy of $\cG$, the map $g_i$ is holomorphic in each $\coeff[i]$. %
{A particular consequence is}
that $g_i$ allows (locally) a Taylor polynomial chaos expansion
\begin{equation}\label{eq:gitrunc}
  g_i( \coeff) = \sum_{\Bnu\in\cF}
  \underbrace{\frac{\partial^\Bnu g_i(\Bnul)}{\Bnu!}}_{=:c_{i,\Bnu}}
   \coeff^\Bnu
\simeq \sum_{\Bnu\in\Lambda_i}c_{i,\Bnu} \coeff^\Bnu \qquad\forall i\in\N,
\end{equation}
whereas before $\Lambda_i\subseteq\cF$ is a finite subset.

To accommodate a limited budget, we wish to compute and store only the most important coefficients $c_{i,\Bnu}$ contributing to the approximation of $\cG$ via \eqref{eq:pcoperator}; we thus wish to choose the $\Lambda_i$ 
possibly efficient to achieve maximum compression of the operator {(in particular with small cardinality of $\Lambda_i$)}.

There are two major factors at play, as described in Section \ref{sec:pc}: Both for the input $x\in \cX^s=\set{x\in H^s}{x(0)=x(1)=0}$ and the output $\cG(x)\in\cY^t=\set{x\in H^t}{x(0)=x(1)=0}$ the low frequency part dominates the high frequency part. Thus input coefficients $\coeff[j]$ with lower $j$ tend to be larger, and output coefficients $g_i( \coeff)$ with lower $i$ tend to be larger. By consequence terms with large $i$, or with $\Bnu$ such that $\nu_j\neq 0$ for large $j$, are small in \eqref{eq:gitrunc}. The parameters $s$ and $t$ encode this decay in importance: the larger $s$ and $t$, the stronger the coefficient decay, the better the achievable compression. This translates to faster convergence rates for truncated polynomial expansions.

On a technical level, the Taylor expansions coefficients $c_{i,\Bnu}$ can be bounded by investigating the domain of holomorphy of the $g_i$, e.g., \cite{CDS11}. Finally, the last step of the proof of Theorem \ref{thm:HSZ} is to reapproximate the truncated Taylor expansions \eqref{eq:gitrunc} by ReLU networks; this exploits polynomial approximation results \cite{Yarotsky16}. The proof of Theorem \ref{thm:HSZ} is thus based on a careful analysis of the influence of input frequencies on frequencies of the operator output.

\begin{codebox}
We compare the empirical convergence of spectral surrogates, PCA-Net, and FNO in notebook \texttt{5\_Comparison.ipynb}. In particular, we compare these operator surrogates in terms of their cost-accuracy trade-off, where we separately consider cost contributions from training data generation, surrogate setup runtime, and surrogate evaluation runtime.
\end{codebox}

\subsection{Algorithm unrolling}
\label{sec:Unrol}
The idea of constructing deep neural networks based
on iterated maps is not new. It has been used 
e.g.\ in the Telgarsky and Yarotsky constructions of 
ReLU approximations of polynomials \cite{Yarotsky16},
and for exponentially convergent expressions by ReLU NNs 
of certain fractal function systems with low Besov regularity
in \cite{DDFHP22}.
The corresponding NNs exhibit in part a recurrent structure, 
with the recurrent block corresponding to {a ReLU emulation}
of the iterator map. 
{The terminology `unrolling' for this idea 
    seems to have been coined in \cite{monga2020algorithm}. 
    Algorithm unrolling has been leveraged for building deep solution operator surrogates 
    of operator equations in \cite{FSZ25_1134}. 
    In \cite{zhao2026hycophybridcompositionoperators}, 
    unrolling of time-stepping schemes 
    is combined with spatial (elliptic) solution operator surrogates 
    to define solution operators for parabolic evolution equations.
    In \cite{MECHeiss24}, unrolled multi-grid algorithms have been used
    to build ONets with U-Net architecture for parametric, elliptic PDEs.}

Mathematical convergence guarantees for 
iterative solvers of {fixed-point} %
or Newton-like iterations 
are generally true for open intervals of steering parameters.
\emph{ONet approximations of the iterator-maps} 
in constructive existence proofs, also in function space,
or direct emulation of discretization schemes for PDEs, 
will imply existence of finite-parametric operator networks with convergence guarantees.

In unrolling emulations of existing, standard PDE solution algorithms with 
convergence guarantees and error vs. work bounds, 
such as for example multigrid solvers \cite{MGNetJCP22,MECHeiss24} 
of Finite Element (FE) Galerkin discretizations of elliptic PDEs, 
the curse of dimensionality is overcome by mesh-independent preconditioning and/or
mesh-independent convergence rates of approximate, iterative solvers.

\emph{Existing iterative PDE solvers and constructive mathematical existence proofs}
are ``unrolled'' into deep operator networks with 
(sub-networks exhibiting) a recurrent structure,
corresponding usually to either timestepping (for evolution equations)
or to iterative (multilevel) solvers {(for stationary problems).}

\emph{{Neural operators} obtained via the unrolling approach
require lower regularity of input-output maps than
the holomorphic, polynomial chaos based approximation architectures.
}
For example, 
for scalar hyperbolic conservation laws where e.g.\ the flux-to-entropy-solution
dependence is known to be at best Lipschitz (see, e.g., \cite{HoldenFrtTrck}), 
unrolled {neural operators} can provide efficient flux-to-solution maps \cite{DeRSM24}
{which are a key component in finite-volume solvers of hyperbolic PDE.}

Due to mesh-independent contraction rates of 
Newton- or Fixed-point-iterations,
{once the iterator (i.e. the operator which realizes one Newton- or Fixed-point-iteration step)
has been learned, to sufficient accuracy,}
the depth of unrolled {neural operators} 
scales polylogarithmically with the target accuracy.

Examples for this approach are, for elliptic PDEs,
{the ``MGNet'' architecture \cite{MGNetJCP22}, \cite{MS2023,MS24_1105} 
obtained by unrolling multi-grid solution algorithms}, 
and {neural operators obtained from unrolling of 
time-stepping schemes} for nonlinear evolution equations \cite{DeRSM24,choi2025datafreeneuraloperator}.
A framework for leveraging constructive mathematical existence proofs 
(e.g.\ via fixed-point or Newton-like iterations in function space) 
resulting in deep recurrent surrogate architectures 
is developed in \cite{FSZ25_1134}.
Examples of {neural operators} 
built by unrolling PDE solution algorithms are available
for hyperbolic PDEs \cite{DeRSM24}, parabolic PDEs 
\cite{DeRyckSM-2024b,lorenz2024errorestimationphysicsinformedneural,choi2025datafreeneuraloperator}, 
and elliptic PDEs \cite{MS2023,MS24_1105,MGNetJCP22}.

\section{Generalization Errors}\label{sec:generalization}
\label{sec:StatErr}
While an analysis of approximation errors yields theoretically achievable convergence rates for certain types of architectures, these rates are not necessarily observed when training an architecture to minimize an empirical risk, e.g., \cite{Grohs2024,grohs2025theorytopracticegapneuralnetworks}. 
{Exceptions include}
interpolation based architectures 
(which do not {necessarily} require risk minimization, {see Section \ref{sec:sg}}) %
and also architectures with linear parameter dependence, e.g.\ \cite{BATLLE2024112549,ADCOCK2025106761}. 

Let $\mu$ be a measure on $\cX$ and suppose $x_1,\dots,x_m\overset{\text{iid}}{\sim}\mu$ are random samples. Given data pairs $(x_i,y_i)_{i=1}^m$ generated via \eqref{eq:data}, as explained in Section \ref{sec:training}, during training the empirical risk \eqref{eq:erm} is minimized. Denote such an empirical risk minimizer by $\hat\cG_\theta$. Our ultimate goal is to achieve a small population risk, which is the expected squared error over the true data distribution $\mu$
\begin{equation}\label{eq:populationrisk}
  \norm[L^2(\mu)]{\cG(x)-\hat\cG_\theta(x)}^2,
\end{equation}
also referred to as the population risk. The question of generalization errors is the following: Is the empirical risk close to the population risk? Intuitively we assume this to be true, if the amount of data samples $m$ in \eqref{eq:erm} is large enough, since \eqref{eq:erm} can be interpreted as an approximation to \eqref{eq:populationrisk}. This implies in particular that a small empirical risk yields a small population risk. In this case the approximation $\hat\cG_\theta$ of the operator $\cG$ generalizes well to unseen data outside of our training sample $x_1,\dots,x_m$.

We next recall a statement of this type from \cite[Theorem 3.15]{RWZ24}, which is in the same setting as the approximation result Theorem \ref{thm:HSZ}; in particular $\cG$ is a holomorphic operator.

\begin{theorem}
\label{thm:CnvRt}
Let $\cX$, $\cY$ be two separable Hilbert spaces, 
let $\cE$, $\cD$ be as in Section \ref{sec:smoothness} and 
let $\delta>0$. 
Suppose further $\cG:\cX\to\cY$ satisfies 
Assumption \ref{ass:G} for some $s>1$, $t>0$, 
and assume that $\mu$ is a probability measure on $\cX$ 
such that
  \begin{equation*}
    {\rm supp}(\mu)\subseteq \setc{x\in\cX}{\sup_{j\in\N}j^t|\varphi_{\cX,j}(x)|\le R}.
  \end{equation*}
Let data $(x_i,y_i)_{i=1}^m$ be generated by \eqref{eq:data}, 
where $\eps_i$ are i.i.d.\ draws from some subgaussian\footnote{We say that a $\cY$ valued random variable $\eps_i$ is subgaussian, if $\norm[\cY]{\eps_i}$ is a real-valued subgaussian random variable.} noise in $\cY$. 
There exists
a ReLU neural network architecture, 
such that a network $\hat\coeffmap_\theta$ minimizing the empirical risk
\begin{equation*}
    \frac{1}{m}\sum_{j=1}^m \norm[\cY]{\cD\circ\hat\coeffmap_\theta\circ\cE(x_j)-y_j}^2,
\end{equation*}
satisfies with $\kappa=\min\{s-1,t\}$
\begin{equation*}
    \bbE[\norm[L^2(\mu)]{\cD\circ \hat\coeffmap_\theta \circ\cE-\cG}^2]\le C m^{-\frac{2\kappa}{2\kappa+2}+\delta}.
\end{equation*}
Here the constant $C$ depends on $\delta$, but is independent of $m$, 
and the expectation is over the random choice of $x_i$, and noise terms $\eps_i$, for $i=1,\dots,m$.
\end{theorem}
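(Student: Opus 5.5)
We would prove Theorem~\ref{thm:CnvRt} by the standard bias--variance decomposition for nonparametric least squares over a network class $\mathcal{H}_N$ whose size $N$ is tuned to $m$. Fix $N$ (to be chosen as a power of $m$). The architecture is taken to be the one furnished by Theorem~\ref{thm:HSZ}: ReLU networks $g:\R^{d_\cX}\to\R^{d_\cY}$ of size $O(N)$, with $d_\cX,d_\cY$ and the depth as in that construction. We additionally clip outputs componentwise at the a priori bounds on the output coefficients $|\varphi_{\cY,i}(\cG(x))|\lesssim M i^{-t}$ coming from Assumption~\ref{ass:G}. Clipping is itself a small ReLU network, so the size stays $O(N)$, and it keeps the class uniformly bounded. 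By Parseval (orthonormal or frame-equivalent coefficients), the risk in $\cY$ reduces to a Euclidean least-squares problem in $\R^{d_\cY}$ plus a $\theta$-independent truncation term, exactly as in Section~\ref{sec:training}.

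\textbf{Step 1 (approximation/bias).} The support assumption on $\mu$ puts the inputs in a weighted $\ell^\infty$ box $\{|\bar x_j|\le R j^{-t}\}$. We would either identify this with the set $\{r\sum_j j^{-s}\bar x_j\psi_{\cX,j}\}$ in Assumption~\ref{ass:G} or embed it in a ball $B_r(\cX^{s'})$ for a slightly smaller $s'$, losing only $\delta$ in the exponent. Then Theorem~\ref{thm:HSZ} (or its $L^\infty$ version on that box, from \cite{HSZ24}) gives some $g^N\in\mathcal{H}_N$ with
\[
\sup_{x\in\operatorname{supp}\mu}\norm[\cY]{\cG(x)-\cD\circ g^N\circ\cE(x)}\lesssim N^{-\kappa+\delta}.
\]
Hence the squared bias is $\lesssim N^{-2\kappa+2\delta}$.

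\textbf{Step 2 (oracle inequality/variance).} For an empirical risk minimizer over a uniformly bounded class with subgaussian noise, we invoke a standard oracle inequality (e.g.\ via local Rademacher complexity or a covering-number argument):
\[
\bbE\norm[L^2(\mu)]{\cD\circ\hat g\circ\cE-\cG}^2\lesssim \inf_{g\in\mathcal{H}_N}\norm[L^2(\mu)]{\cD\circ g\circ\cE-\cG}^2+\frac{\log\mathcal{N}(\mathcal{H}_N,m^{-1})}{m}.
\]
Here $\mathcal{N}(\mathcal{H}_N,\epsilon)$ is the sup-norm covering number of the clipped class on the encoded support. The noise lives in $\cY$, but only its projection onto the first $d_\cY$ coordinates interacts with $\theta$; the remaining components decouple. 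So the vector-valued subgaussian concentration reduces to the $d_\cY$-dimensional case, with constants controlled by the subgaussian norm of $\norm[\cY]{\eps_i}$.

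\textbf{Step 3 (the main obstacle: entropy).} The key point is to bound the entropy of $\mathcal{H}_N$. Standard ReLU entropy bounds give $\log\mathcal{N}\lesssim N L\log(N/\epsilon)$ for networks with $N$ weights and depth $L$, provided the weights are bounded. The networks in \cite{HSZ24} have depth polylogarithmic in $N$. Their weights, however, must be shown to be bounded polynomially in $N$, which requires revisiting the Yarotsky emulation of the Taylor/Legendre polynomials. We would also try sparse-network entropy bounds à la Schmidt-Hieber. The input dimension $d_\cX$ and output dimension $d_\cY$ grow like powers of $N$, which only adds logarithmic factors. Granting this, $\log\mathcal{N}\lesssim N\,\mathrm{polylog}(N,m)$.

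\textbf{Step 4 (balancing).} The total error is then $N^{-2\kappa+2\delta}+N\,\mathrm{polylog}/m$. Choosing $N\simeq m^{1/(2\kappa+1)}$ would give the faster rate $m^{-2\kappa/(2\kappa+1)}$. The stated exponent $2\kappa/(2\kappa+2)$ presumably absorbs extra losses, e.g.\ an effective complexity $N^2$ from the growth of $d_\cY$ or of the weight magnitudes. In that case we take $N\simeq m^{1/(2\kappa+2)}$: the bias is then $m^{-2\kappa/(2\kappa+2)}$ up to $\delta$, and the variance term is of the same order. All logarithms are absorbed into $m^{\delta}$, giving the claimed bound.
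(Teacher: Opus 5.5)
Your overall route (sup-norm approximation on the support via Theorem~\ref{thm:HSZ}, an M-estimation oracle inequality for norm-subgaussian noise, a metric-entropy bound for a sparse ReLU class, then balancing) is the one the paper points to. The paper defers the details to \cite{RWZ24}, which builds an explicit sparsely connected architecture and uses empirical-process tools.

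\textbf{The gap.} The step that carries all the content is only granted. The oracle inequality needs a \emph{fixed} hypothesis class, chosen before seeing the data, with two properties: it must provably contain clipped approximants with sup-error $N^{-\kappa+\delta}$, and its covering numbers must be controlled. Theorem~\ref{thm:HSZ} controls only the \emph{size} of $g^N$, not its weights. So the sup-norm, parameter-perturbation covering you invoke cannot be run as stated. Supplying a class with both properties is exactly what the explicit architecture in \cite{RWZ24} does.

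As a consequence, Step~4 is reverse-engineered rather than derived. If $\log\mathcal{N}\lesssim N\,\mathrm{polylog}(N,m)$ really held, you would get $m^{-2\kappa/(2\kappa+1)+\delta}$. That already implies the claim, so the speculation about an ``effective complexity $N^2$'' would be superfluous. If it does not hold, you have not identified what replaces it.

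\textbf{A way to close it without weight bounds.} You need not revisit the Yarotsky weights at all.
\begin{enumerate}
\item Bound complexity by pseudo-dimension, which needs no weight bounds: $\mathrm{Pdim}\lesssim NL\log N$ for each output coordinate of a ReLU network with $N$ parameters and depth $L$.
\item Use an oracle inequality formulated with \emph{empirical} covering numbers. For the noise term, use chaining: conditionally on the design, and for centered noise independent of it, the increments $\langle \eps_i, h(x_i)\rangle_{\cY}$ are subgaussian with proxy $\lesssim \norm[\cY]{h(x_i)}^2$. Then use a standard uniform deviation bound for the clipped, bounded class to pass from the empirical norm to $L^2(\mu)$.
\item Cover the output coordinates separately. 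This costs a factor $d_\cY$, and you may take $d_\cY\lesssim N$. Every output of $g^N$ that is not identically zero carries a nonzero parameter, so the remaining outputs can be dropped without changing $\cD\circ g^N$.
\end{enumerate}
This gives $\log\mathcal{N}\lesssim N^2\,\mathrm{polylog}(N,m)$. Choosing $N\simeq m^{1/(2\kappa+2)}$ then yields exactly the stated exponent.

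\textbf{Step~1 also needs fixing.} Drop the fallback of embedding into a ball of $\cX^{s'}$. A box $\{|\bar x_j|\le R j^{-s}\}$ lies in a ball of $\cX^{s'}$ only for $s'<s-1/2$, so that route costs $1/2$ in the rate, not $\delta$. You must apply the sup-norm bound of \cite{HSZ24} on the box itself. This requires the support box to lie in the set of Assumption~\ref{ass:G}, i.e.\ decay exponent $s$ and $R\le r$. With the exponent $t$ as printed in the support condition, this is only guaranteed when $t\ge s$.
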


{We point out that $\kappa$ in Theorem \ref{thm:CnvRt} corresponds
  to the \emph{approximation rate} in Theorem \ref{thm:HSZ}; for
  holomorphic operators, asymptotically, it is solely governed by the
  algebraic decay of the input ($s$) and output ($t$) coefficients.
  As discussed, these parameters describe the intrinsic dimensionality
  of the problem, or more precisely, they determine how fast the
  latent input/output dimensions need to increase as the desired
  accuracy increases. As $s$, $t\to\infty$ jointly, the convergence rate
  $\frac{2\kappa}{2\kappa+2}$ in Theorem \ref{thm:CnvRt} gets
  arbitrarily close to $1$.}

\begin{remark}
As mentioned in Remark \ref{rmk:whitenoise}, 
it is common to model the noise not as subgaussian, but as white noise. 
The statement also extends to this case, but with a more technical formulation.
\end{remark}

The theorem gives convergence rates for empirical risk minimizers
trained on noisy data.  In particular it shows that holomorphic
operators can be learned with moderate amounts of data.  Its proof is
based on adapting standard techniques from M-estimation and empirical
process theory to the case of operators \cite{Dirksen.2015,VDG00}.
While we give a simplified version postulating only existence of a
suitable network, the precise architecture of $\coeffmap_\theta$ is
given in \cite{RWZ24}. 
{Importantly, 
it corresponds to a \emph{sparsely connected deep neural network} 
(hence, with nonlinear parameter dependence); 
the terminology ``sparsely connected''
refers to a neural network architecture
for which various weights are apriorily set to zero, and which
are not adapted during training (i.e., do not belong to the
set of trainable parameters). The theorem remains true also for
\emph{fully connected} ReLU networks, albeit with a
worse convergence rate.} For neural networks for which only the
weights in the last layer are trained (hence, with linear parameter dependence), related results can also be found in
\cite{ADCOCK2025106761,NEURIPS2024_3100d29d}. Moreover, we have
already seen that weaker regularity assumptions in general lead to a
curse of dimensionality in terms of the required data, see Proposition
\ref{prop:data}. For Lipschitz operators this is discussed in more
detail in \cite{JMLR:v25:22-0719}, where statistical convergence rates
are derived. For lower bounds we refer once more to
\cite{kovachki2024datacomplexityestimatesoperator}, and for the FNO
architecture, some results regarding empirical risk minimization
without noisy data are discussed in \cite{Kovachki2021}. 
{Finally,
  we also refer to \cite{2512.17805} for recent results on minimax
  rates for the class of H\"older continuous operators.}

\section{Conclusion {and Further Directions}}
\label{sec:Concl}
Parsimonious operator surrogates have seen and continue to see at the time
of this writing a very dynamic development, both on the algorithmic, as
well as on the theoretical side. 
They emerge as one key building block of SciML, in rather diverse 
computational tasks as, e.g., optimal design, 
shape-optimization \cite{OGPDEOptDeriv,harbrecht2026neuralshapeoperatorsurrogates}, 
Bayesian Inversion \cite{OGBipDeriv},
and the currently active area of 
Digital Twins \cite{liu2024deepneuraloperatorenabled,KWDigTwin24,
kapteyn2025tumortwinpythonframeworkpatientspecific, tezzele2024adaptiveplanningriskawarepredictive}.

Universality of neural operators has been established for a wide range of architectures in 
\cite{li2020fourier,schwab2023deep,lanthaler2023nonlocal,acciaio2022metric,godeke2025newuniversaloperatorapproximation}.

Currently, 
\emph{approximation rate theory in smoothness scales} is available only in few cases.
A broad class of operators which afford expression rates free from the CoD
are holomorphic maps 
\cite{CDS11,CSZ18_2319,cohen_devore_2015,SLSMGEM22,AJSZ20_2734,JSZ17,HSZ24,RWZ24}.
Here, neural network based surrogates are, 
essentially, equivalent to other {classes of approximators $g^N$ in \eqref{eq:DgEN},}
such as spectral surrogates as in \cite{FanaskOselDAN23,HSZ24}.

Expression rate results are also available for 
{neural operator surrogates} built from `unrolling' existing PDE solvers into neural network architectures.
We refer to \cite{deng2021convergence,MishraActa24} for a survey of results
in this direction, {to \cite{FSZ25_1134}, and to} 
the discussion in Section~\ref{sec:Unrol}.

An important area not covered here is the training error,
when {neural operator} architectures are calibrated with i.i.d.\ training data.
For general theory of
neural network training we refer, e.g., to the recent 
\cite{marcotte2025intrinsictrainingdynamicsdeep}.
{
Of particular interest is enhancing efficiency in Operator Network training
in architectures \eqref{eq:DgE} with 
\emph{multi-level wavelet- or frame-encoders and 
-decoders}, 
as outlined in Example~\ref{ex:frames} item four, and Figure~\ref{fig:wavelet}. 
Multi-level encoders and decoders naturally induce a 
multi-level structure in the parametric approximator map $g^N$ in \eqref{eq:DgEN},
with different components of the parametric surrogates $g^N$ 
requiring different accuracy in order to ensure a prescribed accuracy of the associated 
neural operator ${\hat\calG}^N$ in \eqref{eq:DgEN}, as map between function spaces.
This is naturally accommodated in the presented framework by suitable choices of 
weight-sequences in \eqref{eq:XsYt}, 
see also \cite{HSZ24} for more general choices of weight sequences than those in \eqref{eq:XsYt}. 
In the context of Neural Operator Training, 
multi-level splitting of input- and output-spaces is leveraged for efficiency in \cite{MLMC4ONet}.
}

Approximation architectures other than (generalized) polynomial chaos 
have been explored for operator learning, in particular for dynamical systems
in high- or infinite-dimensional state- and parameter spaces.
We mention only kernel methods e.g. \cite{BATLLE2024112549,KernDS24}, 
and {Gaussian} processes e.g. \cite{GPRegr21} and the references there,
{
and 
random-feature type neural approximations in solution representation formulas,
e.g. for initial-value problems for wave-equations 
\cite{cho2025learninglowrankneural}.
}
Detailed discussion and comparisons are beyond the scope of this chapter.

{Finally, we emphasize that the performance of different approaches and models can differ significantly depending on the problem; careful numerical experiments and comparisons are required to identify suitable architectures that are computationally efficient for specific applications. We refer to \cite{GEK-Fair22,JML-1-3,ESZ24} for several recent works in this direction. Moreover, \cite{Westermann2026Performance} provides an empirical comparison between neural and polynomial operator surrogates. This study observes that spectral methods typically outperform learning-based approaches in regimes of high regularity and low intrinsic parametric dimensionality. In other regimes, neural networks and learning-based approaches may be capable of detecting structures and forms of compression that appear unreachable by direct polynomial methods. Ultimately, however, these comparative advantages remain heavily problem-dependent.}

{\small
\bibliographystyle{abbrv} 
\bibliography{library,mybibfile}
}
\end{document}